\setlist{nosep}
\newif\ifdraft
\newcommand{\TODO}[1]{
  \ifdraft
    \textcolor{red}{[TODO: #1]}\xspace
  \fi
}
\newcommand{\NOTE}[1]{
  \ifdraft
    \textcolor{blue}{[NOTE: #1]}\xspace
  \fi
}
\renewcommand{\paragraph}[1]{\textbf{#1}\;\xspace}
\renewcommand{\eqref}[1]{Equation~\ref{eq:#1}}
\newcommand{\secref}[1]{Section~\ref{sec:#1}}
\newcommand{\appref}[1]{Appendix~\ref{app:#1}}
\newcommand{\figref}[1]{Figure~\ref{fig:#1}}
\newcommand{\tabref}[1]{Table~\ref{tab:#1}}
\newcommand{\algref}[1]{Algorithm~\ref{alg:#1}}
\newcommand{\thmref}[1]{Theorem~\ref{thm:#1}}
\newcommand{\lemref}[1]{Lemma~\ref{lem:#1}}
\newcommand{\corref}[1]{Corollary~\ref{cor:#1}}
\newcommand{\defref}[1]{Definition~\ref{def:#1}}
\newcommand{\thmrefs}[3][and]{Theorems~\ref{thm:#2} #1~\ref{thm:#3}}
\newtheoremstyle{mytheoremstyle}
  {\parsep} 
  {} 
  {\itshape} 
  {} 
  {\bfseries} 
  {.} 
  {0.5em} 
  {} 
\theoremstyle{mytheoremstyle}
\newtheorem*{rep@theorem}{\rep@title}
\newcommand{\newreptheorem}[2]{\newenvironment{rep#1}[1]{\def\rep@title{#2 \ref{##1}}\begin{rep@theorem}}{\end{rep@theorem}}}
\newtheorem{theorem}{Theorem}
\newtheorem{lemma}{Lemma}
\newtheorem{corollary}{Corollary}
\newtheorem{definition}{Definition}
\newif\ifreptheorem
\newif\ifshowproofs
\newcommand{\replabel}[1]{
  \ifreptheorem
    \tag{\ref{#1}}
  \else
    \label{#1}
  \fi
}
\BODY\end{reptheorem}
\BODY\end{theorem}
\BODY\end{replemma}
\BODY\end{lemma}
\BODY\end{repcorollary}
\BODY\end{corollary}
\BODY\end{proof}\label{app:prf:#1}
\newenvironment{titled-paragraph}[1]{\textbf{#1:}}{}
\newcommand{\tensorflow}{TensorFlow\xspace}
\newcommand{\adagrad}{AdaGrad\xspace}
\newcommand{\wrt}{w.r.t.\xspace}
\newcommand{\ie}{i.e.\xspace}
\newcommand{\eg}{e.g.\xspace}
\newcommand{\egcite}[1]{\citep[\eg][]{#1}}
\newcommand{\probability}{\ensuremath{\mathrm{Pr}}}
\newcommand{\expectation}{\ensuremath{\mathbb{E}}}
\newcommand{\indicator}{\ensuremath{\mathbf{1}}}
\newcommand{\defeq}{\ensuremath{\vcentcolon=}}
\newcommand{\N}{\ensuremath{\mathbb{N}}}
\newcommand{\R}{\ensuremath{\mathbb{R}}}
\newcommand{\suchthat}[1][]{\ensuremath{\underset{#1}{\mathrm{s.t.\;}}}}
\newcommand{\inner}[2]{\ensuremath{\left\langle {#1}, {#2} \right\rangle}\xspace}
\newcommand{\norm}[1]{\ensuremath{\left\lVert {#1} \right\rVert}\xspace}
\newcommand{\abs}[1]{\ensuremath{\left\lvert {#1} \right\rvert}\xspace}
\DeclareMathOperator*{\argmin}{argmin}
\newcommand{\indices}[1]{\ensuremath{\left[#1\right]}}
\newcommand{\codecomment}[1]{\`//~\textit{#1}}
\newcounter{lineno}
\newenvironment{pseudocode}{\setcounter{lineno}{0}\begin{tabbing}\textbf{mm}\=mm\=mm\=mm\=mm\=\kill}{\end{tabbing}}
\newcommand{\codename}{\>}
\newcommand{\codeline}{\>\stepcounter{lineno}\textbf{\arabic{lineno}}\'\>}
\DeclareMathOperator{\fix}{fix}
\newcommand{\elementwiseproduct}{\odot}
\newcommand{\elementwiseexp}{\operatorname{.exp}}
\newcommand{\grad}{\ensuremath{\nabla}\xspace}
\newcommand{\subgrad}{\ensuremath{\check{\grad}}\xspace}
\newcommand{\supgrad}{\ensuremath{\hat{\grad}}\xspace}
\newcommand{\stochasticgrad}{\ensuremath{\Delta}\xspace}
\newcommand{\stochasticsubgrad}{\ensuremath{\check{\stochasticgrad}}\xspace}
\newcommand{\stochasticsupgrad}{\ensuremath{\hat{\stochasticgrad}}\xspace}
\newcommand{\parameters}{\ensuremath{\theta}\xspace}
\newcommand{\Parameters}{\ensuremath{\Theta}\xspace}
\newcommand{\multipliers}{\ensuremath{\lambda}\xspace}
\newcommand{\Multipliers}{\ensuremath{\Lambda}\xspace}
\newcommand{\matrixmultipliers}{\ensuremath{M}\xspace}
\newcommand{\Matrixmultipliers}{\ensuremath{\mathcal{M}}\xspace}
\newcommand{\dataset}{\ensuremath{S}\xspace}
\newcommand{\numconstraints}{\ensuremath{m}\xspace}
\newcommand{\matrixmultipliersize}{\ensuremath{\tilde{m}}\xspace}
\newcommand{\classifier}{\ensuremath{f}\xspace}
\newcommand{\constraint}[1]{\ensuremath{g_{#1}}\xspace}
\newcommand{\objective}{\constraint{0}\xspace}
\newcommand{\proxyconstraint}[1]{\ensuremath{\tilde{g}_{#1}}\xspace}
\newcommand{\lagrangian}{\ensuremath{\mathcal{L}}\xspace}
\newcommand{\bregman}[1]{\ensuremath{D_{#1}}\xspace}
\newcommand{\bound}[1]{\ensuremath{B_{#1}}\xspace}
\newcommand{\Radius}{\ensuremath{R}\xspace}
\newcommand{\margin}{\ensuremath{\gamma}\xspace}
\newcommand{\deltamatrix}{\ensuremath{A}\xspace}
\newcommand{\approximation}{\ensuremath{\rho}\xspace}
\newcommand{\oracle}{\ensuremath{\mathcal{O}_{\approximation}}\xspace}
\title{Two-Player Games for Efficient Non-Convex Constrained Optimization}
\author[1]{Andrew Cotter\thanks{acotter@google.com}}
\author[1]{Heinrich Jiang\thanks{heinrichj@google.com}}
\author[2]{Karthik Sridharan\thanks{sridharan@cs.cornell.edu}}
\affil[1]{Google AI}
\affil[2]{Cornell University}
\begin{document}

\maketitle

\begin{abstract}
In recent years, constrained optimization has become increasingly relevant to
the machine learning community, with applications including Neyman-Pearson
classification, robust optimization, and fair machine learning. A natural
approach to constrained optimization is to optimize the Lagrangian, but this is
not guaranteed to work in the non-convex setting, and, if using a first-order
method, cannot cope with non-differentiable constraints (e.g. constraints on
rates or proportions).

The Lagrangian can be interpreted as a two-player game played between a player
who seeks to optimize over the model parameters, and a player who wishes to
maximize over the Lagrange multipliers.
We propose a non-zero-sum variant of the Lagrangian formulation that can cope
with non-differentiable---even discontinuous---constraints, which we call the
``proxy-Lagrangian''. The first player minimizes external regret in terms of
easy-to-optimize ``proxy constraints'', while the second player enforces the
\emph{original} constraints by minimizing swap regret.

For this new formulation, as for the Lagrangian in the non-convex setting, the
result is a stochastic classifier. For both the proxy-Lagrangian and Lagrangian
formulations, however, we prove that this classifier, instead of having
unbounded size, can be taken to be a distribution over no more than
$\numconstraints+1$ models (where $\numconstraints$ is the number of
constraints). This is a significant improvement in practical terms.

\end{abstract}

\section{Introduction}\label{sec:introduction}

We consider the general problem of inequality constrained optimization, in
which we wish to find a set of parameters $\parameters \in \Parameters$
minimizing an objective function subject to $\numconstraints$ functional
constraints:
\begin{align}
  \label{eq:constrained-problem} \min_{\parameters \in \Parameters} \; &
  \objective\left(\parameters\right) \\
  \notag \suchthat & \forall i \in \indices{\numconstraints} .
  \constraint{i}\left(\parameters\right) \le 0
\end{align}
%
%
To highlight some of the challenges that arise in non-convex constrained
optimization, consider the specific example of constraining a \emph{fairness}
metric. We cast the fairness problem as that of minimizing some empirical loss
subject to one or more fairness constraints. One of the simplest examples of
such is the following:
%
%
%
\begin{align}
  \label{eq:fairness-example} \min_{\parameters \in \Parameters} \; &
  \frac{1}{\abs{\dataset}} \sum_{x,y \in \dataset} \ell\left(
  \classifier\left(x; \parameters\right), y \right) \\
  \notag \suchthat & \frac{1}{\abs{\dataset}} \sum_{x \in
  \dataset_{\mathrm{min}}} \indicator_{\classifier\left(x; \parameters\right) >
  0} \ge \frac{0.8}{\abs{\dataset}} \sum_{x \in \dataset}
  \indicator_{\classifier\left(x; \parameters\right) > 0}
\end{align}
Here, $\classifier\left(\cdot;\parameters\right)$ is a classification function
with parameters $\parameters$, $\dataset$ is the training dataset, and
$\dataset_{\mathrm{min}} \subseteq \dataset$ represents a minority population.
The constraint represents a version of the so-called ``80\%
rule''~\egcite{Biddle:2005,Vuolo:2013}, and forces the resulting classifier to
make at least 80\% of its positive predictions on the minority
population---\citet{Goh:2016} and \citet{Narasimhan:2018} discuss a number of
useful constraints that are formulated similarly, both on fairness and
non-fairness metrics.
Unfortunately, several serious challenges arise when we attempt to optimize
this problem:
\begin{enumerate}
  \item \label{item:introduction:data-dependent} The constraint is
  data-dependent, and could therefore be very expensive to check.
  \item \label{item:introduction:non-convex} The classification function
  \classifier may be a badly-behaving function of \parameters (\eg a deep
  neural network), resulting in non-convex objective and constraint functions.
  \item \label{item:introduction:non-differentiable} Worse, the constraint is a
  linear combination of \emph{indicators}, hence is not even subdifferentiable
  \wrt \parameters.
\end{enumerate}

Perhaps the most ``familiar'' technique for constrained optimization is
to formulate the Lagrangian:
\begin{definition}
  \label{def:lagrangian}
  The Lagrangian $\lagrangian : \Parameters \times \Multipliers \rightarrow \R$
  of \eqref{constrained-problem} is:
  \begin{equation*}
    \lagrangian\left(\parameters, \multipliers\right) \defeq
    \objective\left(\parameters\right) + \sum_{i=1}^{\numconstraints}
    \multipliers_i \constraint{i}\left(\parameters\right)
  \end{equation*}
  where $\Multipliers \subseteq \R_+^{\numconstraints}$.
\end{definition}
and jointly minimize over $\parameters \in \Parameters$ and maximize over
$\multipliers \in \Multipliers \subseteq \R_+^{\numconstraints}$.
By itself, using this formulation doesn't address the challenges we identified
above, but we will see that, compared to the alternatives
(\secref{related-work:alternatives}), it's a good starting point for an
approach that does.

\subsection{Dealing with non-Convexity}\label{sec:introduction:non-convex}

\begin{wrapfigure}{R}{0.5\textwidth}

\vspace{-4em}

\centering

\includegraphics[width=0.45\textwidth]{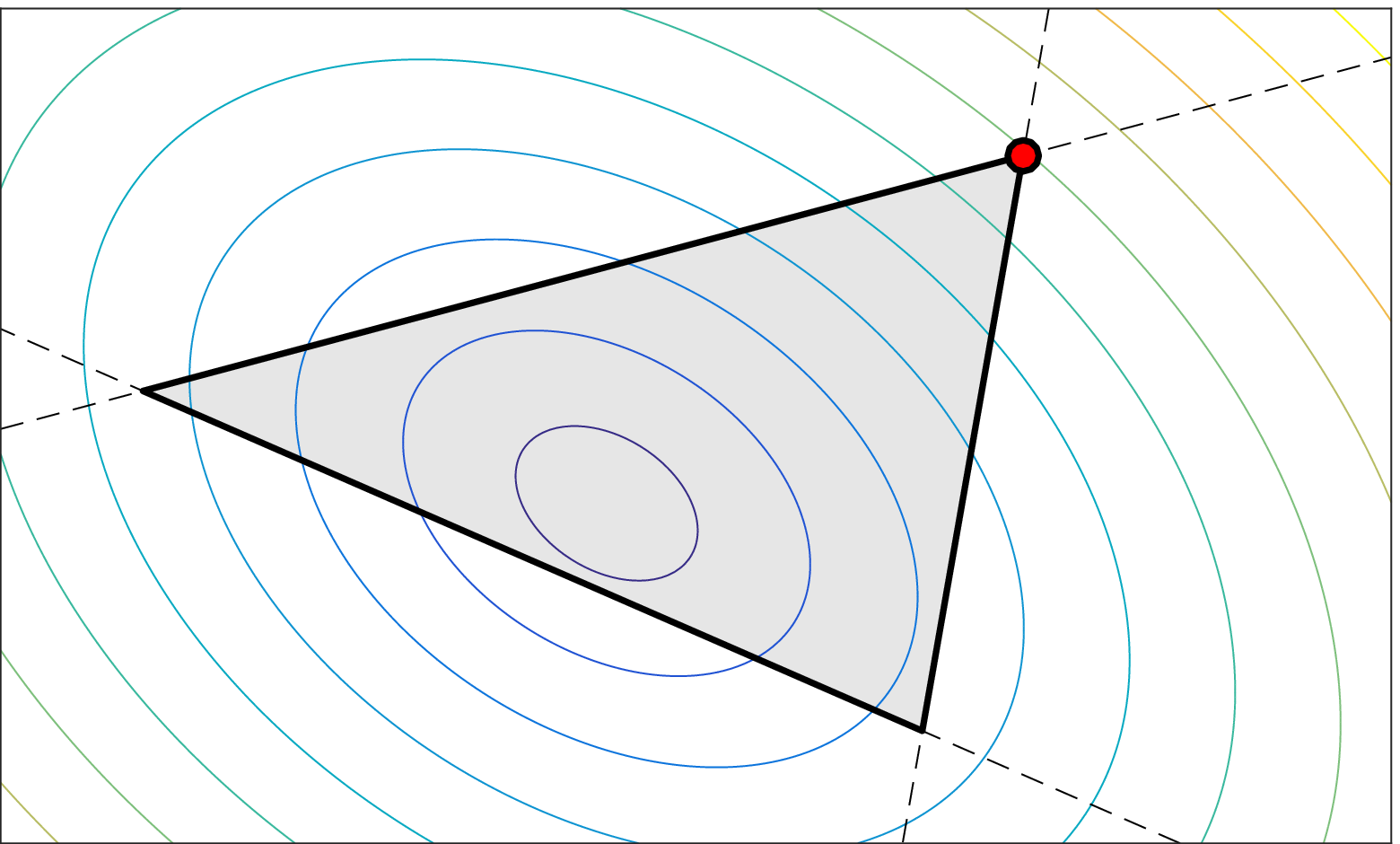}

\caption{
  %
  %
  The plotted rectangular region is the domain $\Parameters$, the contours are
  those of the \emph{strictly concave} minimization objective function
  $\objective$, and the shaded triangle is the feasible region determined by
  the three linear inequality constraints $\constraint{1}, \dots,
  \constraint{3}$. The red dot is the optimal feasible point.
  The Lagrangian $\lagrangian\left(\parameters, \multipliers\right)$ is
  strictly concave in $\parameters$ for any choice of $\multipliers$, so the
  optimal choice(s) for the $\parameters$-player will always lie on the four
  corners of the plotted rectangle. However, these points are infeasible, and
  therefore suboptimal for the $\multipliers$-player.
  %
  %
  %
}

\label{fig:counterexample}

\vspace{-1em}

\end{wrapfigure}

Optimizing the Lagrangian can be interpreted as playing a two player zero-sum
game: the first player chooses $\parameters$ to minimize
$\lagrangian\left(\parameters,\multipliers\right)$, and the second player
chooses $\multipliers$ to maximize it. The essential difficulty is that,
without strong duality---equivalently, unless the minimax theorem holds, giving
that $\min_{\parameters\in\Parameters} \max_{\multipliers\in\Multipliers}
\lagrangian\left(\parameters,\multipliers\right) =
\max_{\multipliers\in\Multipliers} \min_{\parameters\in\Parameters}
\lagrangian\left(\parameters,\multipliers\right)$---then the
$\parameters$-player, who is working on the primal (minimax) problem, and the
$\multipliers$-player, who is working on the dual (maximin) problem, might fail
to converge to a solution satisfying both players simultaneously (\ie a pure
Nash equilibrium).

If \eqref{constrained-problem} is a convex optimization problem and the action
spaces $\Parameters$ and $\Multipliers$ are compact and convex, then the
minimax theorem holds~\citep{Neumann:1928}, and optimizing the Lagrangian will
work. Otherwise it might not, and in fact it's quite easy to construct a
counterexample: \figref{counterexample} shows a case in which a pure Nash
equilibrium of the Lagrangian game \emph{does not exist}.
For this reason, the standard approach for handling non-convex machine learning
problems, \ie pretending that the problem is convex and using a stochastic
first order algorithm anyway,
%
%
should not be expected to reliably converge to a pure Nash equilibrium---even
on a problem as trivial as that in \figref{counterexample}---since there may be
none for it to converge \emph{to}.
%
%

Under general conditions, however, even when there is no \emph{pure} Nash
equilibrium, a \emph{mixed} equilibrium (\ie a pair of distributions over
$\parameters$ and $\multipliers$) does exist.
Such an equilibrium defines a stochastic classifier: upon receiving an example
$x$ to classify, we would sample $\parameters$ from its equilibrium
distribution, and then evaluate the classification function
$\classifier\left(x;\parameters\right)$.
Furthermore, and this is our first main contribution, this equilibrium can be
taken to consist of a discrete distribution over at most $\numconstraints+1$
distinct $\parameters$s ($\numconstraints$ being the number of constraints),
and a single non-random $\multipliers$.
This is a crucial improvement in practical terms, since a machine learning
model consisting of \eg a distribution over thousands (or more) of deep neural
networks---or worse, a continuous distribution---would likely be so unwieldy as
to be unusable.

\subsection{Introducing Proxy Constraints}\label{sec:introduction:non-zero-sum}

Most real-world machine learning implementations use first-order methods (even
on non-convex problems, \eg DNNs). To use such a method, however, one must have
gradients, and gradients are unavailable for non-differentiable constraints
like that in the fairness example of \eqref{fairness-example}, or in the myriad
of other situations in which one wishes to constrain \emph{counts} or
\emph{proportions} instead of smooth losses (\eg recall, coverage or churn as
in \citet{Goh:2016}). In all of these cases, the constraint functions are
piecewise-constant, so their gradients are zero almost everywhere, and a
gradient-based method cannot be expected to succeed.

The obvious solution is to use a surrogate. For example, we might consider
replacing the indicators of \eqref{fairness-example} with sigmoids, and then
optimizing the Lagrangian. This solves the differentiability problem, but
introduces a new one: a (mixed) Nash equilibrium would correspond to a solution
satisfying the sigmoid-relaxed constraint, instead of the \emph{actual}
constraint.
Interestingly, it turns out that we can seek to satisfy the original un-relaxed
constraint, even while using a surrogate. Our proposal is motivated by the
observation that, while differentiating the Lagrangian (\defref{lagrangian})
\wrt $\parameters$ requires differentiating the constraint functions
$\constraint{i}\left(\parameters\right)$, to differentiate it \wrt
$\multipliers$ we only need to \emph{evaluate} them. Hence, a surrogate is only
necessary for the $\parameters$-player; the $\multipliers$-player can continue
to use the original constraint functions.

We refer to a surrogate that is used by only one of the two players as a
``proxy'', and introduce the notion of ``proxy constraints'' by taking
$\proxyconstraint{i}\left(\parameters\right)$ to be a sufficiently-smooth upper
bound on $\constraint{i}\left(\parameters\right)$ for
$i\in\indices{\numconstraints}$, and formulating two functions that we call
``proxy-Lagrangians'':
\begin{definition}
  \label{def:proxy-lagrangians}
  Given proxy constraint functions $\proxyconstraint{i}\left(\parameters\right)
  \ge \constraint{i}\left(\parameters\right)$ for $i \in
  \indices{\numconstraints}$, the proxy-Lagrangians
  $\lagrangian_{\parameters},\lagrangian_{\multipliers} : \Parameters \times
  \Multipliers \rightarrow \R$ of \eqref{constrained-problem} are:
  \begin{align*}
    \lagrangian_{\parameters}\left(\parameters, \multipliers\right) \defeq&
    \multipliers_1 \objective\left(\parameters\right) +
    \sum_{i=1}^{\numconstraints} \multipliers_{i+1}
    \proxyconstraint{i}\left(\parameters\right) \\
    \lagrangian_{\multipliers}\left(\parameters, \multipliers\right) \defeq&
    \sum_{i=1}^{\numconstraints} \multipliers_{i+1}
    \constraint{i}\left(\parameters\right)
  \end{align*}
  where $\Multipliers \defeq \Delta^{\numconstraints+1} \ni \multipliers$ is the
  $\left(\numconstraints+1\right)$-dimensional simplex.
\end{definition}
%
%
As one might expect, the $\parameters$-player wishes to minimize
$\lagrangian_{\parameters}\left(\parameters, \multipliers\right)$, while the
$\multipliers$-player wishes to maximize
$\lagrangian_{\multipliers}\left(\parameters, \multipliers\right)$. Notice that
the $\proxyconstraint{i}$s are \emph{only} used by the $\parameters$-player.
Intuitively, the $\multipliers$-player chooses how much to weigh the proxy
constraint functions, but---and this is the key to our proposal---does so in
such a way as to satisfy the \emph{original} constraints.

Unfortunately, because the two players are optimizing different functions, this
is a non-zero-sum game, and finding a (mixed) Nash equilibrium of such games is
known to be PPAD-complete even in the finite setting~\citep{Chen:2006}. We
prove, however, that a \emph{weaker} type of equilibrium (a $\Phi$-correlated
equilibrium~\citep{Rakhlin:2011}, \ie a joint distribution over $\parameters$
and $\multipliers$ \wrt which neither player can improve)---one that we
\emph{can} find efficiently---suffices to guarantee a nearly-optimal and
nearly-feasible solution to \eqref{constrained-problem} in expectation.

\subsection{Contributions}\label{sec:introduction:contributions}

We first focus on the standard Lagrangian formulation, in the non-convex
setting. In \secref{lagrangian}, we provide an algorithm that, given access to
an approximate Bayesian optimization oracle, finds a stochastic classifier
that, in expectation, is provably approximately feasible and optimal. Many
previous authors have approached constrained optimization using similar
techniques (see \secref{related-work})---our main contribution is to show how
such a classifier can be efficiently ``shrunk'' to one that is \emph{at least
as good}, but is supported on only $\numconstraints+1$ solutions.

Our next major contribution is the introduction of the proxy-Lagrangian
formulation, which allows us to optimize constrained problems with extremely
general (even non-differentiable) constraints. In \secref{proxy-lagrangian}, we
prove that a particular type of $\Phi$-correlated equilibrium results in a
stochastic classifier that is feasible and optimal, and go on to provide a
novel algorithm that converges to such an equilibrium. Interestingly, to get
the ``right'' sort of equilibrium, the $\parameters$-player needs only minimize
the usual external regret, but the $\multipliers$-player must minimize the
\emph{swap regret}. While the resulting distribution is supported on a large
number of $\left(\parameters,\multipliers\right)$ pairs, applying the same
``shrinking'' procedure as before yields a distribution over only
$\numconstraints+1$ $\parameters$s that is at least as good as the original.

Finally, in \secref{overall}, we tie everything together by describing an
end-to-end recipe for provably solving a non-convex constrained optimization
problem with potentially non-differentiable constraints, yielding a stochastic
model that is a supported on at most $\numconstraints + 1$ solutions. In
practice, one would use SGD instead of an oracle, which results in an efficient
procedure that can be easily plugged-in to existing workflows, as is
experimentally verified in \secref{experiments}.

\section{Related Work}\label{sec:related-work}

The interpretation of constrained optimization as a two-player game has a long
history: \citet{Arora:2012} surveys some such work, and there are several
more recent examples~\egcite{Kearns:2017,Narasimhan:2018,Agarwal:2018}.
In particular, \citet{Agarwal:2018} propose an algorithm for fair
classification that is very similar to the Lagrangian-based approach that we
outline in \secref{lagrangian}---the main differences are our introduction of
``shrinking'', and that our setting (\eqref{constrained-problem}) is more
general.
The recent work of \citet{Chen:2017} addresses non-convex robust optimization,
\ie problems of the form:
\begin{equation*}
  \min_{\parameters \in \Parameters} \; \max_{i \in \indices{\numconstraints}}
  \constraint{i}\left(\parameters\right)
\end{equation*}
Like both us and \citet{Agarwal:2018}, they: (i) model such a problem as a
two-player game where one player chooses a mixture of objective functions, and
the other player minimizes the loss of the mixture, and (ii) they find a
\emph{distribution} over solutions rather than a pure equilibrium. These
similarities are unsurprising in light of the fact that robust optimization can
be reformulated as constrained optimization via the introduction of a slack
variable:
\begin{align}
  \label{eq:related-work:robust}
  \min_{\parameters \in \Parameters, \xi \in \Xi} \; & \xi \\
  \notag \suchthat & \forall i \in \indices{\numconstraints} . \xi \ge
  \constraint{i}\left(\parameters\right)
\end{align}
%
%
Correspondingly, one can transform a robust problem to a constrained one at the
cost of an extra bisection search~\egcite{Christiano:2011,Rakhlin:2013}. As
this relationship suggests, our main contributions can be adapted to the robust
optimization setting. In particular: (i) our proposed shrinking procedure can be
applied to \eqref{related-work:robust} to yield a distribution over only $m+1$
solutions, and (ii) one could perform robust optimization over
non-differentiable (even discontinuous) losses using ``proxy objectives'', just
as we use proxy constraints.


\subsection{Alternative Approaches}\label{sec:related-work:alternatives}

Given the difficulties involved in using a Lagrangian-like formulation for
non-convex problems, it's natural to ask whether one should instead favor a
procedure based on entirely different principles. Unfortunately, the potential
alternatives each present their own challenges.

The potential complexity of the constraints all but rules out approaches based
on projections (\eg projected SGD) or optimization of constrained
subproblems~(\eg Frank-Wolfe, as in \citet{Hazan:2012,Jaggi:2013,Garber:2013}).
Similarly, attempting to penalize
violations~\egcite{Arora:2012,Rakhlin:2013,Mahdavi:2012,Cotter:2016}, for
example by adding $\gamma \max_{i \in \indices{m}} \max\left\{0,
\constraint{i}\left(\parameters\right)\right\}$ to the objective, where
$\gamma\in\R_+$ is a hyperparameter, and optimizing the resulting problem using
a first order method, fails if the constraint functions are non-differentiable.
Even if they are, they may still be data-dependent, so evaluating
$\constraint{i}$, or even determining whether it is positive (as is necessary
for such techniques, due to the max with $0$), requires enumerating over the
entire dataset.  Hence, unlike the Lagrangian and proxy-Lagrangian
formulations, such ``penalized'' formulations are incompatible with the use of
a computationally-cheap stochastic optimizer.

In response to the idea of proxy constraints, it's natural to ask ``why not
just relax the constraints for \emph{both} players, instead of just the
$\parameters$-player?''.
This is indeed a popular approach, having been proposed \eg for Neyman-Pearson
classification~\citep{Davenport:2010,Bottou:2011}, more general rate
metrics~\citep{Goh:2016}, and AUC~\citep{Eban:2017}.
The answer is that in many cases, particularly when constraints are data
dependent, they represent real-world restrictions on how the learned model is
permitted to behave. For example, the ``80\% rule'' of \eqref{fairness-example}
can be found in the HOPA Act of 1995~\citep{Wiki:HOPA:2018}, and it requires an
80\% threshold in terms of the \emph{number of positive predictions}---not a
relaxation---which is precisely the target that the proxy-Lagrangian approach
will attempt to hit.
%

This point, in turn, raises the question of \emph{generalization}: satisfying
the correct un-relaxed constraints on training data does not necessarily mean
that they will be satisfied at evaluation time. This issue is outside the scope
of this paper, but is vital.
%
%
%
For certain specific applications, the post-training correction approach of
\citet{Woodworth:2017} can improve generalization performance, and
\citet{Cotter:2018}'s more recent proposal (which is based on our
proxy-Lagrangian formulation) can be applied more generally, but there is still
room for future work.

\section{Starting Point: Lagrangian Optimization}\label{sec:lagrangian}

\begin{algorithm*}[t]

\begin{pseudocode}
\codename $\mbox{OracleLagrangian}\left( \Radius \in \R_+, \lagrangian : \Parameters \times \Multipliers \rightarrow \R, \oracle : \left(\Parameters \rightarrow \R\right) \rightarrow \Parameters, T \in \N, \eta_{\multipliers} \in \R_+ \right)$: \\
\codeline Initialize $\multipliers^{(1)} = 0$ \\
\codeline For $t \in \indices{T}$: \\
\codeline \> Let $\parameters^{(t)} = \oracle\left( \lagrangian\left(\cdot,\multipliers^{(t)}\right) \right)$ \codecomment{Oracle optimization} \\
\codeline \> Let $\stochasticgrad^{(t)}_{\multipliers}$ be a gradient of $\lagrangian\left(\parameters^{(t)},\multipliers^{(t)}\right)$ \wrt $\multipliers$ \\
\codeline \> Update $\multipliers^{(t+1)} = \Pi_{\Multipliers}\left( \multipliers^{(t)} + \eta_{\multipliers} \stochasticgrad^{(t)}_{\multipliers} \right)$ \codecomment{Projected gradient update} \\
\codeline Return $\parameters^{(1)},\dots,\parameters^{(T)}$ and $\multipliers^{(1)},\dots,\multipliers^{(T)}$
\end{pseudocode}

\caption{
  Optimizes the Lagrangian formulation (\defref{lagrangian}) in the non-convex
  setting via the use of an approximate Bayesian optimization oracle $\oracle$
  (\defref{oracle}) for the $\parameters$-player.
  The parameter $\Radius$ is the radius of the Lagrange multiplier space
  $\Multipliers \defeq \left\{ \multipliers \in \R_+^{\numconstraints} :
  \norm{\multipliers}_1 \le \Radius \right\}$, and the function
  $\Pi_{\Multipliers}$ projects its argument onto $\Multipliers$ \wrt the
  Euclidean norm.
}

\label{alg:oracle-lagrangian}

\end{algorithm*}

Our ultimate interest is in constrained optimization, so before we present our
proposed algorithm for optimizing the Lagrangian (\defref{lagrangian}) in the
non-convex setting, we will characterize the relationship between an
approximate Nash equilibrium of the Lagrangian game, and a nearly-optimal
nearly-feasible solution to the original constrained problem
(\eqref{constrained-problem}):
\begin{theorem}
  \label{thm:lagrangian-suboptimality-and-feasibility}
  Define $\Multipliers \defeq \left\{ \multipliers \in \R_+^{\numconstraints} :
  \norm{\multipliers}_1 \le \Radius \right\}$, and let
  $\parameters^{(1)},\dots,\parameters^{(T)} \in \Parameters$ and
  $\multipliers^{(1)},\dots,\multipliers^{(T)} \in \Multipliers$ be sequences of
  parameter vectors and Lagrange multipliers
  %
  %
  that comprise an approximate mixed Nash equilibrium, \ie:
  \begin{equation*}
    \max_{\multipliers^* \in \Multipliers} \frac{1}{T} \sum_{t=1}^T
    \lagrangian\left( \parameters^{(t)}, \multipliers^* \right) -
    \inf_{\parameters^* \in \Parameters} \frac{1}{T} \sum_{t=1}^T
    \lagrangian\left( \parameters^*, \multipliers^{(t)} \right) \le \epsilon
  \end{equation*}
  Define $\bar{\parameters}$ as a random variable for which $\bar{\parameters}
  = \parameters^{(t)}$ with probability $1/T$, and let $\bar{\multipliers}
  \defeq \left(\sum_{t=1}^T \multipliers^{(t)}\right) / T$.
  Then $\bar{\parameters}$ is nearly-optimal in expectation:
  \begin{equation*}
    \expectation_{\bar{\parameters}}\left[
    \objective\left(\bar{\parameters}\right) \right] \le \inf_{\parameters^*
    \in \Parameters : \forall i .  \constraint{i}\left(\parameters^*\right) \le
    0} \objective\left(\parameters^*\right) + \epsilon
  \end{equation*}
  and nearly-feasible:
  \begin{equation}
    \label{eq:thm:lagrangian-suboptimality-and-feasibility:feasibility}
    \max_{i \in \indices{\numconstraints}}
    \expectation_{\bar{\parameters}}\left[
    \constraint{i}\left(\bar{\parameters}\right) \right] \le
    \frac{\epsilon}{\Radius - \norm{\bar{\multipliers}}_1}
  \end{equation}
  Additionally, if there exists a $\parameters' \in \Parameters$ that satisfies
  all of the constraints with margin $\gamma$ (\ie
  $\constraint{i}\left(\parameters'\right) \le -\gamma$ for all
  $i\in\indices{\numconstraints}$), then:
  \begin{equation*}
    \norm{ \bar{\multipliers} }_1 \le \frac{\epsilon +
    \bound{\objective}}{\gamma}
  \end{equation*}
  where $\bound{\objective} \ge \sup_{\parameters \in \Parameters}
  \objective\left(\parameters\right) - \inf_{\parameters \in \Parameters}
  \objective\left(\parameters\right)$ is a bound on the range of the objective
  function $\objective$.
\end{theorem}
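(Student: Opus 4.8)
The plan is to exploit the fact that the Lagrangian is \emph{linear} in \multipliers, which collapses both the maximization and the infimum in the equilibrium hypothesis into tractable closed forms. Writing $c_i \defeq \expectation_{\bar{\parameters}}[\constraint{i}(\bar{\parameters})] = \frac{1}{T}\sum_t \constraint{i}(\parameters^{(t)})$, linearity in the second argument gives $\frac{1}{T}\sum_t \lagrangian(\parameters^{(t)}, \multipliers^*) = \expectation_{\bar{\parameters}}[\objective(\bar{\parameters})] + \sum_i \multipliers^*_i c_i$, so maximizing over $\multipliers^* \in \Multipliers = \{\multipliers \ge 0 : \norm{\multipliers}_1 \le \Radius\}$ yields $\expectation_{\bar{\parameters}}[\objective(\bar{\parameters})] + \Radius \max\{0, \max_i c_i\}$. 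Likewise $\frac{1}{T}\sum_t \lagrangian(\parameters^*, \multipliers^{(t)}) = \lagrangian(\parameters^*, \bar{\multipliers})$, so the infimum term is simply $\inf_{\parameters^*} \lagrangian(\parameters^*, \bar{\multipliers})$. The hypothesis then reads $\expectation_{\bar{\parameters}}[\objective(\bar{\parameters})] + \Radius \max\{0, \max_i c_i\} - \inf_{\parameters^*} \lagrangian(\parameters^*, \bar{\multipliers}) \le \epsilon$, and all three conclusions follow by plugging suitable points into whichever term is convenient.

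For near-optimality, I would upper-bound the infimum by evaluating at an arbitrary feasible $\parameters^*$: since $\bar{\multipliers} \ge 0$ and $\constraint{i}(\parameters^*) \le 0$, every constraint term is nonpositive, so $\inf_{\parameters'} \lagrangian(\parameters', \bar{\multipliers}) \le \lagrangian(\parameters^*, \bar{\multipliers}) \le \objective(\parameters^*)$. Dropping the nonnegative term $\Radius \max\{0, \max_i c_i\}$ from the equilibrium inequality then gives $\expectation_{\bar{\parameters}}[\objective(\bar{\parameters})] \le \objective(\parameters^*) + \epsilon$, and taking the infimum over feasible $\parameters^*$ finishes this part.

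For near-feasibility, the key move is to upper-bound the infimum by the \emph{average} rather than a single point: $\inf_{\parameters'} \lagrangian(\parameters', \bar{\multipliers}) \le \frac{1}{T}\sum_t \lagrangian(\parameters^{(t)}, \bar{\multipliers}) = \expectation_{\bar{\parameters}}[\objective(\bar{\parameters})] + \sum_i \bar{\multipliers}_i c_i$. Substituting this into the equilibrium inequality cancels the objective terms and leaves $\Radius \max\{0, \max_i c_i\} \le \epsilon + \sum_i \bar{\multipliers}_i c_i$. Bounding $\sum_i \bar{\multipliers}_i c_i \le \norm{\bar{\multipliers}}_1 \max\{0, \max_i c_i\}$ (valid since $\bar{\multipliers} \ge 0$) produces the self-referential inequality $(\Radius - \norm{\bar{\multipliers}}_1)\max\{0, \max_i c_i\} \le \epsilon$, which rearranges to the stated bound. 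This is the step I expect to be the main obstacle: it relies on choosing the averaged upper bound for the infimum (a single-point bound would not cancel the objective) and on the observation that the multiplier-weighted sum is itself controlled by the very maximum appearing on the left, so that the two sides can be combined.

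For the multiplier bound, I would instead lower-bound the maximization term by the admissible choice $\multipliers^* = 0 \in \Multipliers$, which gives $\expectation_{\bar{\parameters}}[\objective(\bar{\parameters})]$, and upper-bound the infimum by evaluating at the margin point $\parameters'$: since $\constraint{i}(\parameters') \le -\margin$ and $\bar{\multipliers} \ge 0$, we get $\inf_{\parameters^*}\lagrangian(\parameters^*, \bar{\multipliers}) \le \objective(\parameters') - \margin\norm{\bar{\multipliers}}_1$. The equilibrium inequality then yields $\expectation_{\bar{\parameters}}[\objective(\bar{\parameters})] - \objective(\parameters') + \margin\norm{\bar{\multipliers}}_1 \le \epsilon$, and bounding $\objective(\parameters') - \expectation_{\bar{\parameters}}[\objective(\bar{\parameters})] \le \bound{\objective}$ by the range of \objective gives $\norm{\bar{\multipliers}}_1 \le (\epsilon + \bound{\objective})/\margin$.
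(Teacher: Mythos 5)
Your proposal is correct and follows essentially the same route as the paper's proof (Theorem 3 and Lemma 4 in Appendix A, specialized to $p=1$): linearity in $\multipliers$ to replace the averaged multipliers by $\bar{\multipliers}$, then $\multipliers^*=0$ with a feasible comparator for optimality, the average-over-iterates upper bound on the infimum plus the dual-norm/H\"older pairing for feasibility, and the margin point $\parameters'$ with $\multipliers^*=0$ for the multiplier bound. The step you flagged as the main obstacle is exactly the paper's "choose $\parameters^*=\parameters$" move, so there is no gap.
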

\begin{proof}
  This is a special case of \thmref{lagrangian-suboptimality} and
  \lemref{lagrangian-feasibility} in \appref{suboptimality}.
\end{proof}
This theorem has a few differences from the more typically-encountered
equivalence between Nash equilibria and optimal feasible solutions in the
convex setting. First, it characterizes \emph{mixed} equilibria, in that
uniformly sampling from the sequences $\parameters^{(t)}$ and
$\multipliers^{(t)}$ can be interpreted as defining distributions over
$\Parameters$ and $\Multipliers$. A convexity assumption would enable us to
eliminate this added complexity by appealing to Jensen's inequality to replace
these sequences with their averages. Second, for the technical reason that we
require compact domains in order to prove convergence rates (below),
$\Multipliers$ is taken to consist only of sets of Lagrange multipliers with
bounded $1$-norm~\footnote{In \appref{suboptimality}, this is generalized to
$p$-norms.}.

Finally, as a consequence of this second point, the feasibility guarantee of
\eqref{thm:lagrangian-suboptimality-and-feasibility:feasibility} only holds if
the Lagrange multipliers are, on average, smaller than the maximum $1$-norm
radius $\Radius$. Thankfully, as is shown by the final result of
\thmref{lagrangian-suboptimality-and-feasibility}, if there exists a point
satisfying the constraints with some margin $\gamma$, then there will exist
$R$s that are large enough to guarantee feasibility to within $O(\epsilon)$.

Our proposed algorithm (\algref{oracle-lagrangian}) requires an \emph{oracle}
that performs approximate non-convex minimization, similarly to
\citet{Chen:2017}'s algorithm for robust optimization and
\citet{Agarwal:2018}'s for fair classification (the latter reference uses the
terminology ``best response''):
\begin{definition}
  \label{def:oracle}
  A $\approximation$-approximate Bayesian optimization oracle is a function
  $\oracle : \left(\Parameters \rightarrow \R\right) \rightarrow \Parameters$
  for which:
  \begin{equation*}
    f\left( \oracle\left(f\right) \right) \le \inf_{\parameters^* \in
    \Parameters} f\left(\parameters^*\right) + \approximation
  \end{equation*}
  for any $f : \Parameters \rightarrow \R$ that can be written as a nonnegative
    linear combination of the objective and constraint functions
    $\objective,\constraint{1},\dots,\constraint{\numconstraints}$.
\end{definition}
The $\parameters$-player uses this oracle, and the $\multipliers$-player uses
projected gradient ascent. Notice that, unlike the oracle of \citet{Chen:2017},
which provides a multiplicative approximation, $\oracle$ provides an
\emph{additive} approximation. \algref{oracle-lagrangian}'s convergence rate
is:
\begin{lem}{oracle-lagrangian}
  \ifshowproofs
  \textbf{(\algref{oracle-lagrangian})}
  \fi
  Suppose that $\Multipliers$ and $\Radius$ are as in
  \thmref{lagrangian-suboptimality-and-feasibility}, and define the upper bound
  $\bound{\stochasticgrad} \ge \max_{t \in \indices{T}}
  \norm{\stochasticgrad_{\multipliers}^{(t)}}_2$.

  If we run \algref{oracle-lagrangian} with the step size $\eta_{\multipliers}
  \defeq \Radius / \bound{\stochasticgrad} \sqrt{2T}$, then the result
  satisfies the conditions of \thmref{lagrangian-suboptimality-and-feasibility}
  for:
  \begin{equation*}
    \epsilon = \approximation + \Radius \bound{\stochasticgrad}
    \sqrt{\frac{2}{T}}
  \end{equation*}
  where $\approximation$ is the error associated with the oracle $\oracle$.
\end{lem}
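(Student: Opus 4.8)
The plan is to bound the equilibrium gap appearing in \thmref{lagrangian-suboptimality-and-feasibility} by splitting it into the $\multipliers$-player's external regret and the $\parameters$-player's per-round suboptimality, then controlling each with a standard tool. Concretely, I would insert $\pm\frac{1}{T}\sum_{t=1}^{T}\lagrangian(\parameters^{(t)},\multipliers^{(t)})$ into the left-hand side of the equilibrium condition, writing it as the sum of
\[
  \text{(i)}\quad \max_{\multipliers^*\in\Multipliers}\frac{1}{T}\sum_{t=1}^{T}\lagrangian\!\left(\parameters^{(t)},\multipliers^*\right)-\frac{1}{T}\sum_{t=1}^{T}\lagrangian\!\left(\parameters^{(t)},\multipliers^{(t)}\right)
\]
and
\[
  \text{(ii)}\quad \frac{1}{T}\sum_{t=1}^{T}\lagrangian\!\left(\parameters^{(t)},\multipliers^{(t)}\right)-\inf_{\parameters^*\in\Parameters}\frac{1}{T}\sum_{t=1}^{T}\lagrangian\!\left(\parameters^*,\multipliers^{(t)}\right).
\]
Term (i) is exactly the external regret incurred by the $\multipliers$-player, while term (ii) measures how far the oracle's iterates fall short of the best fixed response.

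For term (ii) I would invoke the oracle guarantee (\defref{oracle}). At each round the function $\lagrangian(\cdot,\multipliers^{(t)})$ is a nonnegative linear combination of $\objective,\constraint{1},\dots,\constraint{\numconstraints}$ (since $\multipliers^{(t)}\in\Multipliers\subseteq\R_+^{\numconstraints}$), so the oracle applies and gives $\lagrangian(\parameters^{(t)},\multipliers^{(t)})\le\inf_{\parameters^*}\lagrangian(\parameters^*,\multipliers^{(t)})+\approximation$ for every $t$. Averaging over $t$ and then applying superadditivity of the infimum, $\frac{1}{T}\sum_{t}\inf_{\parameters^*}\lagrangian(\parameters^*,\multipliers^{(t)})\le\inf_{\parameters^*}\frac{1}{T}\sum_{t}\lagrangian(\parameters^*,\multipliers^{(t)})$, collapses term (ii) to at most $\approximation$.

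For term (i) I would use the standard regret bound for online projected gradient ascent. Because $\lagrangian$ is \emph{linear} in $\multipliers$, the reward $\multipliers\mapsto\lagrangian(\parameters^{(t)},\multipliers)$ is concave with exact gradient $\stochasticgrad^{(t)}_{\multipliers}$, so the $\multipliers$-update in \algref{oracle-lagrangian} is precisely online linear optimization over the compact set $\Multipliers$. The Zinkevich-type bound gives
\[
  \text{(i)}\le\frac{\norm{\multipliers^*-\multipliers^{(1)}}_2^2}{2\eta_{\multipliers}T}+\frac{\eta_{\multipliers}}{2T}\sum_{t=1}^{T}\norm{\stochasticgrad^{(t)}_{\multipliers}}_2^2.
\]
Using $\multipliers^{(1)}=0$, the norm inequality $\norm{\multipliers^*}_2\le\norm{\multipliers^*}_1\le\Radius$, and $\norm{\stochasticgrad^{(t)}_{\multipliers}}_2\le\bound{\stochasticgrad}$ (so that $\sum_t\norm{\stochasticgrad^{(t)}_{\multipliers}}_2^2\le T\bound{\stochasticgrad}^2$), then substituting $\eta_{\multipliers}=\Radius/(\bound{\stochasticgrad}\sqrt{2T})$ and simplifying bounds term (i) by $\Radius\bound{\stochasticgrad}\sqrt{2/T}$. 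Adding the two contributions yields the claimed $\epsilon=\approximation+\Radius\bound{\stochasticgrad}\sqrt{2/T}$.

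I expect the only subtleties to be bookkeeping ones rather than a genuine obstacle. The first is getting the direction of the infimum swap in term (ii) right: the needed inequality $\sum_t\inf\le\inf\sum_t$ points the correct way only because the oracle is an approximate \emph{best response} at each round (so the $\parameters$-player pays merely the additive error $\approximation$ against \emph{any} fixed comparator), not merely a low-regret learner. The second is confirming that the stated step size makes the two halves of the Zinkevich bound combine to exactly the $\sqrt{2/T}$ constant. The online-gradient-ascent estimate itself is entirely standard and is the least delicate part of the argument.
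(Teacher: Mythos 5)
Your proposal is correct and follows essentially the same route as the paper: the same decomposition of the equilibrium gap into the $\multipliers$-player's external regret (bounded by the standard projected-gradient-ascent regret bound, which is the paper's \corref{sgd}) and the $\parameters$-player's per-round oracle error (bounded by $\approximation$ via \defref{oracle} and the swap of average and infimum). The only cosmetic difference is that you unpack the Zinkevich bound explicitly where the paper cites its mirror-descent corollary; the constants work out (your $\tfrac{3}{2\sqrt{2}}$ is below the claimed $\sqrt{2}$), so the stated $\epsilon$ follows.
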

\begin{prf}{oracle-lagrangian}
  Applying \corref{sgd} to the optimization over $\multipliers$ gives:
  \begin{equation*}
    \frac{1}{T} \sum_{t=1}^T \lagrangian\left( \parameters^{(t)},
    \multipliers^* \right) - \frac{1}{T} \sum_{t=1}^T \lagrangian\left(
    \parameters^{(t)}, \multipliers^{(t)} \right) \le \bound{\Multipliers}
    \bound{\stochasticgrad} \sqrt{\frac{2}{T}}
  \end{equation*}
  By the definition of $\oracle$ (\defref{oracle}):
  \begin{equation*}
    \frac{1}{T} \sum_{t=1}^T \lagrangian\left( \parameters^{(t)},
    \multipliers^* \right) - \inf_{\parameters^* \in \Parameters} \frac{1}{T}
    \sum_{t=1}^T \lagrangian\left( \parameters^*, \multipliers^{(t)} \right)
    \le \approximation + \bound{\Multipliers} \bound{\stochasticgrad}
    \sqrt{\frac{2}{T}}
  \end{equation*}
  Using the linearity of $\lagrangian$ in $\multipliers$, the fact that
  $\bound{\Multipliers} = \Radius$, and the definitions of $\bar{\parameters}$
  and $\bar{\multipliers}$, yields the claimed result.
\end{prf}

Combined with \thmref{lagrangian-suboptimality-and-feasibility}, we therefore
have that if $\Radius$ is sufficiently large, then \algref{oracle-lagrangian}
will converge to a distribution over $\Parameters$ that is, in expectation,
$O(\approximation)$-far from being optimal and feasible at a $O(1/\sqrt{T})$
rate, where $\approximation$ is as in \defref{oracle}.

\subsection{Shrinking}\label{sec:lagrangian:shrinking}

\NOTE{we could move the lemma (but not the rest) to an appendix, if needed}
Aside from the unrealistic oracle assumption (which will be partially addressed
in \secref{proxy-lagrangian}), the main disadvantage of
\algref{oracle-lagrangian} is that it results in a mixture of $T$ models, which
presumably would be far too many to use in practice. However, much smaller Nash
equilibria exist:
\begin{lemma}
  \label{lem:sparse-lagrangian}
  If $\Parameters$ is a compact Hausdorff space, $\Multipliers$ is compact, and
  the objective and constraint functions
  $\objective,\constraint{1},\dots,\constraint{\numconstraints}$ are
  continuous, then the Lagrangian game (\defref{lagrangian}) has a mixed Nash
  equilibrium pair $\left(\parameters,\multipliers\right)$ where $\parameters$
  is a random variable supported on at most $\numconstraints+1$ elements of
  $\Parameters$, and $\multipliers$ is non-random.
\end{lemma}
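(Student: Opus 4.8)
The plan is to begin from an arbitrary mixed Nash equilibrium, collapse the $\multipliers$-player to a single point by exploiting linearity, and then thin out the $\parameters$-player's distribution with Carath\'eodory's theorem applied in $\R^{\numconstraints}$.

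First I would establish existence of \emph{some} mixed equilibrium. Lift the game to the spaces $\mathcal{P}(\Parameters)$ and $\mathcal{P}(\Multipliers)$ of Radon probability measures, which are convex and compact in the weak-$*$ topology (the former because $\Parameters$ is compact Hausdorff, via Riesz representation together with Banach--Alaoglu). On these lifted spaces the payoff $(\mu,\nu)\mapsto\iint\lagrangian\,d\mu\,d\nu$ is bilinear and weak-$*$ continuous, so a saddle point $(\mu^*,\nu^*)$ exists by Sion's minimax theorem (equivalently, Glicksberg's theorem). This is the step I expect to be the main obstacle: it is the only place where the topological hypotheses (compact Hausdorff $\Parameters$, continuity of $\objective,\constraint{1},\dots,\constraint{\numconstraints}$) are genuinely used, and one must take care in the possibly non-metrizable setting.

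Next I would make the $\multipliers$-player non-random. Because $\lagrangian$ is affine in $\multipliers$, the payoff depends on $\nu^*$ only through its mean: $\expectation_{\multipliers\sim\nu^*}[\lagrangian(\parameters,\multipliers)]=\lagrangian(\parameters,\bar\multipliers)$ with $\bar\multipliers\defeq\expectation_{\nu^*}[\multipliers]\in\Multipliers$ (using convexity of $\Multipliers$). Replacing $\nu^*$ by the point mass at $\bar\multipliers$ therefore leaves every equilibrium condition intact, producing an equilibrium $(\mu^*,\multipliers^*)$ with $\multipliers^*$ a single non-random vector.

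Finally I would reduce the support of $\mu^*$. Since $\mu^*$ minimizes the linear functional $\mu\mapsto\int\lagrangian(\cdot,\multipliers^*)\,d\mu$, it is supported on $\Parameters^*\defeq\argmin_{\parameters}\lagrangian(\parameters,\multipliers^*)$, a compact set on which $\lagrangian(\cdot,\multipliers^*)$ is the constant minimum value $L^*$, so that $\objective=L^*-\sum_i\multipliers^*_i\constraint{i}$ there. Writing $c^*_i\defeq\expectation_{\mu^*}[\constraint{i}]$, the vector $c^*$ is the barycenter of the pushforward of $\mu^*$ under $\parameters\mapsto(\constraint{1}(\parameters),\dots,\constraint{\numconstraints}(\parameters))$, hence lies in the convex hull of the image of $\Parameters^*$ in $\R^{\numconstraints}$; this hull is closed because the image is compact, so by Carath\'eodory's theorem $c^*$ is an exact convex combination of at most $\numconstraints+1$ image points. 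Taking preimages $\parameters_1,\dots,\parameters_{\numconstraints+1}\in\Parameters^*$ and the corresponding weights $w_j$ gives $\mu'\defeq\sum_j w_j\,\delta_{\parameters_j}$, which matches all constraint expectations and, since $\objective$ is affine in the $\constraint{i}$ on $\Parameters^*$, also matches $\expectation[\objective]$. Matching the constraint expectations makes $\multipliers\mapsto\expectation_{\mu'}[\lagrangian(\parameters,\multipliers)]$ agree with $\multipliers\mapsto\expectation_{\mu^*}[\lagrangian(\parameters,\multipliers)]$ up to an additive constant, so $\multipliers^*$ is still a best response; and $\mu'$ is a best response to $\multipliers^*$ because it is supported on $\Parameters^*$. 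Hence $(\mu',\multipliers^*)$ is the claimed equilibrium, supported on at most $\numconstraints+1$ points with $\multipliers^*$ non-random. The one subtlety worth flagging in this last step is precisely the need for compactness to guarantee a \emph{finite} Carath\'eodory representation with points lying in $\Parameters^*$ rather than merely in its closure.
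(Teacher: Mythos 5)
Your proposal is correct and follows essentially the same route as the paper's proof of \thmref{sparse-equilibrium}: existence via Glicksberg, collapsing the $\multipliers$-player to its mean by linearity, restricting the $\parameters$-distribution to the argmin set, and then applying Carath\'eodory in $\R^{\numconstraints}$ to the vector of constraint evaluations (which is exactly the paper's $\grad_{\multipliers}\lagrangian$). Your observation that matching the constraint expectations automatically matches the objective expectation on the argmin set is a slightly cleaner way of phrasing the paper's ``same function up to a constant'' argument, but the substance is identical.
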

\begin{proof}
  Follows from \thmref{sparse-equilibrium} in \appref{sparsity}.
\end{proof}
Of course, the mere existence of such an equilibrium is insufficient---we need
to be able to \emph{find} it, and \algref{oracle-lagrangian} manifestly does
not. Thankfully, we can re-formulate the problem of finding the optimal
$\epsilon$-feasible mixture of the $\parameters^{(t)}$s as a linear program
(LP) that can be solved to ``shrink'' the support set. We must first evaluate
the objective and constraint functions for every $\parameters^{(t)}$, yielding
a $T$-dimensional vector of objective function values, and $\numconstraints$
such vectors of constraint function evaluations, which are then used to specify
the LP:
\begin{lem}{sparse-linear-program}
  Let $\parameters^{(1)},\parameters^{(2)},\dots,\parameters^{(T)} \in
  \Parameters$ be a sequence of $T$ ``candidate solutions'' of
  \eqref{constrained-problem}.
  Define $\vec{\objective},\vec{\constraint{i}} \in \R^T$ such that
  $\left(\vec{\objective}\right)_t = \objective\left(\parameters^{(t)}\right)$
  and $\left(\vec{\constraint{i}}\right)_t =
  \constraint{i}\left(\parameters^{(t)}\right)$ for
  $i\in\indices{\numconstraints}$, and consider the linear program:
  \begin{align*}
    \min_{p \in \Delta^T} \; & \inner{p}{\vec{\objective}} \\
    \suchthat & \forall i \in \indices{\numconstraints} .
    \inner{p}{\vec{\constraint{i}}} \le \epsilon
  \end{align*}
  where $\Delta^T$ is the $T$-dimensional simplex. Then every vertex $p^*$ of
  the feasible region---in particular an optimal one---has at most
  $\numconstraints^* + 1 \le \numconstraints + 1$ nonzero elements, where
  $\numconstraints^*$ is the number of active
  $\inner{p^*}{\vec{\constraint{i}}} \le \epsilon$ constraints.
\end{lem}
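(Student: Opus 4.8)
The plan is to treat this as the standard statement that vertices (basic feasible solutions) of a linear program are sparse, and to prove it by counting linearly independent active constraints at a vertex.

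First I would write the feasible region explicitly as a polyhedron living in $\R^T$. It is cut out by exactly three families of constraints: the single equality $\inner{p}{\mathbf{1}} = 1$ coming from $p \in \Delta^T$, the $\numconstraints$ inequality constraints $\inner{p}{\vec{\constraint{i}}} \le \epsilon$, and the $T$ nonnegativity constraints $p_t \ge 0$. I would then invoke the standard characterization of a vertex: a point $p^*$ of a polyhedron in $\R^T$ is a vertex if and only if the constraints active at $p^*$ contain $T$ that are linearly independent; in particular, the total number of constraints active at a vertex is at least $T$.

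The heart of the argument is then a counting step. The equality $\inner{p}{\mathbf{1}} = 1$ is always active, contributing $1$. Let $\numconstraints^*$ be the number of inequality constraints active at $p^*$ (those with $\inner{p^*}{\vec{\constraint{i}}} = \epsilon$). The only remaining constraints are the nonnegativity constraints, and the number of those that are active is precisely $z \defeq \abs{\{t : p^*_t = 0\}}$, the number of zero coordinates of $p^*$. Since at least $T$ constraints must be active at a vertex, we get $1 + \numconstraints^* + z \ge T$, hence $z \ge T - \numconstraints^* - 1$, and therefore the number of nonzero coordinates of $p^*$ is $T - z \le \numconstraints^* + 1 \le \numconstraints + 1$, as claimed.

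Honestly, there is no serious obstacle here: the result is an immediate consequence of the basic-feasible-solution characterization. The only point requiring care is obtaining the sharp bound $\numconstraints^* + 1$ rather than the cruder $\numconstraints + 1$. The key observation is that an \emph{inactive} inequality does not contribute to the active count, so it forces an additional coordinate to vanish in order to make up the required $T$ active constraints; this is exactly what the inequality $1 + \numconstraints^* + z \ge T$ encodes. Equivalently, one could introduce slack variables $s_i \ge 0$ with $\inner{p}{\vec{\constraint{i}}} + s_i = \epsilon$, turning the problem into standard form with $1 + \numconstraints$ equalities, so that any vertex has at most $1 + \numconstraints$ nonzero entries among $(p, s)$; subtracting the $\numconstraints - \numconstraints^*$ necessarily-positive slacks of the inactive inequalities again yields at most $\numconstraints^* + 1$ nonzero $p_t$'s.
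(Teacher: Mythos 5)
Your proposal is correct and follows essentially the same route as the paper's own proof: both count the active constraints at a vertex (the sum-to-one equality, the $\numconstraints^*$ active functional constraints, and the nonnegativity constraints) and conclude that at least $T - \numconstraints^* - 1$ coordinates must vanish. Your explicit appeal to the linear-independence characterization of vertices, and the alternative slack-variable formulation, are slightly more careful but do not change the argument.
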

\begin{prf}{sparse-linear-program}
  The linear program contains not only the $\numconstraints$ explicit
  linearized functional constraints, but also, since $p \in \Delta^T$, the $T$
  nonnegativity constraints $p_t \ge 0$, and the sum-to-one constraint
  $\sum_{t=1}^T p_t = 1$.

  Since $p$ is $T$-dimensional, every vertex $p^*$ of the feasible region must
  include $T$ active constraints. Letting $\numconstraints^* \le
  \numconstraints$ be the number of active linearized functional constraints,
  and accounting for the sum-to-one constraint, it follows that at least $T -
  \numconstraints^* - 1$ nonnegativity constraints are active, implying that
  $p^*$ contains at most $\numconstraints^* + 1$ nonzero elements.
\end{prf}

This result suggests a two-phase approach to optimization. In the first phase,
we apply \algref{oracle-lagrangian}, yielding a sequence of iterates for which
the uniform distribution over the $\parameters^{(t)}$s is approximately
feasible and optimal. We then apply the procedure of
\lemref{sparse-linear-program} to find the \emph{best} distribution over these
iterates, which in particular is guaranteed to be no worse than the uniform
distribution, and is supported on at most $\numconstraints+1$ iterates. We'll
expand upon this further in \secref{overall}.

\section{Proxy-Lagrangian Optimization}\label{sec:proxy-lagrangian}

\begin{algorithm*}[t]

\begin{pseudocode}
\codename $\mbox{StochasticProxyLagrangian}\left(\lagrangian_{\parameters}, \lagrangian_{\multipliers} : \Parameters \times \Delta^{\numconstraints+1} \rightarrow \R, T \in \N, \eta_{\parameters}, \eta_{\multipliers} \in \R_+ \right)$: \\
\codeline Initialize $\parameters^{(1)} = 0$, and $\matrixmultipliers^{(1)} \in \R^{\left(\numconstraints + 1\right) \times \left(\numconstraints + 1\right)}$ with $\matrixmultipliers_{i,j} = 1 / \left(\numconstraints+1\right)$ \codecomment{Assumes $0 \in \Parameters$} \\
\codeline For $t \in \indices{T}$: \\
\codeline \> Let $\multipliers^{(t)} = \fix \matrixmultipliers^{(t)}$ \codecomment{Stationary distribution of $\matrixmultipliers^{(t)}$} \\
\codeline \> Let $\stochasticsubgrad^{(t)}_{\parameters}$ be a stochastic subgradient of $\lagrangian_{\parameters}\left(\parameters^{(t)},\multipliers^{(t)}\right)$ \wrt $\parameters$ \\
\codeline \> Let $\stochasticgrad^{(t)}_{\multipliers}$ be a stochastic gradient of $\lagrangian_{\multipliers}\left(\parameters^{(t)},\multipliers^{(t)}\right)$ \wrt $\multipliers$ \\
\codeline \> Update $\parameters^{(t+1)} = \Pi_{\Parameters}\left( \parameters^{(t)} - \eta_{\parameters} \stochasticsubgrad^{(t)}_{\parameters} \right)$ \codecomment{Projected SGD update} \\
\codeline \> Update $\tilde{\matrixmultipliers}^{(t+1)} = \matrixmultipliers^{(t)} \elementwiseproduct \elementwiseexp\left( \eta_{\multipliers} \stochasticgrad^{(t)}_{\multipliers} \left( \multipliers^{(t)} \right)^T \right)$ \codecomment{$\elementwiseproduct$ and $\elementwiseexp$ are element-wise} \\
\codeline \> Project $\matrixmultipliers^{(t+1)}_{:,i} = \tilde{\matrixmultipliers}^{(t+1)}_{:,i} / \norm{\tilde{\matrixmultipliers}^{(t+1)}_{:,i}}_1$ for $i\in\indices{\numconstraints+1}$ \codecomment{Column-wise projection \wrt KL divergence} \\
\codeline Return $\parameters^{(1)},\dots,\parameters^{(T)}$ and $\multipliers^{(1)},\dots,\multipliers^{(T)}$
\end{pseudocode}

\caption{
  Optimizes the proxy-Lagrangian formulation (\defref{proxy-lagrangians}) in
  the convex setting, with the $\parameters$-player minimizing external regret,
  and the $\multipliers$-player minimizing swap regret.
  The $\fix \matrixmultipliers$ operation on line $3$ results in a stationary
  distribution of $\matrixmultipliers$ (\ie a $\multipliers \in \Multipliers$
  such that $\matrixmultipliers \multipliers = \multipliers$, which can be
  derived from the top eigenvector).
  The function $\Pi_{\Parameters}$ projects its argument onto $\Parameters$
  \wrt the Euclidean norm.
}

\label{alg:stochastic-proxy-lagrangian}

\end{algorithm*}

While the Lagrangian formulation can be used to solve constrained problems in
the form of \eqref{constrained-problem}, \algref{oracle-lagrangian} isn't
actually implementable, due to its reliance on an oracle. If one wished to
apply it in practice, one would need to replace the oracle with something else,
and for large-scale machine learning problems, ``something else'' is
overwhelmingly likely to be SGD~\citep{Robbins:1951,Zinkevich:2003} or another
first-order stochastic algorithm (\eg \adagrad~\citep{Duchi:2010} or
ADAM~\citep{Kingma:2015}).

This leads to the issue we raised in \secref{introduction:non-zero-sum}: for
non-differentiable constraints like those in the fairness example of
\eqref{fairness-example}, we cannot compute gradients, and therefore cannot use
a first-order algorithm. ``Fixing'' this issue by replacing the constraints
with differentiable surrogates introduces a new difficulty: solutions to the
resulting problem will satisfy the \emph{surrogate} constraints, rather than
the \emph{actual} constraints.

The proxy-Lagrangian formulation of \defref{proxy-lagrangians} sidesteps this
issue by using a non-zero-sum two-player game. The $\multipliers$-player
chooses how much the $\parameters$-player should penalize the (differentiable)
proxy constraints, but does so in such a way as to satisfy the \emph{original}
constraints.
Unfortunately, since the proxy-Lagrangian game is non-zero-sum, we cannot
expect to find a Nash equilibrium, at least not efficiently. However, the
analogous result to \thmref{lagrangian-suboptimality-and-feasibility} requires
a \emph{weaker} type of equilibrium: a joint distribution over $\Theta$ and
$\Lambda$ \wrt which the $\theta$-player can only make a negligible improvement
compared to the best constant strategy, and the $\lambda$-player compared to
the best action-swapping strategy (this is a particular type of
$\Phi$-correlated equilibrium~\citep{Rakhlin:2011}):
\begin{theorem}
  \label{thm:proxy-lagrangian-suboptimality-and-feasibility}
  Define
  %
  %
  $\Matrixmultipliers$
  as the set of all left-stochastic $\left(\numconstraints + 1\right) \times
  \left(\numconstraints + 1\right)$ matrices, $\Multipliers \defeq
  \Delta^{\numconstraints+1}$ as the
  $\left(\numconstraints+1\right)$-dimensional simplex, and assume that each
  $\proxyconstraint{i}$ upper bounds the corresponding $\constraint{i}$.
  Let $\parameters^{(1)},\dots,\parameters^{(T)} \in \Parameters$ and
  $\multipliers^{(1)},\dots,\multipliers^{(T)} \in \Multipliers$ be sequences
  %
  %
  satisfying:
  \begin{align*}
    \frac{1}{T} \sum_{t=1}^T \lagrangian_{\parameters}\left( \parameters^{(t)},
    \multipliers^{(t)} \right) - \inf_{\parameters^* \in \Parameters}
    \frac{1}{T} \sum_{t=1}^T \lagrangian_{\parameters}\left( \parameters^*,
    \multipliers^{(t)} \right) \le& \epsilon_{\parameters} \\
    \max_{\matrixmultipliers^* \in \Matrixmultipliers} \frac{1}{T} \sum_{t=1}^T
    \lagrangian_{\multipliers}\left( \parameters^{(t)}, \matrixmultipliers^*
    \multipliers^{(t)} \right) - \frac{1}{T} \sum_{t=1}^T
    \lagrangian_{\multipliers}\left( \parameters^{(t)}, \multipliers^{(t)}
    \right) \le& \epsilon_{\multipliers}
  \end{align*}
  Define $\bar{\parameters}$ as a random variable for which $\bar{\parameters}
  = \parameters^{(t)}$ with probability $\multipliers^{(t)}_1 / \sum_{s=1}^T
  \multipliers^{(s)}_1$, and let $\bar{\multipliers} \defeq \left(\sum_{t=1}^T
  \multipliers^{(t)}\right) / T$.
  Then $\bar{\parameters}$ is nearly-optimal in expectation:
  \begin{equation}
    \label{eq:thm:proxy-lagrangian-suboptimality-and-feasibility:optimality}
    \expectation_{\bar{\parameters}}\left[
    \objective\left(\bar{\parameters}\right) \right] \le \inf_{\parameters^*
    \in \Parameters : \forall i .
    \proxyconstraint{i}\left(\parameters^*\right) \le 0} \objective\left(
    \parameters^* \right) + \frac{\epsilon_{\parameters} +
    \epsilon_{\multipliers}}{\bar{\multipliers}_1}
  \end{equation}
  and nearly-feasible:
  \begin{equation}
    \label{eq:thm:proxy-lagrangian-suboptimality-and-feasibility:feasibility}
    \max_{i \in \indices{\numconstraints}}
    \expectation_{\bar{\parameters}}\left[
    \constraint{i}\left(\bar{\parameters}\right) \right] \le
    \frac{\epsilon_{\multipliers}}{\bar{\multipliers}_1}
  \end{equation}
  Additionally, if there exists a $\parameters' \in \Parameters$ that satisfies
  all of the proxy constraints with margin $\gamma$ (\ie
  $\proxyconstraint{i}\left(\parameters'\right) \le -\gamma$ for all
  $i\in\indices{\numconstraints}$), then:
  \begin{equation*}
    \bar{\multipliers}_1 \ge \frac{\gamma - \epsilon_{\parameters} -
    \epsilon_{\multipliers}}{\gamma + \bound{\objective}}
  \end{equation*}
  where $\bound{\objective} \ge \sup_{\parameters \in \Parameters}
  \objective\left(\parameters\right) - \inf_{\parameters \in \Parameters}
  \objective\left(\parameters\right)$ is a bound on the range of the objective
  function $\objective$.
\end{theorem}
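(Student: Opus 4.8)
The plan is to decouple the two guarantees exactly as the two players' regret bounds decouple, and to extract everything from two carefully-chosen left-stochastic swap matrices. First I would rewrite the relevant expectations in terms of the sequences. Since $\bar{\multipliers}_1 = \frac{1}{T}\sum_{t=1}^T \multipliers^{(t)}_1$, the sampling probabilities are $\multipliers^{(t)}_1 / (T \bar{\multipliers}_1)$, so that $\expectation_{\bar{\parameters}}[\objective(\bar{\parameters})] = \frac{1}{T\bar{\multipliers}_1}\sum_{t=1}^T \multipliers^{(t)}_1 \objective(\parameters^{(t)})$ and likewise $\expectation_{\bar{\parameters}}[\constraint{i}(\bar{\parameters})] = \frac{1}{T\bar{\multipliers}_1}\sum_{t=1}^T \multipliers^{(t)}_1 \constraint{i}(\parameters^{(t)})$. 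This reduces all three claims to bounding weighted sums of the form $\sum_t \multipliers^{(t)}_1(\cdot)$.

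For feasibility, the key move is to instantiate the swap-regret bound at the specific matrix $\matrixmultipliers^*$ that redirects the mass in coordinate $1$ to coordinate $i+1$ and fixes every other coordinate. Because $\lagrangian_{\multipliers}$ places weight $0$ on the objective coordinate, this swap changes $\lagrangian_{\multipliers}(\parameters^{(t)},\multipliers^{(t)})$ by exactly $\multipliers^{(t)}_1 \constraint{i}(\parameters^{(t)})$, so the swap-regret inequality collapses to $\frac{1}{T}\sum_{t=1}^T \multipliers^{(t)}_1 \constraint{i}(\parameters^{(t)}) \le \epsilon_{\multipliers}$, which after dividing by $\bar{\multipliers}_1$ is precisely the feasibility claim. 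This is where swap regret is genuinely needed: external regret alone cannot isolate a single constraint coordinate in this way.

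For optimality I would combine both players' bounds. I would upper-bound $\frac{1}{T}\sum_t \lagrangian_{\parameters}(\parameters^{(t)},\multipliers^{(t)})$ via the $\parameters$-player's external regret evaluated at any proxy-feasible comparator $\parameters^*$ (one with $\proxyconstraint{i}(\parameters^*)\le 0$); dropping the nonpositive proxy-constraint terms leaves $\bar{\multipliers}_1 \objective(\parameters^*) + \epsilon_{\parameters}$, and taking the infimum over such $\parameters^*$ recovers the target objective. For the matching lower bound I would use $\proxyconstraint{i}\ge\constraint{i}$ and $\multipliers^{(t)}_{i+1}\ge 0$ to write $\frac{1}{T}\sum_t \lagrangian_{\parameters}(\parameters^{(t)},\multipliers^{(t)}) \ge \frac{1}{T}\sum_t \multipliers^{(t)}_1 \objective(\parameters^{(t)}) + \frac{1}{T}\sum_t \lagrangian_{\multipliers}(\parameters^{(t)},\multipliers^{(t)})$, then instantiate swap regret at the second special matrix, the one mapping every coordinate to coordinate $1$, which sends $\lagrangian_{\multipliers}(\parameters^{(t)},\matrixmultipliers^*\multipliers^{(t)})$ to $0$ and hence yields $\frac{1}{T}\sum_t \lagrangian_{\multipliers}(\parameters^{(t)},\multipliers^{(t)}) \ge -\epsilon_{\multipliers}$. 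Chaining the two bounds and dividing by $\bar{\multipliers}_1$ produces the $(\epsilon_{\parameters}+\epsilon_{\multipliers})/\bar{\multipliers}_1$ slack.

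For the margin statement I would again invoke the $\parameters$-player's external regret, now with comparator $\parameters'$, bounding $\frac{1}{T}\sum_t \lagrangian_{\parameters}(\parameters',\multipliers^{(t)}) \le \bar{\multipliers}_1 \objective(\parameters') - \gamma(1-\bar{\multipliers}_1)$ using $\proxyconstraint{i}(\parameters')\le-\gamma$ together with the simplex identity $\sum_{i=1}^{\numconstraints}\multipliers^{(t)}_{i+1} = 1 - \multipliers^{(t)}_1$. Lower-bounding the left side by $\bar{\multipliers}_1 \inf_\parameters \objective(\parameters) - \epsilon_{\multipliers}$ (reusing the swap-regret lower bound above), rearranging, and replacing $\objective(\parameters') - \inf_\parameters \objective(\parameters)$ by $\bound{\objective}$ gives $\gamma - \epsilon_{\parameters} - \epsilon_{\multipliers} \le \bar{\multipliers}_1(\gamma + \bound{\objective})$, which is the claim. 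The main obstacle throughout is choosing the two swap matrices correctly: once one sees that ``redirect coordinate $1$ to $i+1$'' yields per-constraint feasibility while ``collapse everything to coordinate $1$'' lower-bounds the multiplier-weighted objective, the rest is bookkeeping. A minor subtlety is the division by $\bar{\multipliers}_1$, which is harmless precisely because the final margin bound certifies that $\bar{\multipliers}_1$ is bounded away from $0$.
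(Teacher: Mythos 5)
Your proposal is correct and follows essentially the same route as the paper's proof of Theorem~\ref{thm:proxy-lagrangian-suboptimality-and-feasibility} (via \thmref{proxy-lagrangian-suboptimality} and \lemref{proxy-lagrangian-feasibility}): the same ``collapse everything to coordinate $1$'' swap matrix drives the optimality and margin bounds, and the same combination of the two regret inequalities with $\proxyconstraint{i} \ge \constraint{i}$ does the rest. The only cosmetic difference is in the feasibility step, where you directly instantiate the per-constraint swap matrix sending coordinate $1$ to coordinate $i+1$, while the paper evaluates the full $\max_{\matrixmultipliers^*}$ column-by-column and lets the sums collapse---the two computations yield the identical bound $\max_i \expectation_{\bar{\parameters}}[\constraint{i}(\bar{\parameters})] \le \epsilon_{\multipliers}/\bar{\multipliers}_1$.
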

\begin{proof}
  This is a special case of \thmref{proxy-lagrangian-suboptimality} and
  \lemref{proxy-lagrangian-feasibility} in \appref{suboptimality}.
\end{proof}
Notice that while
\eqref{thm:proxy-lagrangian-suboptimality-and-feasibility:feasibility}
guarantees feasibility \wrt the original constraints, the comparator in
\eqref{thm:proxy-lagrangian-suboptimality-and-feasibility:optimality} is
feasible \wrt the \emph{proxy} constraints. Hence, the overall guarantee is no
better than what we would achieve if we took $\constraint{i} \defeq
\proxyconstraint{i}$ for all $i \in \indices{\numconstraints}$, and optimized
the Lagrangian as in \secref{lagrangian}.
However, as will be demonstrated experimentally in \secref{experiments:adult},
because the feasible region \wrt the original constraints is larger (perhaps
significantly so) than that \wrt the proxy constraints, the proxy-Lagrangian
approach has more ``room'' to find a better solution in practice.

One key difference between this result and
\thmref{lagrangian-suboptimality-and-feasibility} is that the $\Radius$
parameter is absent. Instead, its role, and that of
$\norm{\bar{\multipliers}}_1$, is played by the first coordinate of
$\bar{\multipliers}$. Inspection of \defref{proxy-lagrangians} reveals that, if
one or more of the constraints are violated, then the $\multipliers$-player
would prefer $\multipliers_1$ to be zero, whereas if they are satisfied (with
some margin), then it would prefer $\multipliers_1$ to be one. In other words,
the first coordinate of $\multipliers^{(t)}$ encodes the
$\multipliers$-player's belief about the feasibility of $\parameters^{(t)}$,
for which reason $\parameters^{(t)}$ is weighted by $\multipliers^{(t)}_1$ in
the density defining $\bar{\parameters}$.
%

\algref{stochastic-proxy-lagrangian} is motivated by the observation that,
while \thmref{proxy-lagrangian-suboptimality-and-feasibility} only requires
that the $\parameters^{(t)}$ sequence suffer low external regret \wrt
$\lagrangian_{\parameters}\left(\cdot, \multipliers^{(t)}\right)$, the
condition on the $\multipliers^{(t)}$ sequence is stronger, requiring it to
suffer low \emph{swap regret}~\citep{Blum:2007}.
%
%
Hence, the $\parameters$-player uses SGD to minimize external regret, while the
$\multipliers$-player uses a swap-regret minimization algorithm of the type
proposed by \citet{Gordon:2008}, yielding the convergence guarantee:
\begin{lem}{stochastic-proxy-lagrangian}
  \ifshowproofs
  \textbf{(\algref{stochastic-proxy-lagrangian})}
  \fi
  Suppose that $\Parameters$ is a compact convex set, $\Matrixmultipliers$ and
  $\Multipliers$ are as in
  \thmref{proxy-lagrangian-suboptimality-and-feasibility}, and that the
  objective and proxy constraint functions
  $\objective,\proxyconstraint{1},\dots,\proxyconstraint{\numconstraints}$ are
  convex (but not $\constraint{1},\dots,\constraint{\numconstraints}$). Define
  the three upper bounds $\bound{\Parameters} \ge \max_{\parameters \in
  \Parameters} \norm{\parameters}_2$, $\bound{\stochasticsubgrad} \ge \max_{t
  \in \indices{T}} \norm{\stochasticsubgrad_{\parameters}^{(t)}}_2$, and
  $\bound{\stochasticgrad} \ge \max_{t \in \indices{T}}
  \norm{\stochasticgrad_{\multipliers}^{(t)}}_{\infty}$.

  If we run \algref{stochastic-proxy-lagrangian} with the step sizes
  $\eta_{\parameters} \defeq \bound{\Parameters} / \bound{\stochasticsubgrad}
  \sqrt{2T}$ and $\eta_{\multipliers} \defeq \sqrt{
  \left(\numconstraints+1\right) \ln \left(\numconstraints+1\right) / T
  \bound{\stochasticgrad}^2 }$, then the result satisfies the conditions of
  \thmref{proxy-lagrangian-suboptimality-and-feasibility} for:
  \begin{align*}
    \epsilon_{\parameters} =& 2 \bound{\Parameters} \bound{\stochasticsubgrad}
    \sqrt{ \frac{ 1 + 16 \ln\frac{2}{\delta} }{T} } \\
    \epsilon_{\multipliers} =& 2 \bound{\stochasticgrad} \sqrt{ \frac{ 2
    \left(\numconstraints+1\right) \ln \left(\numconstraints+1\right) \left( 1
    + 16 \ln\frac{2}{\delta}\right) }{T} }
  \end{align*}
  with probability $1-\delta$ over the draws of the stochastic
  (sub)gradients.
\end{lem}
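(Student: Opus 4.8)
The plan is to treat the two players entirely separately --- the \parameters-player's external-regret bound and the \multipliers-player's swap-regret bound are proved independently --- and then, in a final step, to convert both from guarantees in terms of the \emph{true} (sub)gradients into high-probability guarantees that account for the stochastic (sub)gradients actually used. The two players interact only through the sequences $\parameters^{(t)}, \multipliers^{(t)}$ that each feeds to the other, and since each regret quantity in \thmref{proxy-lagrangian-suboptimality-and-feasibility} holds against an \emph{arbitrary} fixed comparator (a constant $\parameters^*$, resp.\ a swap matrix $\matrixmultipliers^*$) for whatever sequence the opponent produced, no joint fixed-point argument is needed.

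For the \parameters-player, line $6$ is exactly projected online gradient descent on the convex losses $\parameters \mapsto \lagrangian_{\parameters}(\parameters, \multipliers^{(t)})$ over the compact convex set \Parameters. Applying \corref{sgd} with $\eta_{\parameters} = \bound{\Parameters}/(\bound{\stochasticsubgrad}\sqrt{2T})$ gives an external regret of at most $\bound{\Parameters}\bound{\stochasticsubgrad}\sqrt{2/T}$, but measured against the stochastic subgradients $\stochasticsubgrad_{\parameters}^{(t)}$. Convexity of $\lagrangian_{\parameters}(\cdot,\multipliers^{(t)})$ lets me pass from this linearized regret to the true functional regret $\epsilon_{\parameters}$, at the cost of a correction term $\frac{1}{T}\sum_t \inner{\grad^{(t)}-\stochasticsubgrad_{\parameters}^{(t)}}{\parameters^{(t)}-\parameters^*}$ whose conditional expectation vanishes and which I defer to the concentration step.

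For the \multipliers-player, lines $3$, $7$ and $8$ implement the swap-regret reduction of \citet{Gordon:2008} (see also \citet{Blum:2007}). The key structural facts I would establish are: (i) each column of $\matrixmultipliers^{(t)}$ is maintained by an independent exponentiated-gradient (Hedge) learner over $\Multipliers = \Delta^{\numconstraints+1}$, with column $i$ receiving the gain vector $\multipliers^{(t)}_i \stochasticgrad^{(t)}_{\multipliers}$ --- this is precisely the outer-product update on line $7$, followed by the column-wise KL projection on line $8$; and (ii) choosing $\multipliers^{(t)} = \fix\matrixmultipliers^{(t)}$ as the stationary distribution makes the swap regret equal to the $\multipliers^{(t)}_i$-weighted sum of the per-column external regrets. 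Plugging in the Hedge bound for each of the $\numconstraints+1$ columns, summing, and using $\sum_i (\multipliers^{(t)}_i)^2 \le 1$ gives $\sum_i R_i \le (\numconstraints+1)\ln(\numconstraints+1)/\eta_{\multipliers} + \eta_{\multipliers}\bound{\stochasticgrad}^2 T$; the stated choice of $\eta_{\multipliers}$ balances the two terms and yields the $\bound{\stochasticgrad}\sqrt{(\numconstraints+1)\ln(\numconstraints+1)/T}$ rate, again in terms of the stochastic gradients.

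Finally, I would upgrade both bounds to hold with high probability. In each case the discrepancy between the stochastic-gradient regret and the true-gradient regret is an average of a bounded martingale-difference sequence --- bounded by $2\bound{\Parameters}\bound{\stochasticsubgrad}$ for the \parameters-player and by a comparable quantity for the \multipliers-player --- so the Azuma--Hoeffding inequality controls it by a term of order $\sqrt{\ln(2/\delta)/T}$. Splitting the failure probability as $\delta/2$ per player and taking a union bound produces the factor $1+16\ln(2/\delta)$ inside both radicals and the overall $1-\delta$ confidence. I expect the main obstacle to be the \multipliers-player analysis: verifying that the outer-product/stationary-distribution mechanics really instantiate Gordon's reduction (including that $\fix\matrixmultipliers^{(t)}$ is well-defined for a left-stochastic matrix), and then threading the stochasticity through the reduction so that the per-column martingale corrections aggregate into a single $\sqrt{1+16\ln(2/\delta)}$ factor rather than compounding the $\numconstraints+1$ term.
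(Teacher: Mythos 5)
Your proposal is correct and follows essentially the same route as the paper: the paper's proof simply invokes its Corollary on stochastic gradient descent for the $\parameters$-player and its Lemma on stochastic swap-regret minimization for the $\multipliers$-player (each proved by exactly the linearize-then-apply-Azuma--Hoeffding argument you describe, with the swap-regret bound obtained via Gordon's stationary-distribution/outer-product reduction), then takes a union bound with $\delta = 2\delta'$. Your per-column-Hedge accounting of the matrix update is just a reorganization of the paper's single mirror-descent step with the column-wise negative-entropy distance generating function (Pinsker giving strong convexity \wrt the $L_{1,2}$ norm, whence the $\sum_i (\multipliers^{(t)}_i)^2 \le 1$ step you use appears as the $L_{\infty,2}$ bound on the outer-product supergradient), and the swap regret is the plain sum of the per-column regrets on the $\multipliers^{(t)}_i$-scaled gains rather than a weighted sum of regrets --- a phrasing quibble, not a gap.
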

\begin{prf}{stochastic-proxy-lagrangian}
  Applying \corref{stochastic-sgd} to the optimization over $\parameters$, and
  \lemref{stochastic-internal-regret} to that over $\multipliers$ (with
  $\matrixmultipliersize \defeq \numconstraints + 1$), gives that with
  probability $1-2\delta'$ over the draws of the stochastic
  (sub)gradients:
  \begin{align*}
    \frac{1}{T} \sum_{t=1}^T \lagrangian_{\parameters}\left( \parameters^{(t)},
    \multipliers^{(t)} \right) - \frac{1}{T} \sum_{t=1}^T
    \lagrangian_{\parameters}\left( \parameters^*, \multipliers^{(t)} \right)
    \le& 2 \bound{\Parameters} \bound{\stochasticsubgrad} \sqrt{ \frac{ 1 + 16
    \ln\frac{1}{\delta'} }{T} } \\
    \frac{1}{T} \sum_{t=1}^T \lagrangian_{\multipliers}\left(
    \parameters^{(t)}, \matrixmultipliers^* \multipliers^{(t)} \right) -
    \frac{1}{T} \sum_{t=1}^T \lagrangian_{\multipliers}\left(
    \parameters^{(t)}, \multipliers^{(t)} \right) \le& 2
    \bound{\stochasticgrad} \sqrt{ \frac{ 2 \left(\numconstraints+1\right)
    \ln \left(\numconstraints+1\right) \left( 1 + 16
    \ln\frac{1}{\delta'}\right) }{T} }
  \end{align*}
  Taking $\delta=2\delta'$, and using the definitions of $\bar{\parameters}$
  and $\bar{\multipliers}$, yields the claimed result.
\end{prf}

\algref{stochastic-proxy-lagrangian} is designed for the convex setting (except
for the $\constraint{i}$s), for which reason it uses SGD for the
$\parameters$-updates.
However, this convexity requirement is not innate to our approach: it's
straightforward to design an oracle-based algorithm that, like
\algref{oracle-lagrangian}, doesn't require convexity~\footnote{This is
\algref{oracle-proxy-lagrangian}, with \lemref{oracle-proxy-lagrangian} being
its convergence guarantee, both in \appref{convergence:two-player}.}. Our
reason for presenting the SGD-based algorithm, instead of the oracle-based one,
is that the purpose of proxy constraints is to substitute optimizable
constraints for unoptimizable ones, and there is no need to do so if you have
an oracle.

\subsection{Shrinking}\label{sec:proxy-lagrangian:shrinking}

\NOTE{we could move the lemma (but not the rest) to an appendix, if needed}
It turns out that the same existence result that we provided for the Lagrangian
game (\lemref{sparse-lagrangian})---of a \emph{Nash} equilibrium---holds for
the proxy-Lagrangian:
\begin{lem}{sparse-proxy-lagrangian}
  \NOTE{It would be nice to avoid the continuity assumption for the constraint
  functions---it comes from the minimax theorem~\citep{Glicksburg:1952}, but
  seems like it might be unnecessary}
  If $\Parameters$ is a compact Hausdorff space and the objective, constraint
  and proxy constraint functions
  $\objective,\constraint{1},\dots,\constraint{\numconstraints},\proxyconstraint{1},\dots,\proxyconstraint{\numconstraints}$
  are continuous, then the proxy-Lagrangian game (\defref{proxy-lagrangians})
  has a mixed Nash equilibrium pair $\left(\parameters,\multipliers\right)$
  where $\parameters$ is a random variable supported on at most
  $\numconstraints+1$ elements of $\Parameters$, and $\multipliers$ is
  non-random.
\end{lem}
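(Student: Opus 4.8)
The plan is to establish existence of \emph{some} mixed Nash equilibrium of the proxy-Lagrangian game via a fixed-point argument, and then shrink it in two stages: first collapsing the \multipliers-player's mixed strategy to a single point, and then thinning the \parameters-player's distribution down to $\numconstraints+1$ atoms. Since the proxy-Lagrangian is a \emph{non-zero-sum} game, I cannot invoke von Neumann's minimax theorem as in the zero-sum Lagrangian case; instead I would use the Glicksberg existence theorem for games with continuous payoffs on compact Hausdorff strategy spaces~\citep{Glicksburg:1952}. Its hypotheses are exactly what is assumed here: $\Parameters$ is compact Hausdorff, $\Multipliers = \Delta^{\numconstraints+1}$ is a compact subset of $\R^{\numconstraints+1}$, and continuity of $\objective,\constraint{1},\dots,\constraint{\numconstraints},\proxyconstraint{1},\dots,\proxyconstraint{\numconstraints}$ makes both $\lagrangian_{\parameters}$ and $\lagrangian_{\multipliers}$ jointly continuous. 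This yields a pair of Borel probability measures $(\mu^*,\nu^*)$ on $\Parameters$ and $\Multipliers$, each a best response to the other.

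The first reduction exploits the fact that both payoffs in \defref{proxy-lagrangians} are \emph{linear} in \multipliers. Let $\multipliers^* \defeq \int \multipliers\, d\nu^*(\multipliers)$ be the barycenter of $\nu^*$; convexity of $\Delta^{\numconstraints+1}$ gives $\multipliers^* \in \Multipliers$. Linearity yields $\int \lagrangian_{\parameters}(\parameters,\multipliers)\,d\nu^*(\multipliers) = \lagrangian_{\parameters}(\parameters,\multipliers^*)$, and likewise for $\lagrangian_{\multipliers}$, so replacing $\nu^*$ by the point mass at $\multipliers^*$ changes neither player's expected-payoff functional. Hence $(\mu^*,\multipliers^*)$ is again a Nash equilibrium and \multipliers is now non-random, as required.

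It remains to sparsify $\mu^*$. By the \parameters-player's best-response condition, $\mu^*$ minimizes the linear functional $\mu \mapsto \int \lagrangian_{\parameters}(\parameters,\multipliers^*)\,d\mu(\parameters)$, so it is supported on the minimizer set $P^* \defeq \argmin_{\parameters \in \Parameters} \lagrangian_{\parameters}(\parameters,\multipliers^*)$, a closed (hence compact) subset of $\Parameters$. Crucially, the \multipliers-player's best response depends on $\mu^*$ only through the $\numconstraints$ numbers $\expectation_{\mu^*}[\constraint{i}]$, since $\lagrangian_{\multipliers}$ involves only the original constraints. Consider the continuous moment map $\Phi(\parameters) \defeq (\constraint{1}(\parameters),\dots,\constraint{\numconstraints}(\parameters)) \in \R^{\numconstraints}$. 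The vector $\expectation_{\mu^*}[\Phi]$ is the barycenter of the pushforward $\Phi_*\mu^*$, hence lies in the convex hull of the compact set $\Phi(P^*)$; by Carathéodory's theorem it is a convex combination of at most $\numconstraints+1$ points $\Phi(\parameters_1),\dots,\Phi(\parameters_{\numconstraints+1})$ with $\parameters_k \in P^*$. Setting $p \defeq \sum_k \alpha_k \delta_{\parameters_k}$ for the matching weights gives a distribution on $\le \numconstraints+1$ atoms with $\expectation_p[\constraint{i}] = \expectation_{\mu^*}[\constraint{i}]$ for every $i$. Because $p$ is supported on $P^*$ it is still a \parameters-player best response, and because it matches every constraint moment the \multipliers-player's argmax set is unchanged, so $\multipliers^*$ remains optimal. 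Thus $(p,\multipliers^*)$ is the desired equilibrium, exactly paralleling \lemref{sparse-lagrangian}.

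The main obstacle is the non-zero-sum existence step: unlike the Lagrangian game there is no value/minimax argument to lean on, so I must verify the Glicksberg hypotheses directly---in particular joint continuity of both payoffs and compactness of both action sets---which is precisely why continuity of even the (otherwise only evaluated) original constraints $\constraint{i}$ is assumed. A secondary point needing care is justifying that $\expectation_{\mu^*}[\Phi]$ lies in the convex hull (not merely the closed convex hull) of $\Phi(P^*)$ before applying Carathéodory; this follows because $P^*$ is a closed subset of the compact $\Parameters$ and $\Phi$ is continuous, so $\Phi(P^*)$ is compact and its convex hull is closed.
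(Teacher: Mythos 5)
Your proof is correct and follows essentially the same route as the paper: Glicksberg existence, collapsing the \multipliers-player's mixture to its barycenter by linearity, restricting the \parameters-distribution to the minimizer set, and a Carath\'eodory argument on an $\numconstraints$-dimensional moment vector. The only difference is bookkeeping in how the $\numconstraints+1$ (rather than $\numconstraints+2$) count is obtained: the paper reparametrizes $\Multipliers = \Delta^{\numconstraints+1}$ as an $\numconstraints$-dimensional set $\tilde{\Multipliers}$ before invoking its general \thmref{sparse-equilibrium}, whereas you observe directly that $\lagrangian_{\multipliers}$ depends on the \parameters-distribution only through the $\numconstraints$ moments $\expectation\left[\constraint{i}\left(\parameters\right)\right]$, which yields the same dimension reduction.
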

\begin{prf}{sparse-proxy-lagrangian}
  Applying \thmref{sparse-equilibrium} directly would result in a support size
  of $\numconstraints+2$, rather than the desired $\numconstraints+1$, since
  $\Multipliers$ is $\left(\numconstraints+1\right)$-dimensional.
  Instead, we define $\tilde{\Multipliers} = \left\{ \tilde{\multipliers} \in
  \R_+^{\numconstraints} : \norm{\tilde{\multipliers}}_1 \le 1 \right\}$ as the
  space containing the last $\numconstraints$ coordinates of $\Multipliers$.
  Then we can rewrite the proxy-Lagrangian functions
  $\tilde{\lagrangian}_{\parameters},\tilde{\lagrangian}_{\multipliers} :
  \Parameters \times \tilde{\Multipliers} \rightarrow \R$ as:
  \begin{align*}
    \tilde{\lagrangian}_{\parameters}\left(\parameters,
    \tilde{\multipliers}\right) =& \left(1 -
    \norm{\tilde{\multipliers}}_1\right) \objective\left(\parameters\right) +
    \sum_{i=1}^{\numconstraints} \tilde{\multipliers}_i
    \proxyconstraint{i}\left(\parameters\right) \\
    \tilde{\lagrangian}_{\multipliers}\left(\parameters,
    \tilde{\multipliers}\right) =& \sum_{i=1}^{\numconstraints}
    \tilde{\multipliers}_i \constraint{i}\left(\parameters\right)
  \end{align*}
  These functions are linear in $\tilde{\multipliers}$, which is a
  $\numconstraints$-dimensional space, so the conditions of
  \thmref{sparse-equilibrium} apply, yielding the claimed result.
\end{prf}

Furthermore, the exact same linear programming procedure of
\lemref{sparse-linear-program} can be applied (with the $\vec{\constraint{i}}$s
being defined in terms of the \emph{original}---not proxy---constraints) to
yield a solution with support size $\numconstraints+1$, and works equally well.
This is easy to verify: since $\bar{\parameters}$, as defined in
\thmref{proxy-lagrangian-suboptimality-and-feasibility}, is a distribution over
the $\parameters^{(t)}$s, and is therefore feasible for the LP, the \emph{best}
distribution over the iterates will be at least as good.

\section{Overall Procedure}\label{sec:overall}

The pieces are now in place to propose a complete two-phase optimization
procedure, for both convex and non-convex problems, with or without proxy
constraints.
In the first phase, we apply the appropriate algorithm to yield a distribution
over the $T$ ``candidates'' $\parameters^{(1)},\dots,\parameters^{(T)}$ that is
approximately feasible and optimal, according to either
\thmrefs[or]{lagrangian-suboptimality-and-feasibility}{proxy-lagrangian-suboptimality-and-feasibility}.
Then, in the second phase, we construct
$\vec{\objective},\vec{\constraint{1}},\dots,\vec{\constraint{\numconstraints}}
\in \R^T$ by evaluating the objective and constraint functions for each
$\parameters^{(t)}$, and then optimize the LP of \lemref{sparse-linear-program}
to find the \emph{best} distribution over
$\parameters^{(1)},\dots,\parameters^{(T)}$ (which will have support size $\le
\numconstraints + 1$). If we take the $\epsilon$ parameter to this LP to be
either the RHS of
\eqref{thm:lagrangian-suboptimality-and-feasibility:feasibility} in
\thmref{lagrangian-suboptimality-and-feasibility} (for the Lagrangian case), or
of \eqref{thm:proxy-lagrangian-suboptimality-and-feasibility:feasibility} in
\thmref{proxy-lagrangian-suboptimality-and-feasibility} (for the
proxy-Lagrangian case), then the resulting
size-$\left(\numconstraints+1\right)$ distribution will have the same
guarantees as the original.

\begin{titled-paragraph}{Practical Procedure}
The approach outlined above provably works, but is still somewhat idealized.
In practice, we'll dispense with the oracle $\oracle$---even on non-convex
problems---in favor of the ``typical'' approach: pretending that the problem is
convex, and using SGD (or another cheap stochastic algorithm) for the
$\parameters$-updates~\footnote{In the Lagrangian case, this is
\algref{stochastic-lagrangian}, with \lemref{stochastic-lagrangian} being its
convergence guarantee in the convex setting, both in
\appref{convergence:two-player}. In the proxy-Lagrangian case, this is
\algref{stochastic-proxy-lagrangian}.}.
On a non-convex problem, this has no guarantees, but one would still hope that
it would result in a ``candidate set'' of $\parameters^{(t)}$s that contains
enough good solutions to pass on to the LP of \lemref{sparse-linear-program}.
If necessary, this candidate set can first be subsampled to make it a
reasonable size.
To choose the $\epsilon$ parameter of the LP, we propose using a bisection
search to find the smallest $\epsilon \ge 0$ for which there exists a feasible
solution.
\end{titled-paragraph}

\begin{titled-paragraph}{Evaluation}
The ultimate result of either of these procedures is a distribution over at
most $\numconstraints+1$ distinct $\parameters$s. If the underlying problem is
one of classification, with $\classifier\left(\cdot;\parameters\right)$ being
the scoring function, then this distribution defines a stochastic classifier:
at evaluation time, upon receiving an example $x$, we would sample
$\parameters$, and then return $\classifier\left(x;\parameters\right)$.
If a stochastic classifier is not acceptable (as is often the case in
real-world applications), then one could heuristically convert it into a
deterministic one, \eg by weighted averaging or voting, which is made
significantly easier by its small size.
\end{titled-paragraph}

\section{Experiments}\label{sec:experiments}

\begin{table*}[t]

\centering

\caption{
  Error rates on the experiments of \secref{experiments:mnist}. The columns
  correspond to the four corrupted datasets of \citet{Chen:2017}.
}

\begin{tabular}{r|cccc|cccc}
  \toprule
  & \multicolumn{4}{c|}{Testing} & \multicolumn{4}{c}{Training} \\
  & Set 1 & Set 2 & Set 3 & Set 4 &
  Set 1 & Set 2 & Set 3 & Set 4\\
  \midrule
  Baseline ($\bar{\parameters}$) &
  2.58 & 2.66 & 2.01 & 2.52 &
  1.06 & 1.45 & 0.43 & 1.10 \\
  Baseline ($\parameters^{(T)}$) &
  1.77 & 1.92 & 1.77 & 1.75 &
  0.04 & 0.40 & 0.01 & 0.08 \\
  Lagrangian ($\bar{\parameters}$) &
  2.04 & 2.15 & 1.96 & 2.04 &
  0.42 & 0.70 & 0.30 & 0.49 \\
  Lagrangian (LP) &
  1.66 & 1.67 & 1.63 & 1.62 &
  0.00 & 0.01 & 0.00 & 0.00 \\
  \bottomrule
\end{tabular}

\label{tab:experiments-mnist}

\end{table*}

\begin{table*}[t]

\centering

\caption{
  Support sizes, test error rates, and ``equal opportunity'' values for the
  experiments of \secref{experiments:adult}. For the constraints, each reported
  number is the ratio of the positive prediction rate on positively-labeled
  members of the protected class, to the positive prediction rate on the set of
  all positively-labeled data. The constraints attempt to force this ratio to
  be at least $95\%$---quantities lower than this threshold violate the
  constraint, and are marked in \textbf{bold}.
}

\begin{tabular}{r|c|c|cccc|cccc}
  \toprule
  & & \multicolumn{5}{c|}{Testing} & \multicolumn{4}{c}{Training} \\
  & Support & Error & Female & Male & Black & White & Female & Male & Black & White \\
  \midrule
  Baseline ($\parameters^{(T)}$) & 1 &
  14.2\% & \textbf{89.5\%} & 102\% & \textbf{81.6\%} & 101\%  &
  \textbf{88.9\%} & 102\% & \textbf{82.8\%} & 101\% \\
  Lagrangian ($\bar{\parameters}$) & 100 &
  16.3\% & 114\% & 97.5\% & 126\% & 99.8\% &
  113\% & 97.8\% & 121\% & 99.7\% \\
  Lagrangian (LP) & 3 &
  15.5\% & 106\% & 99.0\% & 111\% & 101\% &
  104\% & 99.4\% & 105\% & 101\% \\
  Proxy ($\bar{\parameters}$) & 100 &
  14.4\% & \textbf{94.1\%} & 101\% & \textbf{94.9\%} & 100\% &
  \textbf{94.7\%} & 101\% & \textbf{94.5\%} & 100\% \\
  Proxy (LP) & 3 &
  14.2\% & \textbf{94.4\%} & 101\% & \textbf{94.9\%} & 100\% &
  95.0\% & 101\% & 95.0\% & 100\% \\
  \bottomrule
\end{tabular}

\label{tab:experiments-adult}

\end{table*}

We present two experiments: the first, on the robust MNIST problem of
\citet{Chen:2017}, tests the performance of the ``practical procedure'' of
\secref{overall} using the Lagrangian formulation (with the norms of the
Lagrange multipliers being unbounded, \ie $\Radius=\infty$), while the second,
a fairness problem on the UCI Adult dataset~\citep{UCI}, uses the
proxy-Lagrangian formulation. Both were implemented in
\tensorflow~\footnote{Source code:
\url{https://github.com/tensorflow/tensorflow/tree/r1.10/tensorflow/contrib/constrained_optimization}.}.

In both cases, the $\parameters$ and $\multipliers$-updates both used \adagrad
with the same initial learning rates. In the proxy-Lagrangian case, however,
the $\multipliers$-update (line $7$ of \algref{stochastic-proxy-lagrangian})
was performed in the log domain so that it would be multiplicative. To choose
the initial \adagrad learning rate, we performed a grid search over
powers-of-two, and chose the best model on a validation set. In all
experiments, the optimum was in the interior of the grid.

Our constrained optimization algorithms result in stochastic classifiers, and
we report results for \emph{both} the $\bar{\parameters}$ of
\thmrefs[or]{lagrangian-suboptimality-and-feasibility}{proxy-lagrangian-suboptimality-and-feasibility}
(in the Lagrangian or proxy-Lagrangian cases, respectively), \emph{and} the
optimal distribution found by the LP of \lemref{sparse-linear-program},
optimized on the training dataset.

\subsection{Robust Optimization}\label{sec:experiments:mnist}

In robust optimization, there are multiple objective functions
$\constraint{1},...,\constraint{m} : \Parameters \rightarrow \R$, and the goal
is to find a $\parameters \in \Parameters$ minimizing $\max_{i \in
\indices{\numconstraints}} \constraint{i}\left(\parameters\right)$. As was
discussed in \secref{related-work}, this can be re-written as a constrained
problem by introducing a slack variable, as in \eqref{related-work:robust}.

The task is the modified MNIST problem created by \citet{Chen:2017}, which is
based on four datasets, each of which is a version of MNIST that has been
corrupted in different ways. One would therefore hope that choosing
$\constraint{i}$ to be an empirical loss on the $i$th such dataset, and
optimizing the corresponding robust problem, will result in a classifier that
is ``robust'' to all four types of corruption.

We used a neural network with one $1024$-neuron hidden layer, and ReLu
activations. The four objective functions were the cross-entropy losses on the
corrupted datasets.
All models were trained for $50\,000$ iterations using a minibatch size of
$100$, and a $\parameters^{(t)}$ was extracted every $500$ iterations, yielding
a sequence of length $T=100$.

\begin{titled-paragraph}{Baselines}
For our baselines, we trained the neural network over the union of the four
datasets.
We report two variants: (i) the ``Uniform Distribution Baseline'' of
\citet{Chen:2017} is a stochastic classifier, uniformly sampled over the
$\parameters^{(t)}$s (like our $\bar{\parameters}$ classifier), and (ii) a
non-stochastic classifier taking its parameters from the last iterate
$\parameters^{(T)}$.
\end{titled-paragraph}

\begin{titled-paragraph}{Results}
\tabref{experiments-mnist} lists, for each of the corrupted datasets, the error
rates of the compared models on both the training and testing datasets.
Interestingly, although our proposed shrinking procedure is only guaranteed to
give a distribution over $m+1$ solutions, in these experiments it chose only
one.
Hence, the ``Lagrangian (LP)'' model of \tabref{experiments-mnist} is, like
``Baseline ($\parameters^{(T)}$)'', non-stochastic.

While we did not quite match the raw performance reported by
\citet{Chen:2017}'s algorithm, our results, and theirs, tell similar stories.
In particular, we can see that both of our algorithms outperformed their
natural baseline equivalents. In particular, the use of shrinking not only
greatly simplified the model, but also significantly improved performance.
\end{titled-paragraph}

\subsection{Equal Opportunity}\label{sec:experiments:adult}

These experiments were performed on the UCI Adult dataset, which consists of
census data including $14$ features such as age, gender, race, occupation, and
education. The goal was to predict whether income exceeds 50k/year. The dataset
contains $32\,561$ training examples, from which we split off 20\% to form a
validation set, and $16\,281$ testing examples.

We dropped the ``fnlwgt'' weighting feature, and processed the remaining
features as in \citet{Platt:1998}, yielding $120$ binary features, on which we
trained linear models. The objective was to minimize the average hinge loss,
subject to one 95\% equal opportunity~\citep{Hardt:2016} constraint in the
style of \citet{Goh:2016} for each ``protected class'': $\constraint{i}$ was
defined such that $\constraint{i}\left(\parameters\right) \le 0$ iff the
positive prediction rate on the set of positively-labeled examples for the
associated class was at least 95\% of the positive prediction rate on the set
of all positively-labeled examples.

When using proxy constraints, $\proxyconstraint{i}$ was taken to be a version
of $\constraint{i}$ with the indicator functions defining the positive
prediction rates replaced with hinge upper bounds. When not using proxy
constraints, the indicator-based constraints were dropped entirely, with these
upper bounds being used throughout.

All models were trained for $5\,000$ iterations with a minibatch size of $100$,
with a $\parameters^{(t)}$ being extracted every $50$ iterations, yielding a
sequence of length $T=100$.

\begin{titled-paragraph}{Baseline}
The baseline classifier was optimized to simply minimize training hinge loss.
Since this problem is unconstrained, we took the last iterate
$\parameters^{(T)}$.
\end{titled-paragraph}

\begin{titled-paragraph}{``Best-model'' Heuristic}
For hyperparameter tuning using a grid search, we needed to choose the ``best''
model on the validation set. Due to the presence of constraints, however, the
``best'' model was not necessarily that with the lowest validation error.
Instead, we used the following heuristic: the models were each ranked in terms
of their objective function value, as well as the magnitude of the $i$th
constraint violation (\ie $\max\left\{0,
\constraint{i}\left(\parameters\right)\right\}$). The ``score'' of each model
was then taken to be the maximal such rank, and the model with the lowest score
was chosen, with the objective function serving as a tiebreaker.
%
%
\end{titled-paragraph}

\begin{titled-paragraph}{Results}
\tabref{experiments-adult} lists the test error rates, (indicator-based)
constraint function values on both the training and testing datasets, and
support sizes of the stochastic classifiers, for each of the compared
algorithms.
%
%
The ``LP'' versions of our models, which were found using the shrinking
procedure of \lemref{sparse-linear-program}, uniformly outperformed their
$\bar{\parameters}$-analogues. We can see, however, that the generalization
issue discussed in \secref{related-work:alternatives} caused the
proxy-Lagrangian LP model to slightly violate the constraints on the testing
dataset, despite satisfying them on the training dataset.
The non-proxy algorithms satisfied all constraints, on both the training and
testing datasets, because there was sufficient ``room'' between the hinge upper
bound that they actually constrained, and the true constraint, to absorb the
generalization error. Inspection of the error rates, however, reveals that the
relaxed constraints were so overly-conservative that satisfying them
significantly damaged classification performance. In contrast, our
proxy-Lagrangian approach matched the classification performance of the
unconstrained baseline.
\end{titled-paragraph}

\newpage
\clearpage

\section*{Acknowledgments}

We thank Seungil You for initially posing the question of whether constraint
functions could be relaxed for only the $\parameters$-player, as well as Maya
Gupta, Taman Narayan and Serena Wang for helping to develop the heuristic used
to choose the ``best'' model on the validation dataset in
\secref{experiments:adult}.

\bibliography{main}
\bibliographystyle{plainnat}

\newpage
\clearpage
\onecolumn

\appendix

\showproofstrue

\section{Proofs of Sub\{optimality,feasibility\} Guarantees}\label{app:suboptimality}

\begin{thm}{lagrangian-suboptimality}
  \textbf{(Lagrangian Sub\{optimality,feasibility\})}
  Define $\Multipliers = \left\{ \multipliers \in \R_+^{\numconstraints} :
  \norm{\multipliers}_p \le \Radius \right\}$, and consider the Lagrangian of
  \eqref{constrained-problem} (\defref{lagrangian}).
  Suppose that $\parameters \in \Parameters$ and $\multipliers \in
  \Multipliers$ are random variables such that:
  \begin{equation}
    \replabel{eq:thm:lagrangian-suboptimality:epsilon}
    \max_{\multipliers^* \in \Multipliers} \expectation_{\parameters}\left[
    \lagrangian\left( \parameters, \multipliers^* \right) \right] -
    \inf_{\parameters^* \in \Parameters} \expectation_{\multipliers}\left[
    \lagrangian\left( \parameters^*, \multipliers \right) \right] \le \epsilon
  \end{equation}
  \ie $\parameters,\multipliers$ is an $\epsilon$-approximate Nash equilibrium.
  Then $\parameters$ is $\epsilon$-suboptimal:
  \begin{equation*}
    \expectation_{\parameters}\left[ \objective\left(\parameters\right) \right]
    \le \inf_{\parameters^* \in \Parameters : \forall i \in
    \indices{\numconstraints} . \constraint{i}\left(\parameters^*\right) \le 0}
    \objective\left(\parameters^*\right) + \epsilon
  \end{equation*}
  Furthermore, if $\multipliers$ is in the interior of $\Multipliers$, in the
  sense that $\norm{\bar{\multipliers}}_p < \Radius$ where $\bar{\multipliers}
  \defeq \expectation_{\multipliers}\left[ \multipliers \right]$, then
  $\parameters$ is $\epsilon / \left(\Radius -
  \norm{\bar{\multipliers}}_p\right)$-feasible:
  \begin{equation*}
    \norm{\left( \expectation_{\parameters} \left[
    \constraint{:}\left(\parameters\right) \right] \right)_+}_q \le
    \frac{\epsilon}{\Radius - \norm{\bar{\multipliers}}_p}
  \end{equation*}
  where $\constraint{:}\left(\parameters\right)$ is the
  $\numconstraints$-dimensional vector of constraint evaluations, and
  $\left(\cdot\right)_+$ takes the positive part of its argument, so that
  $\norm{\left( \expectation_{\parameters} \left[
  \constraint{:}\left(\parameters\right) \right] \right)_+}_q$ is the $q$-norm
  of the vector of expected constraint violations.
\end{thm}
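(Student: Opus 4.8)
The plan is to extract both the suboptimality and feasibility guarantees from the single $\epsilon$-approximate Nash inequality \eqref{thm:lagrangian-suboptimality:epsilon}, in each case by making a well-chosen substitution for the free variable on each side and exploiting the linearity of $\lagrangian$ in $\multipliers$. Write $\bar{\multipliers} \defeq \expectation_{\multipliers}\left[\multipliers\right]$ and let $v \defeq \expectation_{\parameters}\left[\constraint{:}\left(\parameters\right)\right]$ be the vector of expected constraint evaluations, so that linearity gives $\expectation_{\parameters}\left[\lagrangian\left(\parameters, \multipliers^*\right)\right] = \expectation_{\parameters}\left[\objective\left(\parameters\right)\right] + \inner{\multipliers^*}{v}$ and $\expectation_{\multipliers}\left[\lagrangian\left(\parameters^*, \multipliers\right)\right] = \lagrangian\left(\parameters^*, \bar{\multipliers}\right)$.

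For suboptimality, I would lower-bound the maximization term by evaluating at $\multipliers^* = 0 \in \Multipliers$, which yields exactly $\expectation_{\parameters}\left[\objective\left(\parameters\right)\right]$. For the infimum term, restricting to feasible comparators and using $\bar{\multipliers}_i \ge 0$ together with $\constraint{i}\left(\parameters^*\right) \le 0$ makes the penalty $\sum_i \bar{\multipliers}_i \constraint{i}\left(\parameters^*\right)$ nonpositive, so that $\inf_{\parameters^*} \lagrangian\left(\parameters^*, \bar{\multipliers}\right) \le \inf_{\parameters^* \text{ feasible}} \objective\left(\parameters^*\right)$. Substituting both into \eqref{thm:lagrangian-suboptimality:epsilon} gives the claimed $\epsilon$-suboptimality immediately.

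For feasibility, the more delicate part, I would instead choose $\multipliers^*$ to point along the violations. The key computation is that, since $\Multipliers$ consists of \emph{nonnegative} vectors of bounded $p$-norm, $\max_{\multipliers^* \in \Multipliers} \inner{\multipliers^*}{v} = \Radius \norm{\left(v\right)_+}_q$ (with $q$ conjugate to $p$): the $\R_+$ restriction annihilates the negative coordinates of $v$, and Hölder's inequality is tight on the positive ones. This rewrites the maximization term as $\expectation_{\parameters}\left[\objective\left(\parameters\right)\right] + \Radius \norm{\left(v\right)_+}_q$. For the infimum term, I would compare against the random iterate itself: since $\lagrangian\left(\parameters, \bar{\multipliers}\right) \ge \inf_{\parameters^*} \lagrangian\left(\parameters^*, \bar{\multipliers}\right)$ holds pointwise, taking expectations gives $\inf_{\parameters^*} \lagrangian\left(\parameters^*, \bar{\multipliers}\right) \le \expectation_{\parameters}\left[\objective\left(\parameters\right)\right] + \inner{\bar{\multipliers}}{v}$, and a second Hölder step bounds $\inner{\bar{\multipliers}}{v} \le \norm{\bar{\multipliers}}_p \norm{\left(v\right)_+}_q$. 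Inserting both estimates into \eqref{thm:lagrangian-suboptimality:epsilon}, the $\expectation_{\parameters}\left[\objective\left(\parameters\right)\right]$ terms cancel, leaving $\left(\Radius - \norm{\bar{\multipliers}}_p\right) \norm{\left(v\right)_+}_q \le \epsilon$, which rearranges to the stated bound whenever $\norm{\bar{\multipliers}}_p < \Radius$.

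I expect the feasibility direction to be the main obstacle, specifically getting the constant right: this hinges on the dual-norm identity $\max_{\multipliers^* \in \Multipliers} \inner{\multipliers^*}{v} = \Radius \norm{\left(v\right)_+}_q$, where the nonnegativity constraint must be tracked so that only $\left(v\right)_+$ survives, and on the observation that the deterministic infimum $\inf_{\parameters^*}$ can be controlled by expectation over the \emph{random} $\parameters$. The remainder is linearity and two applications of Hölder's inequality.
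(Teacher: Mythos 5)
Your proposal is correct and follows essentially the same route as the paper's proof: linearity of $\lagrangian$ in $\multipliers$ to pass to $\bar{\multipliers}$, the choices $\multipliers^* = 0$ and a feasible comparator for suboptimality, and the choice $\parameters^* = \parameters$ (controlling the infimum by the expectation over the random iterate) together with the dual-norm identity over the nonnegative $p$-norm ball and H\"older's inequality for feasibility. The only detail worth making explicit is that the bound $\inner{\bar{\multipliers}}{v} \le \norm{\bar{\multipliers}}_p \norm{\left(v\right)_+}_q$ uses the nonnegativity of $\bar{\multipliers}$ to first replace $v$ by $\left(v\right)_+$.
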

\begin{prf}{lagrangian-suboptimality}
  First notice that $\lagrangian$ is linear in $\multipliers$, so:
  \begin{equation}
    \label{eq:thm:lagrangian-suboptimality:equilibrium}
    \max_{\multipliers^* \in \Multipliers} \expectation_{\parameters}\left[
    \lagrangian\left( \parameters, \multipliers^* \right) \right] -
    \inf_{\parameters^* \in \Parameters} \lagrangian\left( \parameters^*,
    \bar{\multipliers} \right) \le \epsilon
  \end{equation}

  \begin{titled-paragraph}{Optimality}
    Choose $\parameters^*$ to be the optimal \emph{feasible} solution in
    \eqref{thm:lagrangian-suboptimality:equilibrium}, so that
    $\constraint{i}\left(\parameters^*\right) \le 0$ for all
    $i\in\indices{\numconstraints}$, and also choose $\multipliers^* = 0$, which
    combined with the definition of $\lagrangian$
    (\defref{lagrangian}) gives that:
    \begin{equation*}
      \expectation_{\parameters}\left[ \objective\left(\parameters\right) \right]
      - \objective\left(\parameters^*\right) \le \epsilon
    \end{equation*}
    which is the optimality claim.
  \end{titled-paragraph}

  \begin{titled-paragraph}{Feasibility}
    Choose $\parameters^* = \parameters$ in
    \eqref{thm:lagrangian-suboptimality:equilibrium}. By the definition of $\lagrangian$
    (\defref{lagrangian}):
    \begin{equation*}
      \max_{\multipliers^* \in \Multipliers} \sum_{i=1}^{\numconstraints}
      \multipliers_i^* \expectation_{\parameters} \left[
      \constraint{i}\left(\parameters\right) \right] -
      \sum_{i=1}^{\numconstraints} \bar{\multipliers}_i
      \expectation_{\parameters} \left[ \constraint{i}\left(\parameters\right)
      \right] \le \epsilon
    \end{equation*}
    Then by the definition of a dual norm, H\"older's inequality, and the
    assumption that $\norm{\bar{\multipliers}}_p < \Radius$:
    \begin{equation*}
      \Radius \norm{\left( \expectation_{\parameters} \left[
      \constraint{:}\left(\parameters\right) \right] \right)_+}_q - \norm{\bar{\multipliers}}_p
      \norm{ \left( \expectation_{\parameters} \left[
      \constraint{:}\left(\parameters\right) \right] \right)_+ }_q \le \epsilon
    \end{equation*}
    Rearranging terms gives the feasibility claim.
  \end{titled-paragraph}
\end{prf}

\begin{lem}{lagrangian-feasibility}
  In the context of \thmref{lagrangian-suboptimality}, suppose that there
  exists a $\parameters' \in \Parameters$ that satisfies all of the
  constraints, and does so with $q$-norm margin $\margin$, \ie
  $\constraint{i}\left(\parameters'\right) \le 0$ for all $i \in
  \indices{\numconstraints}$ and
  $\norm{\constraint{:}\left(\parameters'\right)}_q \ge \margin$. Then:
  \begin{equation*}
    \norm{ \bar{\multipliers} }_p \le \frac{\epsilon +
    \bound{\objective}}{\gamma}
  \end{equation*}
  where $\bound{\objective} \ge \sup_{\parameters \in \Parameters}
  \objective\left(\parameters\right) - \inf_{\parameters \in \Parameters}
  \objective\left(\parameters\right)$ is a bound on the range of the objective
  function $\objective$.
\end{lem}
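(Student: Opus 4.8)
The plan is to reuse the linearized form of the equilibrium condition, namely \eqref{thm:lagrangian-suboptimality:equilibrium}, which was established at the very start of the proof of \thmref{lagrangian-suboptimality} by exploiting the linearity of $\lagrangian$ in $\multipliers$: it lets us replace the random $\multipliers$ by its mean $\bar{\multipliers}$ while retaining $\max_{\multipliers^* \in \Multipliers} \expectation_{\parameters}\left[\lagrangian\left(\parameters,\multipliers^*\right)\right] - \inf_{\parameters^* \in \Parameters}\lagrangian\left(\parameters^*,\bar{\multipliers}\right) \le \epsilon$. The idea is then to substitute two judiciously chosen comparators: take $\multipliers^* = 0$ on the left, and $\parameters^* = \parameters'$ (the margin point) on the right.

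Concretely, $\multipliers^* = 0$ gives $\max_{\multipliers^* \in \Multipliers}\expectation_{\parameters}\left[\lagrangian\left(\parameters,\multipliers^*\right)\right] \ge \expectation_{\parameters}\left[\objective\left(\parameters\right)\right]$ by \defref{lagrangian}, while $\parameters^* = \parameters'$ gives $\inf_{\parameters^* \in \Parameters}\lagrangian\left(\parameters^*,\bar{\multipliers}\right) \le \objective\left(\parameters'\right) + \inner{\bar{\multipliers}}{\constraint{:}\left(\parameters'\right)}$. Substituting both into the equilibrium inequality and rearranging yields $\inner{\bar{\multipliers}}{-\constraint{:}\left(\parameters'\right)} \le \objective\left(\parameters'\right) - \expectation_{\parameters}\left[\objective\left(\parameters\right)\right] + \epsilon$. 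Since $\objective\left(\parameters'\right) \le \sup_{\parameters \in \Parameters}\objective\left(\parameters\right)$ and $\expectation_{\parameters}\left[\objective\left(\parameters\right)\right] \ge \inf_{\parameters \in \Parameters}\objective\left(\parameters\right)$, the difference of objective values is at most the range bound, so $\inner{\bar{\multipliers}}{-\constraint{:}\left(\parameters'\right)} \le \bound{\objective} + \epsilon$.

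It remains to lower bound the left-hand side by $\margin \norm{\bar{\multipliers}}_p$, after which dividing by $\margin$ delivers the claim. This is the only delicate step, and it is where the margin assumption does its work: because every coordinate of $\bar{\multipliers}$ is nonnegative and each slack $-\constraint{i}\left(\parameters'\right) \ge 0$, the quantity $\inner{\bar{\multipliers}}{-\constraint{:}\left(\parameters'\right)} = \sum_{i} \bar{\multipliers}_i \left(-\constraint{i}\left(\parameters'\right)\right)$ is a nonnegative combination of nonnegative slacks, and I would pair the $q$-norm margin on $\constraint{:}\left(\parameters'\right)$ against $\bar{\multipliers}$ in the dual $p$-norm to force it below. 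I expect the care here to lie in getting the norm pairing in the right direction; in the $p=1$ special case of \thmref{lagrangian-suboptimality-and-feasibility} this reduces to the elementary observation that a componentwise margin $-\constraint{i}\left(\parameters'\right) \ge \margin$ makes $\sum_{i} \bar{\multipliers}_i \left(-\constraint{i}\left(\parameters'\right)\right) \ge \margin \norm{\bar{\multipliers}}_1 \ge \margin \norm{\bar{\multipliers}}_p$. Everything preceding this bound is routine substitution into the already-linearized equilibrium condition.
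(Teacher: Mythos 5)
Your proof follows the paper's argument exactly: substitute $\multipliers^* = 0$ and $\parameters^* = \parameters'$ into the (linearized) equilibrium condition, bound the resulting objective difference by $\bound{\objective}$, and lower-bound $\inner{\bar{\multipliers}}{-\constraint{:}\left(\parameters'\right)}$ by $\gamma \norm{\bar{\multipliers}}_p$. Your caution about that final norm-pairing step is warranted---for general $p$ the stated $q$-norm margin hypothesis does not by itself yield the bound (H\"older runs in the wrong direction)---but the paper's own proof asserts the same inequality without further justification, and in the $p=1$, componentwise-margin case actually invoked by \thmref{lagrangian-suboptimality-and-feasibility} it holds exactly as you observe.
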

\begin{prf}{lagrangian-feasibility}
  Starting from \eqref{thm:lagrangian-suboptimality:epsilon} (in
  \thmref{lagrangian-suboptimality}), and choosing $\parameters^* =
  \parameters'$ and $\multipliers^* = 0$:
  \begin{align*}
    \epsilon \ge& \expectation_{\parameters}\left[
    \objective\left(\parameters\right) \right] -
    \expectation_{\multipliers}\left[ \objective\left(\parameters'\right) +
    \sum_{i=1}^{\numconstraints} \multipliers_i
    \constraint{i}\left(\parameters'\right) \right] \\
    \epsilon \ge& \expectation_{\parameters}\left[
    \objective\left(\parameters\right) - \inf_{\parameters' \in \Parameters}
    \objective\left(\parameters'\right) \right] - \left(
    \objective\left(\parameters'\right) - \inf_{\parameters' \in \Parameters}
    \objective\left(\parameters'\right) \right) + \gamma \norm{
    \bar{\multipliers} }_p \\
    \epsilon \ge& - \bound{\objective} + \gamma \norm{ \bar{\multipliers} }_p
  \end{align*}
  Solving for $\norm{ \bar{\multipliers} }_p$ yields the claim.
\end{prf}

\begin{thm}{proxy-lagrangian-suboptimality}
  \textbf{(Proxy-Lagrangian Sub\{optimality,feasibility\})}
  Let $\Matrixmultipliers$ be the set of all left-stochastic
  $\left(\numconstraints + 1\right) \times \left(\numconstraints + 1\right)$
  matrices (\ie $\Matrixmultipliers \defeq \left\{ \matrixmultipliers \in
  \R^{\left(m+1\right)\times\left(m+1\right)} : \forall i \in
  \indices{\numconstraints + 1} . \matrixmultipliers_{:, i} \in
  \Delta^{\numconstraints + 1} \right\}$), and consider the
  ``proxy-Lagrangians'' of \eqref{constrained-problem}
  (\defref{proxy-lagrangians}).
  Suppose that $\parameters \in \Parameters$ and $\multipliers \in
  \Multipliers$ are jointly distributed random variables such that:
  \begin{align}
    \replabel{eq:thm:proxy-lagrangian-suboptimality:equilibrium}
    \expectation_{\parameters,\multipliers}\left[
    \lagrangian_{\parameters}\left( \parameters, \multipliers \right) \right] -
    \inf_{\parameters^* \in \Parameters} \expectation_{\multipliers}\left[
    \lagrangian_{\parameters}\left( \parameters^*, \multipliers \right) \right]
    \le& \epsilon_{\parameters} \\
    \notag \max_{\matrixmultipliers^* \in \Matrixmultipliers}
    \expectation_{\parameters,\multipliers}\left[
    \lagrangian_{\multipliers}\left( \parameters, \matrixmultipliers^*
    \multipliers \right) \right] -
    \expectation_{\parameters,\multipliers}\left[
    \lagrangian_{\multipliers}\left( \parameters, \multipliers \right) \right]
    \le& \epsilon_{\multipliers}
  \end{align}
  \TODO{this is cumbersome}
  Define $\bar{\multipliers} \defeq \expectation_{\multipliers}\left[
  \multipliers \right]$, let $\left( \Omega, \mathcal{F}, P \right)$ be the
  probability space, and define a random variable $\bar{\parameters}$ such
  that:
  \begin{equation*}
    \probability\left\{ \bar{\parameters} \in S \right\} = \frac{
    \int_{\parameters^{-1}\left(S\right)} \multipliers_1\left(x\right)
    dP\left(x\right) }{ \int_{\Omega} \multipliers_1\left(x\right)
    dP\left(x\right) }
  \end{equation*}
  In words, $\bar{\parameters}$ is a version of $\parameters$ that has been
  resampled with $\multipliers_1$ being treated as an importance weight. In
  particular $\expectation_{\bar{\parameters}}\left[
  f\left(\bar{\parameters}\right) \right] =
  \expectation_{\parameters,\multipliers}\left[ \multipliers_1
  f\left(\parameters\right) \right] / \bar{\multipliers}_1$ for any
  $f:\Parameters \rightarrow \R$.
  Then $\bar{\parameters}$ is nearly-optimal:
  \begin{equation*}
    \expectation_{\bar{\parameters}}\left[
    \objective\left(\bar{\parameters}\right) \right] \le \inf_{\parameters^*
    \in \Parameters : \forall i \in \indices{\numconstraints} .
    \proxyconstraint{i}\left(\parameters^*\right) \le 0} \objective\left(
    \parameters^* \right) + \frac{\epsilon_{\parameters} +
    \epsilon_{\multipliers}}{\bar{\multipliers}_1}
  \end{equation*}
  and nearly-feasible:
  \begin{equation*}
    \norm{ \left( \expectation_{\bar{\parameters}}\left[
    \constraint{:}\left(\bar{\parameters}\right) \right] \right)_+ }_{\infty}
    \le \frac{\epsilon_{\multipliers}}{\bar{\multipliers}_1}
  \end{equation*}
  %
  %
  Notice the optimality inequality is weaker than it may appear, since the
  comparator in this equation is \emph{not} the optimal solution \wrt the
  constraints $\constraint{i}$, but rather \wrt the \emph{proxy} constraints
  $\proxyconstraint{i}$.
\end{thm}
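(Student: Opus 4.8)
The plan is to exploit the structure of the two regret conditions by plugging in carefully chosen comparators, and to convert everything into statements about $\bar{\parameters}$ through the importance-weight identity $\expectation_{\parameters,\multipliers}\left[\multipliers_1 f\left(\parameters\right)\right] = \bar{\multipliers}_1 \expectation_{\bar{\parameters}}\left[f\left(\bar{\parameters}\right)\right]$ recorded in the statement. This identity is the only bridge needed between $\multipliers_1$-weighted joint expectations and expectations over the resampled variable.

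For feasibility, I would apply the swap-regret condition with a single deterministic left-stochastic matrix $\matrixmultipliers^*$ that reroutes coordinate $1$ to coordinate $j+1$ and fixes every other coordinate. Since $\lagrangian_{\multipliers}$ depends only on coordinates $2,\dots,\numconstraints+1$, this swap changes $\lagrangian_{\multipliers}\left(\parameters,\multipliers\right)$ by exactly $\multipliers_1 \constraint{j}\left(\parameters\right)$, so the swap-regret inequality collapses to $\expectation_{\parameters,\multipliers}\left[\multipliers_1 \constraint{j}\left(\parameters\right)\right] \le \epsilon_{\multipliers}$. Applying the importance-weight identity with $f = \constraint{j}$ rewrites the left side as $\bar{\multipliers}_1 \expectation_{\bar{\parameters}}\left[\constraint{j}\left(\bar{\parameters}\right)\right]$; dividing by $\bar{\multipliers}_1$ and maximizing over $j \in \indices{\numconstraints}$ gives the claimed $\infty$-norm feasibility bound.

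For optimality, I would first invoke the external-regret condition on the $\parameters$-player using, as comparator, any $\parameters^*$ feasible for the proxy constraints. Because $\bar{\multipliers}_{i+1} \ge 0$ and $\proxyconstraint{i}\left(\parameters^*\right) \le 0$, the comparator term $\expectation_{\multipliers}\left[\lagrangian_{\parameters}\left(\parameters^*,\multipliers\right)\right] = \bar{\multipliers}_1 \objective\left(\parameters^*\right) + \sum_i \bar{\multipliers}_{i+1} \proxyconstraint{i}\left(\parameters^*\right)$ is at most $\bar{\multipliers}_1 \objective\left(\parameters^*\right)$, yielding $\expectation_{\parameters,\multipliers}\left[\lagrangian_{\parameters}\left(\parameters,\multipliers\right)\right] \le \bar{\multipliers}_1 \objective\left(\parameters^*\right) + \epsilon_{\parameters}$. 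Expanding $\lagrangian_{\parameters}$, the leading term is $\bar{\multipliers}_1 \expectation_{\bar{\parameters}}\left[\objective\left(\bar{\parameters}\right)\right]$; moving the proxy-constraint terms to the other side and using $\proxyconstraint{i} \ge \constraint{i}$ lower-bounds them by $\expectation_{\parameters,\multipliers}\left[\lagrangian_{\multipliers}\left(\parameters,\multipliers\right)\right]$. The last step is to show this quantity is not too negative, for which I would use the swap-regret condition a second time, now with the matrix collapsing all mass onto coordinate $1$: for this matrix $\lagrangian_{\multipliers}\left(\parameters,\matrixmultipliers^*\multipliers\right) = 0$, so $-\expectation_{\parameters,\multipliers}\left[\lagrangian_{\multipliers}\left(\parameters,\multipliers\right)\right] \le \epsilon_{\multipliers}$. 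Combining these, dividing by $\bar{\multipliers}_1$, and taking the infimum over proxy-feasible $\parameters^*$ produces the optimality bound.

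The main obstacle, and the real content of the argument, is recognizing which swap matrices to plug in: external regret alone would only control an aggregate over the constraints, whereas the per-coordinate ``$1 \to j+1$'' swaps are precisely what let swap regret certify each original constraint $\constraint{j}$ separately, and the ``collapse-to-$1$'' swap is what converts the unavoidable proxy-Lagrangian slack into the single additive term $\epsilon_{\multipliers}$. Everything else is bookkeeping with the importance-weight identity and the nonnegativity of $\multipliers$.
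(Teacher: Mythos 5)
Your proof is correct and follows essentially the same route as the paper's: the optimality argument (using the collapse-to-coordinate-$1$ swap matrix to get $\expectation_{\parameters,\multipliers}\left[\lagrangian_{\multipliers}\left(\parameters,\multipliers\right)\right] \ge -\epsilon_{\multipliers}$, then the external-regret condition against a proxy-feasible comparator, then the importance-weight identity) is identical, and your feasibility argument differs only cosmetically, in that you test the swap-regret condition against explicit ``reroute $1 \to j+1$'' matrices rather than evaluating the maximum over all left-stochastic matrices column-by-column and collapsing the resulting sums---both yield the same intermediate bound $\expectation_{\parameters,\multipliers}\left[\multipliers_1 \constraint{j}\left(\parameters\right)\right] \le \epsilon_{\multipliers}$. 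No gaps.
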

\begin{prf}{proxy-lagrangian-suboptimality}
  \begin{titled-paragraph}{Optimality}
    If we choose $M^*$ to be the matrix with its first row being all-one, and
    all other rows being all-zero, then $\lagrangian_{\multipliers}\left(
    \parameters, \matrixmultipliers^* \multipliers \right) = 0$, which shows
    that the first term in the LHS of the second line of
    \eqref{thm:proxy-lagrangian-suboptimality:equilibrium} is nonnegative.
    Hence, $- \expectation_{\parameters,\multipliers}\left[
    \lagrangian_{\multipliers}\left( \parameters, \multipliers \right) \right]
    \le \epsilon_{\multipliers}$, so by the definition of
    $\lagrangian_{\multipliers}$ (\defref{proxy-lagrangians}), and the fact
    that $\proxyconstraint{i} \ge \constraint{i}$:
    \begin{equation*}
      \expectation_{\parameters,\multipliers}\left[
      \sum_{i=1}^{\numconstraints} \multipliers_{i+1}
      \proxyconstraint{i}\left(\parameters\right) \right] \ge
      -\epsilon_{\multipliers}
    \end{equation*}
    Notice that $\lagrangian_{\parameters}$ is linear in $\multipliers$, so the
    first line of \eqref{thm:proxy-lagrangian-suboptimality:equilibrium},
    combined with the above result and the definition of
    $\lagrangian_{\parameters}$ (\defref{proxy-lagrangians}) becomes:
    \begin{equation}
      \label{eq:thm:proxy-lagrangian-suboptimality:both-epsilons}
      \expectation_{\parameters,\multipliers}\left[ \multipliers_1
      \objective\left(\parameters\right) \right] - \inf_{\parameters^* \in
      \Parameters} \left( \bar{\multipliers}_1
      \objective\left(\parameters^*\right) + \sum_{i=1}^{\numconstraints}
      \bar{\multipliers}_{i+1} \proxyconstraint{i}\left(\parameters^*\right)
      \right) \le \epsilon_{\parameters} + \epsilon_{\multipliers}
    \end{equation}
    Choose $\parameters^*$ to be the optimal solution that satisfies the
    \emph{proxy} constraints $\tilde{g}$, so that
    $\proxyconstraint{i}\left(\parameters^*\right) \le 0$ for all
    $i\in\indices{\numconstraints}$. Then:
    \begin{equation*}
      \expectation_{\parameters,\multipliers}\left[ \multipliers_1
      \objective\left(\parameters\right) \right] - \bar{\multipliers}_1
      \objective\left( \parameters^* \right) \le \epsilon_{\parameters} +
      \epsilon_{\multipliers}
    \end{equation*}
    which is the optimality claim.
  \end{titled-paragraph}

  \begin{titled-paragraph}{Feasibility}
    We'll simplify our notation by defining $\ell_1\left(\parameters\right)
    \defeq 0$ and $\ell_{i+1}\left(\parameters\right) \defeq
    \constraint{i}\left(\parameters\right)$ for
    $i\in\indices{\numconstraints}$, so that
    $\lagrangian_{\multipliers}\left(\parameters, \multipliers\right) =
    \inner{\multipliers}{\ell_{:}\left(\parameters\right)}$.
    Consider the first term in the LHS of the second line of
    \eqref{thm:proxy-lagrangian-suboptimality:equilibrium}:
    \begin{align*}
      \max_{\matrixmultipliers^* \in \Matrixmultipliers}
      \expectation_{\parameters,\multipliers}\left[
      \lagrangian_{\multipliers}\left( \parameters, \matrixmultipliers^*
      \multipliers \right) \right] =&
      \max_{\matrixmultipliers^* \in \Matrixmultipliers}
      \expectation_{\parameters,\multipliers}\left[ \inner{\matrixmultipliers^*
      \multipliers}{\ell_{:}\left(\parameters\right)} \right] \\
      =& \max_{\matrixmultipliers^* \in \Matrixmultipliers}
      \expectation_{\parameters,\multipliers}\left[
      \sum_{i=1}^{\numconstraints+1} \sum_{j=1}^{\numconstraints+1}
      \matrixmultipliers^*_{j,i} \multipliers_i \ell_j\left(\parameters\right)
      \right] \\
      =& \sum_{i=1}^{\numconstraints+1} \max_{\matrixmultipliers^*_{:,i} \in
      \Delta^{\numconstraints+1}} \sum_{j=1}^{\numconstraints+1}
      \expectation_{\parameters,\multipliers}\left[ \matrixmultipliers^*_{j,i}
      \multipliers_i \ell_j\left(\parameters\right) \right] \\
      =& \sum_{i=1}^{\numconstraints+1} \max_{j \in
      \indices{\numconstraints+1}}
      \expectation_{\parameters,\multipliers}\left[ \multipliers_i
      \ell_j\left(\parameters\right) \right]
    \end{align*}
    where we used the fact that, since $\matrixmultipliers^*$ is
    left-stochastic, each of its columns is a
    $\left(\numconstraints+1\right)$-dimensional multinoulli distribution.
    For the second term in the LHS of the second line of
    \eqref{thm:proxy-lagrangian-suboptimality:equilibrium}, we can use the fact
    that $\ell_1\left(\parameters\right) = 0$:
    \begin{equation*}
      \expectation_{\parameters,\multipliers}\left[
      \sum_{i=2}^{\numconstraints+1} \multipliers_i
      \ell_i\left(\parameters\right) \right] \le \sum_{i=2}^{\numconstraints+1}
      \max_{j \in \indices{\numconstraints+1}}
      \expectation_{\parameters,\multipliers}\left[ \multipliers_i
      \ell_j\left(\parameters\right) \right]
    \end{equation*}
    Plugging these two results into the second line of
    \eqref{thm:proxy-lagrangian-suboptimality:equilibrium}, the two sums
    collapse, leaving:
    \begin{equation*}
      \max_{i \in \indices{\numconstraints+1}}
      \expectation_{\parameters,\multipliers}\left[ \multipliers_1
      \ell_i\left(\parameters\right) \right] \le \epsilon_{\multipliers}
    \end{equation*}
    The definition of $\ell_i$ then yields the feasibility claim.
  \end{titled-paragraph}
\end{prf}

\begin{lem}{proxy-lagrangian-feasibility}
  In the context of \thmref{proxy-lagrangian-suboptimality}, suppose that there
  exists a $\parameters' \in \Parameters$ that satisfies all of the
  \emph{proxy} constraints with margin $\margin$, \ie
  $\proxyconstraint{i}\left(\parameters'\right) \le -\margin$ for all $i \in
  \indices{\numconstraints}$. Then:
  \begin{equation*}
    \bar{\multipliers}_1 \ge \frac{\gamma - \epsilon_{\parameters} -
    \epsilon_{\multipliers}}{\gamma + \bound{\objective}}
  \end{equation*}
  where $\bound{\objective} \ge \sup_{\parameters \in \Parameters}
  \objective\left(\parameters\right) - \inf_{\parameters \in \Parameters}
  \objective\left(\parameters\right)$ is a bound on the range of the objective
  function $\objective$.
\end{lem}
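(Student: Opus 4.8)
The plan is to reuse Equation~\ref{eq:thm:proxy-lagrangian-suboptimality:both-epsilons} from the proof of Theorem~\ref{thm:proxy-lagrangian-suboptimality}, which already combines both regret terms into a single inequality and isolates $\bar{\multipliers}_1$ as the weight on the objective. Writing $\epsilon \defeq \epsilon_{\parameters} + \epsilon_{\multipliers}$ for brevity, that inequality states
\[\expectation_{\parameters,\multipliers}\left[\multipliers_1 \objective(\parameters)\right] - \inf_{\parameters^*\in\Parameters}\left(\bar{\multipliers}_1 \objective(\parameters^*) + \sum_{i=1}^{\numconstraints} \bar{\multipliers}_{i+1}\proxyconstraint{i}(\parameters^*)\right) \le \epsilon.\]
First I would instantiate the infimum at the margin-feasible point $\parameters'$; since the infimum over $\Parameters$ is at most the value at $\parameters'$, this only weakens the left-hand side and produces a bound in which $\proxyconstraint{i}(\parameters')$ appears explicitly.

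Next I would exploit the two structural facts that drive the estimate. Because $\multipliers\in\Delta^{\numconstraints+1}$, the tail weights satisfy $\sum_{i=1}^{\numconstraints}\bar{\multipliers}_{i+1} = 1 - \bar{\multipliers}_1$; combining this with the margin hypothesis $\proxyconstraint{i}(\parameters')\le-\margin$ and $\bar{\multipliers}_{i+1}\ge 0$ gives $-\sum_{i=1}^{\numconstraints}\bar{\multipliers}_{i+1}\proxyconstraint{i}(\parameters') \ge \margin(1-\bar{\multipliers}_1)$. For the two objective terms I would use $\multipliers_1\ge 0$ to lower-bound $\expectation_{\parameters,\multipliers}[\multipliers_1 \objective(\parameters)]$ by $\bar{\multipliers}_1\inf_{\parameters}\objective(\parameters)$, and use $\bar{\multipliers}_1\ge 0$ together with $\objective(\parameters')\le\sup_{\parameters}\objective(\parameters)$ to lower-bound $-\bar{\multipliers}_1\objective(\parameters')$ by $-\bar{\multipliers}_1\sup_{\parameters}\objective(\parameters)$; by the definition of $\bound{\objective}$ their combined contribution is therefore at least $-\bar{\multipliers}_1\bound{\objective}$.

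Substituting these three lower bounds and collecting the $\bar{\multipliers}_1$ terms turns the inequality into
\[\margin - \bar{\multipliers}_1(\margin + \bound{\objective}) \le \epsilon,\]
and solving for $\bar{\multipliers}_1$ yields exactly the claimed bound $\bar{\multipliers}_1 \ge (\margin - \epsilon_{\parameters} - \epsilon_{\multipliers})/(\margin + \bound{\objective})$. There is no substantive obstacle here: the only care required is bookkeeping of inequality directions, since every simplification must lower-bound the left-hand side so that the $\le\epsilon$ conclusion is preserved. The single genuine idea, as opposed to routine estimation, is recognizing that the simplex constraint rewrites the sum of tail multipliers as $1-\bar{\multipliers}_1$, which is precisely what couples the margin term back to $\bar{\multipliers}_1$ and makes the final rearrangement into a lower bound on $\bar{\multipliers}_1$ possible.
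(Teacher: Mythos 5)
Your proposal is correct and follows essentially the same route as the paper: both start from Equation~\ref{eq:thm:proxy-lagrangian-suboptimality:both-epsilons}, instantiate the infimum at $\parameters'$, use the simplex identity $\sum_{i=1}^{\numconstraints}\bar{\multipliers}_{i+1}=1-\bar{\multipliers}_1$ together with the margin to extract $\margin(1-\bar{\multipliers}_1)$, and bound the two objective terms by $-\bar{\multipliers}_1\bound{\objective}$ before solving for $\bar{\multipliers}_1$. The only cosmetic difference is that the paper subtracts $\inf_{\parameters}\objective(\parameters)$ from both objective terms and drops a nonnegative expectation, whereas you bound the two terms by $\inf$ and $\sup$ separately; these are equivalent.
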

\begin{prf}{proxy-lagrangian-feasibility}
  Starting from \eqref{thm:proxy-lagrangian-suboptimality:both-epsilons} (in
  the proof of \thmref{proxy-lagrangian-suboptimality}), and choosing
  $\parameters^* = \parameters'$:
  \begin{equation*}
    \expectation_{\parameters,\multipliers}\left[ \multipliers_1
    \objective\left(\parameters\right) \right] - \left( \bar{\multipliers}_1
    \objective\left(\parameters'\right) + \sum_{i=1}^{\numconstraints}
    \bar{\multipliers}_{i+1} \proxyconstraint{i}\left(\parameters'\right)
    \right) \le \epsilon_{\parameters} + \epsilon_{\multipliers}
  \end{equation*}
  Since $\proxyconstraint{i}\left(\parameters'\right) \le -\margin$ for all $i
  \in \indices{\numconstraints}$:
  \begin{align*}
    \epsilon_{\parameters} + \epsilon_{\multipliers} \ge &
    \expectation_{\parameters,\multipliers}\left[ \multipliers_1
    \objective\left(\parameters\right) \right] - \bar{\multipliers}_1
    \objective\left(\parameters'\right) + \left(1 - \bar{\multipliers}_1\right)
    \gamma \\
    \ge & \expectation_{\parameters,\multipliers}\left[ \multipliers_1 \left(
    \objective\left(\parameters\right) - \inf_{\parameters' \in \Parameters}
    \objective\left(\parameters'\right) \right) \right] - \bar{\multipliers}_1
    \left( \objective\left(\parameters'\right) - \inf_{\parameters' \in
    \Parameters} \objective\left(\parameters'\right) \right) + \left(1 -
    \bar{\multipliers}_1\right) \gamma \\
    \ge & -\bar{\multipliers}_1 \bound{\objective} + \left( 1 -
    \bar{\multipliers}_1 \right) \gamma
  \end{align*}
  Solving for $\bar{\multipliers}_1$ yields the claim.
\end{prf}

\section{Proofs of Existence of Sparse Equilibria}\label{app:sparsity}

\begin{thm}{sparse-equilibrium}
  Consider a two player game, played on the compact Hausdorff spaces
  $\Parameters$ and $\Multipliers \subseteq \R^{\numconstraints}$.
  Imagine that the $\parameters$-player wishes to minimize
  $\lagrangian_{\parameters} : \Parameters \times \Multipliers \rightarrow \R$,
  and the $\multipliers$-player wishes to maximize $\lagrangian_{\multipliers}
  : \Parameters \times \Multipliers \rightarrow \R$, with both of these
  functions being continuous in $\parameters$ and linear in $\multipliers$.
  Then there exists a Nash equilibrium $\parameters$, $\multipliers$:
  \begin{align*}
    \expectation_{\parameters}\left[
    \lagrangian_{\parameters}\left(\parameters, \multipliers\right) \right] =&
    \min_{\parameters^* \in \Parameters}
    \lagrangian_{\parameters}\left(\parameters^*, \multipliers\right) \\
    \expectation_{\parameters}\left[
    \lagrangian_{\multipliers}\left(\parameters, \multipliers\right) \right] =&
    \max_{\multipliers^* \in \Multipliers} \expectation_{\parameters}\left[
    \lagrangian_{\multipliers}\left(\parameters, \multipliers^*\right) \right]
  \end{align*}
  where $\parameters$ is a random variable placing nonzero probability mass on
  \emph{at most} $\numconstraints+1$ elements of $\Parameters$, and
  $\multipliers \in \Multipliers$ is non-random.
\end{thm}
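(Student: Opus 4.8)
The plan is to produce the equilibrium in three stages: invoke a general existence theorem for continuous games to obtain a (possibly fully mixed) Nash equilibrium, exploit the linearity in $\multipliers$ to replace the $\multipliers$-player's mixed strategy by a single point of $\Multipliers$, and finally apply Carath\'eodory's theorem to reduce the $\parameters$-player's distribution to at most $\numconstraints+1$ atoms.

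First I would set up existence. Because $\lagrangian_{\multipliers}$ is linear in $\multipliers$, I can write $\lagrangian_{\multipliers}(\parameters,\multipliers) = \inner{\multipliers}{w(\parameters)} + c(\parameters)$ for a continuous $w : \Parameters \rightarrow \R^{\numconstraints}$ and a continuous scalar $c$ (linearity permits the additive, $\multipliers$-free term $c$), and analogously for $\lagrangian_{\parameters}$. Since $\Multipliers \subseteq \R^{\numconstraints}$ is compact and each payoff is continuous in $\parameters$ and linear, hence continuous, in $\multipliers$, both payoffs are jointly continuous on $\Parameters \times \Multipliers$. Glicksberg's generalization of the Kakutani fixed-point theorem~\citep{Glicksburg:1952} then guarantees a mixed Nash equilibrium: a $\Parameters$-valued random variable $\parameters$ together with a distribution $\nu$ on $\Multipliers$, with each player best-responding to the other.

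Next I would collapse the $\multipliers$-player to a point by setting $\bar{\multipliers} \defeq \expectation_{\nu}[\multipliers]$, which lies in $\Multipliers$ by convexity. Linearity of both payoffs in $\multipliers$ means the $\parameters$-player faces an identical objective against $\nu$ and against the single multiplier $\bar{\multipliers}$, so its best-response property carries over: the first equilibrium equation $\expectation_{\parameters}[\lagrangian_{\parameters}(\parameters,\bar{\multipliers})] = \min_{\parameters^* \in \Parameters}\lagrangian_{\parameters}(\parameters^*,\bar{\multipliers})$ holds, and in particular the distribution of $\parameters$ is supported on the minimizer set $\Parameters^* \defeq \argmin_{\parameters^* \in \Parameters}\lagrangian_{\parameters}(\parameters^*,\bar{\multipliers})$, which is nonempty, closed, and compact by continuity of $\lagrangian_{\parameters}(\cdot,\bar{\multipliers})$ over the compact $\Parameters$. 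Symmetrically, $\multipliers \mapsto \expectation_{\parameters}[\lagrangian_{\multipliers}(\parameters,\multipliers)]$ is affine and $\nu$ is supported on its maximizers, so the average $\bar{\multipliers}$ is again a maximizer, giving the second equilibrium equation. Hence $(\parameters,\bar{\multipliers})$ is an equilibrium with $\bar{\multipliers}$ non-random.

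Finally I would sparsify. The only statistic of the distribution of $\parameters$ that enters the $\multipliers$-player's condition is the mean $\bar{w} \defeq \expectation_{\parameters}[w(\parameters)] \in \R^{\numconstraints}$, since $\expectation_{\parameters}[\lagrangian_{\multipliers}(\parameters,\multipliers^*)] = \inner{\multipliers^*}{\bar{w}} + \expectation_{\parameters}[c(\parameters)]$ and the additive constant does not affect the maximizing $\multipliers^*$. As $\Parameters^*$ is compact and $w$ continuous, $w(\Parameters^*)$ is compact and its convex hull is therefore closed; since $\bar{w}$ lies in that hull, Carath\'eodory's theorem writes $\bar{w} = \sum_{j=1}^{\numconstraints+1} p_j w(\parameters_j)$ with $p \in \Delta^{\numconstraints+1}$ and each $\parameters_j \in \Parameters^*$. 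Replacing the distribution of $\parameters$ by $\sum_j p_j \delta_{\parameters_j}$ keeps the support inside $\Parameters^*$, preserving the first equation, and preserves $\bar{w}$, preserving the second, while now being supported on at most $\numconstraints+1$ points. I expect the main obstacle to be precisely this last verification in the non-zero-sum setting: one must check that matching a single $\numconstraints$-dimensional moment decouples the two players' conditions---the $\parameters$-player's being maintained for free by staying on $\Parameters^*$, the $\multipliers$-player's depending on $\parameters$ only through $\bar{w}$---and that the compactness of $\Parameters^*$ places $\bar{w}$ in the convex hull itself rather than merely its closure, which is what makes the clean $\numconstraints+1$ bound attainable.
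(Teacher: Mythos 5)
Your proposal is correct and follows essentially the same route as the paper's proof: Glicksberg for existence of a mixed equilibrium, linearity in $\multipliers$ to collapse the $\multipliers$-player to the mean $\bar{\multipliers}$ and to restrict the support of $\parameters$ to the minimizer set, and then Carath\'eodory applied to the $\numconstraints$-dimensional moment $\expectation_{\parameters}\left[w\left(\parameters\right)\right]$ (the paper phrases this as the expected gradient $\expectation\left[\grad_{\multipliers}\lagrangian_{\multipliers}\right]$ lying in the convex hull of the gradients over the support) to reduce to $\numconstraints+1$ atoms while preserving both players' conditions up to an additive constant. The only cosmetic difference is that the paper spells out the measure-theoretic step showing the support consists almost surely of exact minimizers, which you assert directly.
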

\begin{prf}{sparse-equilibrium}
  There are some extremely similar (and in some ways more general) results than
  this in the game theory
  literature~\egcite{Bohnenblust:1950,Parthasarathy:1975}, but for our
  particular (Lagrangian and proxy-Lagrangian) setting it's possible to provide
  a fairly straightforward proof.

  To begin with, \citet{Glicksburg:1952} gives that there exists a mixed
  strategy in the form of two random variables $\tilde{\parameters}$ and
  $\tilde{\multipliers}$:
  \begin{align*}
    \expectation_{\tilde{\parameters},\tilde{\multipliers}}\left[
    \lagrangian_{\parameters}\left(\tilde{\parameters},
    \tilde{\multipliers}\right) \right] =& \min_{\parameters^* \in \Parameters}
    \expectation_{\tilde{\multipliers}}\left[
    \lagrangian_{\parameters}\left(\parameters^*, \tilde{\multipliers}\right)
    \right] \\
    \expectation_{\tilde{\parameters},\tilde{\multipliers}}\left[
    \lagrangian_{\multipliers}\left(\tilde{\parameters},
    \tilde{\multipliers}\right) \right] =& \max_{\multipliers^* \in
    \Multipliers} \expectation_{\tilde{\parameters}}\left[
    \lagrangian_{\multipliers}\left(\tilde{\parameters}, \multipliers^*\right)
    \right]
  \end{align*}
  Since both functions are linear in $\tilde{\multipliers}$, we can define
  $\multipliers \defeq
  \expectation_{\tilde{\multipliers}}\left[\tilde{\multipliers}\right]$, and
  these conditions become:
  \begin{align*}
    \expectation_{\tilde{\parameters}}\left[
    \lagrangian_{\parameters}\left(\tilde{\parameters}, \multipliers\right)
    \right] =& \min_{\parameters^* \in \Parameters}
    \lagrangian_{\parameters}\left(\parameters^*, \multipliers\right) \defeq
    \ell_{\min} \\
    \expectation_{\tilde{\parameters}}\left[
    \lagrangian_{\multipliers}\left(\tilde{\parameters}, \multipliers\right)
    \right] =& \max_{\multipliers^* \in \Multipliers}
    \expectation_{\tilde{\parameters}}\left[
    \lagrangian_{\multipliers}\left(\tilde{\parameters}, \multipliers^*\right)
    \right]
  \end{align*}
  Let's focus on the first condition. Let $p_{\epsilon} \defeq
  \probability\left\{ \lagrangian_{\parameters}\left(\tilde{\parameters},
  \multipliers\right) \ge \ell_{\min} + \epsilon \right\}$, and notice that
  $p_{1/n}$ must equal zero for any $n \in \left\{1,2,\dots\right\}$ (otherwise
  we would contradict the above), implying by the countable additivity of
  measures that $\probability\left\{
    \lagrangian_{\parameters}\left(\tilde{\parameters}, \multipliers\right) =
    \ell_{\min} \right\} = 1$.
  We therefore assume henceforth, without loss of generality, that the support
  of $\tilde{\parameters}$ consists entirely of minimizers of
  $\lagrangian_{\parameters}\left(\cdot,\multipliers\right)$. Let $S \subseteq
  \Parameters$ be this support set.

  Define $G \defeq \left\{ \grad_{\tilde{\multipliers}}
  \lagrangian_{\multipliers}\left( \parameters', \multipliers \right) :
  \parameters' \in S \right\}$, and take $\bar{G}$ to be the closure of the
  convex hull of $G$.
  Since $\expectation_{\tilde{\parameters}}\left[ \grad_{\tilde{\multipliers}}
  \lagrangian_{\multipliers}\left(\tilde{\parameters},\multipliers\right)
  \right] \in \bar{G} \subseteq \R^{\numconstraints}$, we can write it as
  a convex combination of at most $\numconstraints+1$ extreme points of
  $\bar{G}$, or equivalently of $\numconstraints+1$ elements of $G$.
  Hence, we can take $\parameters$ to be a discrete random variable that places
  nonzero mass on at most $\numconstraints+1$ elements of $S$, and:
  \begin{equation*}
    \expectation_{\parameters}\left[ \grad_{\tilde{\multipliers}}
    \lagrangian_{\multipliers}\left(\parameters,\multipliers\right) \right] =
    \expectation_{\tilde{\parameters}}\left[ \grad_{\tilde{\multipliers}}
    \lagrangian_{\multipliers}\left(\tilde{\parameters},\multipliers\right)
    \right]
  \end{equation*}
  Linearity in $\lambda$ then implies that $\expectation_{\parameters}\left[
  \lagrangian_{\multipliers}\left(\parameters,\cdot\right) \right]$ and
  $\expectation_{\tilde{\parameters}}\left[
  \lagrangian_{\multipliers}\left(\tilde{\parameters},\cdot\right) \right]$ are
  the \emph{same function} (up to a constant), and therefore have the same
  maximizer(s). Correspondingly, $\parameters$ is supported on $S$, which
  contains only minimizers of
  $\lagrangian_{\parameters}\left(\cdot,\multipliers\right)$ by construction.
\end{prf}

\section{Proofs of Convergence Rates}\label{app:convergence}

\subsection{Non-Stochastic One-Player Convergence Rates}

\begin{thm}{mirror}
  \textbf{(Mirror Descent)}
  Let $f_1,f_2,\ldots : \Parameters \rightarrow \R$ be a sequence of convex
  functions that we wish to minimize on a compact convex set $\Parameters$.
  Suppose that the ``distance generating function'' $\Psi : \Parameters
  \rightarrow \R_+$ is nonnegative and $1$-strongly convex \wrt a norm
  $\norm{\cdot}$ with dual norm $\norm{\cdot}_*$.

  Define the step size $\eta = \sqrt{ \bound{\Psi} / T \bound{\subgrad}^2 }$,
  where $\bound{\Psi} \ge \max_{\parameters \in \Parameters}
  \Psi\left(\parameters\right)$ is a uniform upper bound on $\Psi$, and
  $\bound{\subgrad} \ge \norm{\subgrad f_t \left(\parameters^{(t)}\right)}_*$
  is a uniform upper bound on the norms of the subgradients.
  Suppose that we perform $T$ iterations of the following update, starting from
  $\parameters^{(1)} = \argmin_{\parameters \in \Parameters}
  \Psi\left(\parameters\right)$:
  \begin{align*}
    \tilde{\parameters}^{(t+1)} =& \grad \Psi^*\left( \grad \Psi\left(
    \parameters^{(t)} \right) - \eta \subgrad f_t\left( \parameters^{(t)}
    \right) \right) \\
    \parameters^{(t+1)} =& \argmin_{\parameters \in \Parameters}
    \bregman{\Psi}\left( \parameters \mid \tilde{\parameters}^{(t+1)} \right)
  \end{align*}
  where $\subgrad f_t\left(\parameters\right) \in \partial
  f_t(\parameters^{(t)})$ is a subgradient of $f_t$ at $\parameters$, and
  $\bregman{\Psi}\left(\parameters \mid \parameters'\right) \defeq
  \Psi\left(\parameters\right) - \Psi\left(\parameters'\right) - \inner{\grad
  \Psi\left(\parameters'\right)}{\parameters - \parameters'}$ is the Bregman
  divergence associated with $\Psi$. Then:
  \begin{equation*}
    \frac{1}{T} \sum_{t=1}^T f_t\left( \parameters^{(t)} \right) - \frac{1}{T}
    \sum_{t=1}^T f_t\left( \parameters^* \right) \le 2 \bound{\subgrad} \sqrt{
    \frac{\bound{\Psi}}{T} }
  \end{equation*}
  where $\parameters^* \in \Parameters$ is an arbitrary reference vector.
\end{thm}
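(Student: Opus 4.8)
The plan is to prove this as a standard online-convex-optimization regret bound: reduce the regret against the convex losses $f_t$ to regret against their linearizations, then control the latter with a one-step mirror-descent inequality built on the Bregman three-point identity, and finally telescope and optimize over $\eta$. First I would invoke convexity of each $f_t$ to get $f_t(\parameters^{(t)}) - f_t(\parameters^*) \le \inner{\subgrad f_t(\parameters^{(t)})}{\parameters^{(t)} - \parameters^*}$, reducing the problem to bounding the linearized regret $\sum_{t=1}^T \inner{\subgrad f_t(\parameters^{(t)})}{\parameters^{(t)} - \parameters^*}$.

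The technical heart is a per-round identity. Since $\grad\Psi^*$ inverts $\grad\Psi$, the mirror step satisfies $\grad\Psi(\tilde{\parameters}^{(t+1)}) = \grad\Psi(\parameters^{(t)}) - \eta\,\subgrad f_t(\parameters^{(t)})$. I would then apply the three-point identity
\[
  \bregman{\Psi}\left(x \mid z\right) = \bregman{\Psi}\left(x \mid y\right) + \bregman{\Psi}\left(y \mid z\right) + \inner{\grad\Psi\left(y\right) - \grad\Psi\left(z\right)}{x - y}
\]
with $x = \parameters^*$, $y = \parameters^{(t)}$, $z = \tilde{\parameters}^{(t+1)}$, and substitute $\grad\Psi(\parameters^{(t)}) - \grad\Psi(\tilde{\parameters}^{(t+1)}) = \eta\,\subgrad f_t(\parameters^{(t)})$, to obtain
\[
  \eta\inner{\subgrad f_t\left(\parameters^{(t)}\right)}{\parameters^{(t)} - \parameters^*} = \bregman{\Psi}\left(\parameters^* \mid \parameters^{(t)}\right) - \bregman{\Psi}\left(\parameters^* \mid \tilde{\parameters}^{(t+1)}\right) + \bregman{\Psi}\left(\parameters^{(t)} \mid \tilde{\parameters}^{(t+1)}\right).
\]
Because $\parameters^{(t+1)}$ is the Bregman projection of $\tilde{\parameters}^{(t+1)}$ onto the convex set $\Parameters$ and $\parameters^* \in \Parameters$, the generalized Pythagorean inequality gives $\bregman{\Psi}(\parameters^* \mid \tilde{\parameters}^{(t+1)}) \ge \bregman{\Psi}(\parameters^* \mid \parameters^{(t+1)})$, so the middle term may be replaced by $-\bregman{\Psi}(\parameters^* \mid \parameters^{(t+1)})$ at the cost of an inequality. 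For the ``stability'' term I would use $1$-strong convexity of $\Psi$ twice: summing $\bregman{\Psi}(\parameters^{(t)} \mid \tilde{\parameters}^{(t+1)})$ and $\bregman{\Psi}(\tilde{\parameters}^{(t+1)} \mid \parameters^{(t)})$ collapses their first-order terms into $\eta\inner{\subgrad f_t(\parameters^{(t)})}{\parameters^{(t)} - \tilde{\parameters}^{(t+1)}}$, while strong convexity lower-bounds the same sum by $\norm{\parameters^{(t)} - \tilde{\parameters}^{(t+1)}}^2$; combined with H\"older's inequality this yields $\norm{\parameters^{(t)} - \tilde{\parameters}^{(t+1)}} \le \eta\norm{\subgrad f_t(\parameters^{(t)})}_*$ and hence $\bregman{\Psi}(\parameters^{(t)} \mid \tilde{\parameters}^{(t+1)}) \le \eta^2\norm{\subgrad f_t(\parameters^{(t)})}_*^2$.

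Assembling these, dividing by $\eta$, and summing over $t$ telescopes the consecutive $\bregman{\Psi}(\parameters^* \mid \cdot)$ terms, leaving $\bregman{\Psi}(\parameters^* \mid \parameters^{(1)})/\eta$ plus the accumulated stability terms. Since $\parameters^{(1)} = \argmin_{\parameters}\Psi(\parameters)$, first-order optimality gives $\inner{\grad\Psi(\parameters^{(1)})}{\parameters^* - \parameters^{(1)}} \ge 0$, and with $\Psi \ge 0$ this bounds $\bregman{\Psi}(\parameters^* \mid \parameters^{(1)}) \le \Psi(\parameters^*) \le \bound{\Psi}$, while the discarded terminal divergence is nonnegative. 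Using $\norm{\subgrad f_t(\parameters^{(t)})}_* \le \bound{\subgrad}$ then gives $\sum_{t=1}^T \inner{\subgrad f_t(\parameters^{(t)})}{\parameters^{(t)} - \parameters^*} \le \bound{\Psi}/\eta + \eta T \bound{\subgrad}^2$, and plugging in $\eta = \sqrt{\bound{\Psi}/(T\bound{\subgrad}^2)}$ balances the two terms to $2\bound{\subgrad}\sqrt{\bound{\Psi}T}$. Dividing by $T$ and chaining with the convexity reduction yields the claimed $2\bound{\subgrad}\sqrt{\bound{\Psi}/T}$.

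I expect the main obstacle to be the one-step analysis, and specifically the stability bound on $\bregman{\Psi}(\parameters^{(t)} \mid \tilde{\parameters}^{(t+1)})$: this is where the non-Euclidean geometry enters, so one must work with the dual norm $\norm{\cdot}_*$ and the strong-convexity modulus rather than a plain Euclidean Cauchy--Schwarz, and track constants carefully to land the factor of $2$ rather than a larger one. The Pythagorean projection step is the other place demanding care, since it is precisely what lets the bound telescope while ignoring the constraint set $\Parameters$.
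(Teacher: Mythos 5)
Your proposal is correct, but it is worth noting that the paper does not actually prove this statement: its ``proof'' is a citation to Lemma 2 of Srebro et al.\ (2011), so you have supplied the standard argument that underlies the cited result rather than mirrored anything in the text. Checking your steps: the three-point identity with $x=\parameters^*$, $y=\parameters^{(t)}$, $z=\tilde{\parameters}^{(t+1)}$ together with $\grad\Psi(\parameters^{(t)})-\grad\Psi(\tilde{\parameters}^{(t+1)})=\eta\,\subgrad f_t(\parameters^{(t)})$ gives exactly the per-round decomposition you state; the generalized Pythagorean inequality legitimately replaces $\bregman{\Psi}(\parameters^*\mid\tilde{\parameters}^{(t+1)})$ by $\bregman{\Psi}(\parameters^*\mid\parameters^{(t+1)})$; and the symmetrized-divergence trick plus $1$-strong convexity and H\"older yields $\bregman{\Psi}(\parameters^{(t)}\mid\tilde{\parameters}^{(t+1)})\le\eta^2\norm{\subgrad f_t(\parameters^{(t)})}_*^2$. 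The initialization bound $\bregman{\Psi}(\parameters^*\mid\parameters^{(1)})\le\Psi(\parameters^*)\le\bound{\Psi}$ via first-order optimality and nonnegativity of $\Psi$ is also right, and the resulting $\bound{\Psi}/\eta+\eta T\bound{\subgrad}^2$ balances at the stated $\eta$ to give precisely the factor of $2$, so the constant lands where the theorem needs it (a sharper stability bound of $\tfrac{\eta^2}{2}\norm{\cdot}_*^2$ would even improve it to $\sqrt{2}$, but that is not required). The only implicit assumption is that $\grad\Psi^*$ inverts $\grad\Psi$ on the relevant domain, which the theorem's update rule already presupposes. In short: the paper buys brevity by outsourcing to the literature, while your route buys a self-contained and verifiable proof at the cost of a page of standard machinery; both arrive at the same bound.
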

\begin{prf}{mirror}
  Mirror descent~\citep{Nemirovski:1983,Beck:2003} dates back to 1983, but this
  particular statement is taken from Lemma 2 of \citet{Srebro:2011}.
\end{prf}

\begin{cor}{sgd}
  \textbf{(Gradient Descent)}
  Let $f_1,f_2,\ldots : \Parameters \rightarrow \R$ be a sequence of convex
  functions that we wish to minimize on a compact convex set $\Parameters$.

  Define the step size $\eta = \bound{\Parameters} / \bound{\subgrad} \sqrt{2
  T}$, where $\bound{\Parameters} \ge \max_{\parameters \in \Parameters}
  \norm{\parameters}_2$, and $\bound{\subgrad} \ge \norm{\subgrad f_t
  \left(\parameters^{(t)}\right)}_2$ is a uniform upper bound on the norms of
  the subgradients.
  Suppose that we perform $T$ iterations of the following update, starting from
  $\parameters^{(1)} = \argmin_{\parameters \in \Parameters}
  \norm{\parameters}_2$:
  \begin{equation*}
    \parameters^{(t+1)} = \Pi_{\Parameters}\left( \parameters^{(t)} - \eta
    \subgrad f_t\left( \parameters^{(t)} \right) \right)
  \end{equation*}
  where $\subgrad f_t\left(\parameters\right) \in \partial
  f_t(\parameters^{(t)})$ is a subgradient of $f_t$ at $\parameters$, and
  $\Pi_{\Parameters}$ projects its argument onto $\Parameters$ \wrt the
  Euclidean norm. Then:
  \begin{equation*}
    \frac{1}{T} \sum_{t=1}^T f_t\left( \parameters^{(t)} \right) - \frac{1}{T}
    \sum_{t=1}^T f_t\left( \parameters^* \right) \le \bound{\Parameters}
    \bound{\subgrad} \sqrt{\frac{2}{T}}
  \end{equation*}
  where $\parameters^* \in \Parameters$ is an arbitrary reference vector.
\end{cor}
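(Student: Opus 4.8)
The plan is to obtain this corollary as an immediate specialization of the Mirror Descent theorem (\thmref{mirror}), taking the distance generating function to be the squared Euclidean norm. Concretely, I would set $\Psi\left(\parameters\right) = \frac{1}{2}\norm{\parameters}_2^2$. This function is nonnegative and $1$-strongly convex with respect to the Euclidean norm $\norm{\cdot}_2$, whose dual norm is again $\norm{\cdot}_2$; consequently the subgradient bound $\bound{\subgrad}$ appearing in both statements refers to the same Euclidean quantity, and the hypotheses of \thmref{mirror} are met.

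The first substantive step is to verify that the abstract mirror descent update collapses to the familiar projected subgradient step. Since $\grad\Psi\left(\parameters\right) = \parameters$ and the convex conjugate satisfies $\grad\Psi^*\left(y\right) = y$, the forward map $\tilde{\parameters}^{(t+1)} = \grad\Psi^*\left(\grad\Psi\left(\parameters^{(t)}\right) - \eta\subgrad f_t\left(\parameters^{(t)}\right)\right)$ simplifies to $\tilde{\parameters}^{(t+1)} = \parameters^{(t)} - \eta\subgrad f_t\left(\parameters^{(t)}\right)$. Moreover, for this $\Psi$ the associated Bregman divergence is $\bregman{\Psi}\left(\parameters \mid \parameters'\right) = \frac{1}{2}\norm{\parameters - \parameters'}_2^2$, so the Bregman projection $\argmin_{\parameters \in \Parameters}\bregman{\Psi}\left(\parameters \mid \tilde{\parameters}^{(t+1)}\right)$ is exactly the Euclidean projection $\Pi_{\Parameters}\left(\tilde{\parameters}^{(t+1)}\right)$. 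Hence the two-stage update of \thmref{mirror} coincides with the single projected update stated in the corollary, and the initialization $\parameters^{(1)} = \argmin_{\parameters \in \Parameters}\Psi\left(\parameters\right)$ coincides with $\argmin_{\parameters \in \Parameters}\norm{\parameters}_2$.

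It then remains to match the constants. With $\bound{\Parameters} \ge \max_{\parameters \in \Parameters}\norm{\parameters}_2$ we may take $\bound{\Psi} = \frac{1}{2}\bound{\Parameters}^2$ as the uniform upper bound on $\Psi$. Substituting this into the Mirror Descent step size $\eta = \sqrt{\bound{\Psi}/T\bound{\subgrad}^2}$ recovers exactly $\eta = \bound{\Parameters}/\bound{\subgrad}\sqrt{2T}$, and substituting it into the Mirror Descent regret bound $2\bound{\subgrad}\sqrt{\bound{\Psi}/T}$ simplifies to $\bound{\Parameters}\bound{\subgrad}\sqrt{2/T}$, which is the claimed rate.

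Because every step is a mechanical instantiation of \thmref{mirror}, there is essentially no obstacle; the only points requiring care are confirming that the Euclidean mirror map is the identity and that its Bregman projection is the ordinary Euclidean projection (so that the abstract update truly reduces to projected subgradient descent), together with the bookkeeping of the factor $\frac{1}{2}$ that propagates through $\bound{\Psi}$ into both the step size and the final bound.
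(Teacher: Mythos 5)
Your proposal is correct and is exactly the paper's argument: the paper's proof consists of the single line ``take $\Psi\left(\parameters\right) = \norm{\parameters}_2^2/2$ in Theorem~\ref{thm:mirror},'' and your verification that the mirror map is the identity, that the Bregman projection is the Euclidean projection, and that $\bound{\Psi} = \bound{\Parameters}^2/2$ yields the stated step size and bound is just the bookkeeping the paper leaves implicit.
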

\begin{prf}{sgd}
  Follows from taking $\Psi\left(\parameters\right) = \norm{\parameters}_2^2 /
  2$ in \thmref{mirror}.
\end{prf}

\begin{cor}{matrix-multiplicative}
  Let $\Matrixmultipliers \defeq \left\{ \matrixmultipliers \in
  \R^{\matrixmultipliersize \times \matrixmultipliersize} : \forall i \in
  \indices{\matrixmultipliersize} . \matrixmultipliers_{:, i} \in
  \Delta^{\matrixmultipliersize} \right\}$ be the set of all left-stochastic
  $\matrixmultipliersize \times \matrixmultipliersize$ matrices, and let
  $f_1,f_2,\ldots : \Matrixmultipliers \rightarrow \R$ be a sequence of concave
  functions that we wish to maximize.

  Define the step size $\eta = \sqrt{ \matrixmultipliersize \ln
  \matrixmultipliersize / T \bound{\supgrad}^2 }$, where
  $\bound{\supgrad} \ge \norm{\supgrad f_t\left( \matrixmultipliers^{(t)} \right)}_{\infty, 2}$ is a
  uniform upper bound on the norms of the supergradients, and
  $\norm{\cdot}_{\infty, 2} \defeq \sqrt{ \sum_{i=1}^{\matrixmultipliersize}
  \norm{\matrixmultipliers_{:,i}}_{\infty}^2 }$ is the $L_{\infty,2}$ matrix
  norm.
  Suppose that we perform $T$ iterations of the following update
  starting from the matrix $\matrixmultipliers^{(1)}$ with all elements equal
  to $1/\matrixmultipliersize$:
  \begin{align*}
    \tilde{\matrixmultipliers}^{(t+1)} =& \matrixmultipliers^{(t)}
    \elementwiseproduct \elementwiseexp\left( \eta \supgrad f_t\left(
    \matrixmultipliers^{(t)} \right) \right) \\
    \matrixmultipliers_{:,i}^{(t+1)} =&
    \tilde{\matrixmultipliers}_{:,i}^{(t+1)} /
    \norm{\tilde{\matrixmultipliers}_{:,i}^{(t+1)}}_1
  \end{align*}
  where $-\supgrad f_t\left(\matrixmultipliers^{(t)}\right) \in \partial
  \left(-f_t(\matrixmultipliers^{(t)})\right)$, \ie $\supgrad
  f_t\left(\matrixmultipliers^{(t)}\right)$ is a supergradient of $f_t$ at
  $\matrixmultipliers^{(t)}$, and the multiplication and exponentiation in the
  first step are performed element-wise. Then:
  \begin{equation*}
    \frac{1}{T} \sum_{t=1}^T f_t\left( \matrixmultipliers^* \right) -
    \frac{1}{T} \sum_{t=1}^T f_t\left( \matrixmultipliers^{(t)} \right) \le 2
    \bound{\supgrad} \sqrt{ \frac{ \matrixmultipliersize \ln
    \matrixmultipliersize }{T} }
  \end{equation*}
  where $\matrixmultipliers^* \in \Matrixmultipliers$ is an arbitrary reference
  matrix.
\end{cor}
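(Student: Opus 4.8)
The claim is a matrix-valued multiplicative-weights regret bound, and the natural plan is to derive it as a corollary of the mirror descent bound (\thmref{mirror}) by choosing the right distance-generating function on the domain $\Matrixmultipliers$ of left-stochastic matrices. Observe that $\Matrixmultipliers$ is a product of $\matrixmultipliersize$ simplices: each column $\matrixmultipliers_{:,i}$ independently ranges over $\Delta^{\matrixmultipliersize}$. Since the updates in the statement act column-wise (exponentiated-gradient followed by an $\ell_1$-normalization of each column), and since $\ell_1$-renormalization is exactly the Bregman projection onto the simplex under the negative-entropy potential, the plan is to take
\[
  \Psi(\matrixmultipliers) \defeq \sum_{i=1}^{\matrixmultipliersize} \sum_{j=1}^{\matrixmultipliersize} \matrixmultipliers_{j,i} \ln \matrixmultipliers_{j,i} + \ln \matrixmultipliersize,
\]
i.e.\ the sum of per-column negative entropies, shifted to be nonnegative. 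The additive shift $\ln\matrixmultipliersize$ makes $\Psi \ge 0$ and gives the uniform bound $\bound{\Psi} = \matrixmultipliersize \ln \matrixmultipliersize$, since each column contributes at most $\ln \matrixmultipliersize$ to the entropy range and there are $\matrixmultipliersize$ columns.

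\textbf{Matching the update and the strong-convexity constant.}
First I would verify that the two-step update in the statement is precisely the mirror-descent update for this $\Psi$: the map $\grad\Psi^*(\grad\Psi(\matrixmultipliers) - \eta\,\supgrad f_t)$ is the element-wise product $\matrixmultipliers \elementwiseproduct \elementwiseexp(\eta\,\supgrad f_t)$ (mirror descent on a sign-flipped objective for maximization, hence $+\eta$), and the subsequent Bregman projection $\argmin \bregman{\Psi}(\cdot \mid \tilde{\matrixmultipliers})$ onto the product of simplices decouples across columns and reduces to dividing each column by its $\ell_1$-norm. Second, I would establish the strong-convexity constant. Negative entropy on a single simplex is $1$-strongly convex \wrt $\norm{\cdot}_1$ by Pinsker's inequality; summing over the $\matrixmultipliersize$ columns, $\Psi$ is $1$-strongly convex \wrt the norm $\norm{\matrixmultipliers} \defeq \sqrt{\sum_i \norm{\matrixmultipliers_{:,i}}_1^2}$, whose dual is exactly the $\norm{\cdot}_{\infty,2}$ norm appearing in the statement. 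This is the crucial alignment: the dual norm controlling $\bound{\supgrad}$ must match the norm in which $\Psi$ is strongly convex.

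\textbf{Concluding and the main obstacle.}
With $\Psi$ nonnegative and $1$-strongly convex \wrt $\norm{\cdot}$, with $\bound{\Psi} = \matrixmultipliersize \ln \matrixmultipliersize$ and $\bound{\supgrad}$ a bound on the supergradient norms in the dual norm $\norm{\cdot}_{\infty,2}$, applying \thmref{mirror} (with $f_t$ replaced by $-f_t$, and the sign convention turning minimization into maximization) gives the average regret bound $2\bound{\supgrad}\sqrt{\bound{\Psi}/T} = 2\bound{\supgrad}\sqrt{\matrixmultipliersize \ln \matrixmultipliersize / T}$, which is exactly the claimed inequality. The initializer $\matrixmultipliers^{(1)} = \argmin_{\matrixmultipliers}\Psi$ is the uniform matrix with all entries $1/\matrixmultipliersize$, matching the stated starting point. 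The step size $\eta = \sqrt{\bound{\Psi}/T\bound{\supgrad}^2} = \sqrt{\matrixmultipliersize\ln\matrixmultipliersize / T\bound{\supgrad}^2}$ also matches. I expect the main obstacle to be the strong-convexity constant over the \emph{product} of simplices: one must check that summing $\matrixmultipliersize$ independent $1$-strongly-convex entropies yields $1$-strong convexity \wrt the aggregated norm $\sqrt{\sum_i \norm{\cdot_{:,i}}_1^2}$ (rather than picking up a spurious factor of $\matrixmultipliersize$), and that this aggregated norm's dual is correctly $\norm{\cdot}_{\infty,2}$. Getting the norm pairing exactly right — so that no extra dimension-dependent factor leaks into $\bound{\supgrad}$ — is where the accounting must be done carefully.
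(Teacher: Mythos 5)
Your proposal takes essentially the same route as the paper's proof: the same column-wise negative-entropy potential on the product of simplices, Pinsker's inequality applied per column to get $1$-strong convexity \wrt the $L_{1,2}$ norm (whose dual is $L_{\infty,2}$), identification of the column normalization as the Bregman projection, and a final appeal to \thmref{mirror} with $\bound{\Psi} = \matrixmultipliersize \ln \matrixmultipliersize$. The one small slip is the additive shift: to make $\Psi$ nonnegative you need to add $\matrixmultipliersize \ln \matrixmultipliersize$ (not $\ln \matrixmultipliersize$), since each of the $\matrixmultipliersize$ columns can contribute $-\ln\matrixmultipliersize$; this is consistent with the bound $\bound{\Psi} = \matrixmultipliersize \ln \matrixmultipliersize$ you already use, so the final result is unaffected.
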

\begin{prf}{matrix-multiplicative}
  Define $\Psi:\Matrixmultipliers \rightarrow \R \defeq \matrixmultipliersize
  \ln \matrixmultipliersize + \sum_{i,j \in \indices{\matrixmultipliersize}}
  \matrixmultipliers_{i,j} \ln \matrixmultipliers_{i,j}$ as
  $\matrixmultipliersize \ln \matrixmultipliersize$ plus the negative Shannon
  entropy, applied to its (matrix) argument element-wise
  ($\matrixmultipliersize \ln \matrixmultipliersize$ is added to make $\Psi$
  nonnegative on $\Matrixmultipliers$). As in the vector setting, the resulting
  mirror descent update will be (element-wise) multiplicative.

  The Bregman divergence satisfies:
  \begin{align}
    \notag \bregman{\Psi}\left(\matrixmultipliers \vert
    \matrixmultipliers'\right) =& \Psi\left(\matrixmultipliers\right) -
    \Psi\left(\matrixmultipliers'\right) - \inner{\nabla
    \Psi\left(\matrixmultipliers'\right)}{\matrixmultipliers -
    \matrixmultipliers'} \\
    \label{eq:cor:matrix-multiplicative:bregman} =&
    \norm{\matrixmultipliers'}_{1,1} - \norm{\matrixmultipliers}_{1,1} +
    \sum_{i=1}^{\matrixmultipliersize} D_{KL}\left( \matrixmultipliers_{:,i}
    \Vert \matrixmultipliers_{:,i}' \right)
  \end{align}
  where $\norm{\matrixmultipliers}_{1,1} = \sum_{i=1}^{\matrixmultipliersize}
  \norm{\matrixmultipliers_{:,i}}_1$ is the $L_{1,1}$ matrix norm.
  This incidentally shows that one projects onto $\Matrixmultipliers$ \wrt
  $\bregman{\Psi}$ by projecting each column \wrt the KL divergence, \ie by
  normalizing the columns.

  By Pinsker's inequality (applied to each column of an $\matrixmultipliers \in
  \Matrixmultipliers$):
  \begin{equation*}
    \norm{\matrixmultipliers - \matrixmultipliers'}_{1,2}^2 \le 2
    \sum_{i=1}^{\matrixmultipliersize} D_{KL}\left( \matrixmultipliers_{:,i}
    \Vert \matrixmultipliers_{:,i}' \right)
  \end{equation*}
  where $\norm{\matrixmultipliers}_{1,2} =
  \sqrt{\sum_{i=1}^{\matrixmultipliersize}
  \norm{\matrixmultipliers_{:,i}}_1^2}$ is the $L_{1,2}$ matrix norm.
  Substituting this into \eqref{cor:matrix-multiplicative:bregman}, and using
  the fact that $\norm{\matrixmultipliers}_{1,1} = \matrixmultipliersize$ for
  all $\matrixmultipliers \in \Matrixmultipliers$, we have that for all
  $\matrixmultipliers,\matrixmultipliers' \in \Matrixmultipliers$:
  \begin{equation*}
    \bregman{\Psi}\left(\matrixmultipliers \vert \matrixmultipliers'\right) \ge
    \frac{1}{2} \norm{\matrixmultipliers - \matrixmultipliers'}_{1,2}^2
  \end{equation*}
  which shows that $\Psi$ is $1$-strongly convex \wrt the $L_{1,2}$ matrix
  norm.
  The dual norm of the $L_{1,2}$ matrix norm is the $L_{\infty,2}$ norm, which
  is the last piece needed to apply \thmref{mirror}, yielding the claimed
  result.
\end{prf}

\begin{lem}{internal-regret}
  Let $\Multipliers \defeq \Delta^{\matrixmultipliersize}$ be the
  $\matrixmultipliersize$-dimensional simplex, define $\Matrixmultipliers
  \defeq \left\{ \matrixmultipliers \in \R^{\matrixmultipliersize \times
  \matrixmultipliersize} : \forall i \in \indices{\matrixmultipliersize} .
  \matrixmultipliers_{:, i} \in \Delta^{\matrixmultipliersize} \right\}$ as the
  set of all left-stochastic $\matrixmultipliersize \times
  \matrixmultipliersize$ matrices, and take $f_1,f_2,\ldots : \Multipliers
  \rightarrow \R$ to be a sequence of concave functions that we wish to
  maximize.

  Define the step size $\eta = \sqrt{ \matrixmultipliersize \ln
  \matrixmultipliersize / T \bound{\supgrad}^2 }$, where
  $\bound{\supgrad} \ge \norm{\supgrad f_t\left(\multipliers^{(t)}\right) }_{\infty}$ is a
  uniform upper bound on the $\infty$-norms of the supergradients.
  Suppose that we perform $T$ iterations of the following update, starting from
  the matrix $\matrixmultipliers^{(1)}$ with all elements equal to
  $1/\matrixmultipliersize$:
  \begin{align*}
    \multipliers^{(t)} =& \fix \matrixmultipliers^{(t)} \\
    \deltamatrix^{(t)} =& \left( \supgrad f_t\left(\multipliers^{(t)}\right)
    \right) \left( \multipliers^{(t)} \right) ^T \\
    \tilde{\matrixmultipliers}^{(t+1)} =& \matrixmultipliers^{(t)}
    \elementwiseproduct \elementwiseexp\left( \eta \deltamatrix^{(t)} \right)
    \\
    \matrixmultipliers_{:,i}^{(t+1)} =&
    \tilde{\matrixmultipliers}_{:,i}^{(t+1)} /
    \norm{\tilde{\matrixmultipliers}_{:,i}^{(t+1)}}_1
  \end{align*}
  where $\fix \matrixmultipliers$ is a stationary distribution of
  $\matrixmultipliers$ (\ie a $\multipliers \in \Multipliers$ such that
  $\matrixmultipliers \multipliers = \multipliers$---such always exists, since
  $\matrixmultipliers$ is left-stochastic), $-\supgrad
  f_t\left(\multipliers^{(t)}\right) \in \partial
  \left(-f_t(\multipliers^{(t)})\right)$, \ie $\supgrad
  f_t\left(\multipliers^{(t)}\right)$ is a supergradient of $f_t$ at
  $\multipliers^{(t)}$, and the multiplication and exponentiation of the third
  step are performed element-wise. Then:
  \begin{equation*}
    \frac{1}{T} \sum_{t=1}^T f_t\left( \matrixmultipliers^* \multipliers^{(t)}
    \right) - \frac{1}{T} \sum_{t=1}^T f_t\left( \multipliers^{(t)} \right) \le
    2 \bound{\supgrad} \sqrt{ \frac{ \matrixmultipliersize \ln
    \matrixmultipliersize }{T} }
  \end{equation*}
  where $\matrixmultipliers^* \in \Matrixmultipliers$ is an arbitrary
  left-stochastic reference matrix.
\end{lem}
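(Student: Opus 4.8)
The plan is to recognize this as the standard swap-regret-to-external-regret reduction (in the style of Gordon, Greenwald and Marks, and of Blum and Mansour), where the external-regret subroutine over the space $\Matrixmultipliers$ of left-stochastic matrices is supplied by \corref{matrix-multiplicative}. The whole argument reduces the swap regret of the $\multipliers^{(t)}$ sequence to the external regret of the $\matrixmultipliers^{(t)}$ sequence against a sequence of \emph{linear} surrogate objectives on $\Matrixmultipliers$.

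First I would exploit concavity. Writing $g^{(t)} \defeq \supgrad f_t\left(\multipliers^{(t)}\right)$, the supergradient inequality gives $f_t\left(\matrixmultipliers^* \multipliers^{(t)}\right) - f_t\left(\multipliers^{(t)}\right) \le \inner{g^{(t)}}{\matrixmultipliers^* \multipliers^{(t)} - \multipliers^{(t)}}$ for every $t$. The crucial step is then to invoke the fixed-point property $\multipliers^{(t)} = \fix \matrixmultipliers^{(t)}$, i.e.\ $\matrixmultipliers^{(t)} \multipliers^{(t)} = \multipliers^{(t)}$, so that the subtracted $\multipliers^{(t)}$ may be rewritten as $\matrixmultipliers^{(t)} \multipliers^{(t)}$. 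Using the Frobenius-inner-product identity $\inner{g^{(t)}}{\matrixmultipliers \multipliers^{(t)}} = \inner{\matrixmultipliers}{g^{(t)} \left(\multipliers^{(t)}\right)^T} = \inner{\matrixmultipliers}{\deltamatrix^{(t)}}$, with $\deltamatrix^{(t)} = g^{(t)} \left(\multipliers^{(t)}\right)^T$ exactly as in the algorithm, this becomes
\[
  f_t\left(\matrixmultipliers^* \multipliers^{(t)}\right) - f_t\left(\multipliers^{(t)}\right) \le \inner{\matrixmultipliers^*}{\deltamatrix^{(t)}} - \inner{\matrixmultipliers^{(t)}}{\deltamatrix^{(t)}}.
\]
Summing over $t$ and dividing by $T$ reduces the claimed swap-regret bound to bounding the external regret of the $\matrixmultipliers^{(t)}$'s against the linear functions $\tilde{f}_t\left(\matrixmultipliers\right) \defeq \inner{\matrixmultipliers}{\deltamatrix^{(t)}}$.

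Next I would observe that the update in the statement is \emph{precisely} the \corref{matrix-multiplicative} update applied to these $\tilde{f}_t$: each $\tilde{f}_t$ is linear, so its supergradient is the constant matrix $\deltamatrix^{(t)}$, and the multiplicative step $\tilde{\matrixmultipliers}^{(t+1)} = \matrixmultipliers^{(t)} \elementwiseproduct \elementwiseexp\left(\eta \deltamatrix^{(t)}\right)$ followed by column normalization matches verbatim, with the identical step size. The only remaining check is that the corollary's supergradient bound is met by the lemma's $\bound{\supgrad}$. For this I would compute $\norm{\deltamatrix^{(t)}}_{\infty,2}$: the $i$-th column of $\deltamatrix^{(t)}$ equals $\multipliers_i^{(t)} g^{(t)}$, so $\norm{\deltamatrix^{(t)}}_{\infty,2} = \norm{g^{(t)}}_{\infty} \norm{\multipliers^{(t)}}_2 \le \norm{g^{(t)}}_{\infty} \le \bound{\supgrad}$, using $\norm{\multipliers^{(t)}}_2 \le \norm{\multipliers^{(t)}}_1 = 1$ on the simplex. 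Thus \corref{matrix-multiplicative} yields $\frac{1}{T} \sum_{t=1}^T \tilde{f}_t\left(\matrixmultipliers^*\right) - \frac{1}{T} \sum_{t=1}^T \tilde{f}_t\left(\matrixmultipliers^{(t)}\right) \le 2 \bound{\supgrad} \sqrt{\matrixmultipliersize \ln \matrixmultipliersize / T}$, which combined with the reduction above is exactly the claim.

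The main subtlety---rather than a genuine obstacle---is the interplay between the fixed-point choice $\multipliers^{(t)} = \fix \matrixmultipliers^{(t)}$ and the rank-one form of $\deltamatrix^{(t)}$: it is precisely this pairing that converts a swap-regret quantity into the external regret of a linear game, while simultaneously keeping the surrogate supergradient norm controlled by $\bound{\supgrad}$ (the $\norm{\multipliers^{(t)}}_2 \le 1$ factor only helps). I would also note in passing that the stationary distribution is guaranteed to exist, since every left-stochastic matrix admits a fixed point, as the statement already records.
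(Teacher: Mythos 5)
Your proposal is correct and follows essentially the same route as the paper: both reduce the swap regret of the $\multipliers^{(t)}$ sequence to the external regret of the $\matrixmultipliers^{(t)}$ sequence by combining the supergradient inequality with the fixed-point identity $\matrixmultipliers^{(t)}\multipliers^{(t)} = \multipliers^{(t)}$, recognizing that $\deltamatrix^{(t)} = \supgrad f_t\left(\multipliers^{(t)}\right)\left(\multipliers^{(t)}\right)^T$ serves as the supergradient fed to \corref{matrix-multiplicative}, and controlling $\norm{\deltamatrix^{(t)}}_{\infty,2}$ by $\norm{\supgrad f_t\left(\multipliers^{(t)}\right)}_{\infty}$ via $\norm{\multipliers^{(t)}}_2 \le 1$. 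The only cosmetic difference is that you pass the explicit linear surrogates $\inner{\cdot}{\deltamatrix^{(t)}}$ to the corollary, whereas the paper uses $\tilde{f}_t\left(\matrixmultipliers\right) = f_t\left(\matrixmultipliers\multipliers^{(t)}\right)$ directly; the two are interchangeable since the corollary only consumes the iterates and the supergradients, which coincide.
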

\begin{prf}{internal-regret}
  This algorithm is an instance of that contained in Figure 1 of
  \citet{Gordon:2008}.

  Define $\tilde{f}_t\left(\matrixmultipliers\right) \defeq
  f_t\left(\matrixmultipliers^{(t)} \multipliers^{(t)}\right)$. Observe that
  since $\supgrad f_t\left(\multipliers^{(t)}\right)$ is a supergradient of
  $f_t$ at $\multipliers^{(t)}$, and $\matrixmultipliers^{(t)}
  \multipliers^{(t)} = \multipliers^{(t)}$:
  \begin{align*}
    f_t\left(\tilde{\matrixmultipliers} \multipliers^{(t)}\right) \le &
    f_t\left(\matrixmultipliers^{(t)} \multipliers^{(t)}\right) +
    \inner{\supgrad
    f_t\left(\multipliers^{(t)}\right)}{\tilde{\matrixmultipliers}
    \multipliers^{(t)} - \matrixmultipliers^{(t)} \multipliers^{(t)}} \\
    \le & f_t\left(\matrixmultipliers^{(t)} \multipliers^{(t)}\right) +
    \deltamatrix^{(t)} \cdot \left( \tilde{\matrixmultipliers} -
    \matrixmultipliers^{(t)} \right)
  \end{align*}
  where the matrix product on the last line is performed element-wise.  This
  shows that $\deltamatrix^{(t)}$ is a supergradient of $\tilde{f}_t$ at
  $\matrixmultipliers^{(t)}$, from which we conclude that the final two steps
  of the update are performing the algorithm of \corref{matrix-multiplicative},
  so:
  \begin{equation*}
    \frac{1}{T} \sum_{t=1}^T \tilde{f}_t\left( \matrixmultipliers^* \right) -
    \frac{1}{T} \sum_{t=1}^T \tilde{f}_t\left( \matrixmultipliers^{(t)} \right)
    \le 2 \bound{\supgrad} \sqrt{ \frac{ \matrixmultipliersize \ln
    \matrixmultipliersize }{T} }
  \end{equation*}
  where the $\bound{\supgrad}$ of \corref{matrix-multiplicative} is a uniform
  upper bound on the $L_{\infty,2}$ matrix norms of the $\deltamatrix^{(t)}$s.
  However, by the definition of $\deltamatrix^{(t)}$ and the fact that
  $\multipliers^{(t)} \in \Delta^{\matrixmultipliersize}$, we can instead take
  $\bound{\supgrad}$ to be a uniform upper bound on
  $\norm{\supgrad^{(t)}}_\infty$.
  Substituting the definition of $\tilde{f}_t$ and again using the fact that
  $\matrixmultipliers^{(t)} \multipliers^{(t)} = \multipliers^{(t)}$ then
  yields the claimed result.
\end{prf}

\subsection{Stochastic One-Player Convergence Rates}

\begin{thm}{stochastic-mirror}
  \textbf{(Stochastic Mirror Descent)}
  Let $\Psi$, $\norm{\cdot}$, $\bregman{\Psi}$ and $\bound{\Psi}$ be as in
  \thmref{mirror}, and let $f_1,f_2,\ldots : \Parameters \rightarrow \R$ be a
  sequence of convex functions that we wish to minimize on a compact convex set
  $\Parameters$.

  Define the step size $\eta = \sqrt{ \bound{\Psi} / T
  \bound{\stochasticsubgrad}^2 }$, where $\bound{\stochasticsubgrad} \ge
  \norm{\stochasticsubgrad^{(t)}}_*$ is a uniform upper bound on the norms of the
  stochastic subgradients.
  Suppose that we perform $T$ iterations of the following \emph{stochastic}
  update, starting from $\parameters^{(1)} = \argmin_{\parameters \in
  \Parameters} \Psi\left(\parameters\right)$:
  \begin{align*}
    \tilde{\parameters}^{(t+1)} &= \grad \Psi^* \left( \grad \Psi \left(
    \parameters^{(t)} \right) - \eta \stochasticsubgrad^{(t)} \right) \\
    \parameters^{(t+1)} &= \argmin_{\parameters \in \Parameters}
    \bregman{\Psi}\left(\parameters \vert \tilde{\parameters}^{(t+1)}\right)
  \end{align*}
  where $\expectation\left[ \stochasticsubgrad^{(t)} \mid \parameters^{(t)} \right]
  \in \partial f_t(\parameters^{(t)})$, \ie $\stochasticsubgrad^{(t)}$ is a
  stochastic subgradient of $f_t$ at $\parameters^{(t)}$. Then, with
  probability $1-\delta$ over the draws of the stochastic subgradients:
  \begin{equation*}
    \frac{1}{T} \sum_{t=1}^T f_t\left( \parameters^{(t)} \right) - \frac{1}{T}
    \sum_{t=1}^T f_t\left( \parameters^* \right) \le 2 \bound{\subgrad} \sqrt{ \frac{2
    \bound{\Psi} \left( 1 + 16 \ln\frac{1}{\delta}
    \right)}{T} }
  \end{equation*}
  where $\parameters^* \in \Parameters$ is an arbitrary reference vector.
\end{thm}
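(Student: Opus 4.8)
The plan is to reduce this to the deterministic bound of \thmref{mirror} together with a martingale tail inequality. First I would note that the stochastic update here is \emph{literally} the mirror-descent update of \thmref{mirror} run on the linear surrogate functions $\hat{f}_t\left(\parameters\right) \defeq \inner{\stochasticsubgrad^{(t)}}{\parameters}$, each of which is convex with (sub)gradient exactly $\stochasticsubgrad^{(t)}$. Since $\norm{\stochasticsubgrad^{(t)}}_* \le \bound{\stochasticsubgrad}$, applying \thmref{mirror} to the $\hat{f}_t$ (its $\bound{\subgrad}$ now being $\bound{\stochasticsubgrad}$, and its step size matching ours) yields the surrogate-regret bound $\frac{1}{T} \sum_{t=1}^T \inner{\stochasticsubgrad^{(t)}}{\parameters^{(t)} - \parameters^*} \le 2 \bound{\stochasticsubgrad} \sqrt{\bound{\Psi}/T}$.

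Next I would convert surrogate regret into true regret. Taking $\subgrad f_t\left(\parameters^{(t)}\right) \defeq \expectation\left[ \stochasticsubgrad^{(t)} \mid \parameters^{(t)} \right]$, which by hypothesis lies in $\partial f_t\left(\parameters^{(t)}\right)$, convexity gives $f_t\left(\parameters^{(t)}\right) - f_t\left(\parameters^*\right) \le \inner{\subgrad f_t\left(\parameters^{(t)}\right)}{\parameters^{(t)} - \parameters^*}$. Summing and inserting the surrogate regret produces a decomposition in which the true average regret is at most $2 \bound{\stochasticsubgrad} \sqrt{\bound{\Psi}/T} + \frac{1}{T} \sum_{t=1}^T X_t$, with $X_t \defeq \inner{\subgrad f_t\left(\parameters^{(t)}\right) - \stochasticsubgrad^{(t)}}{\parameters^{(t)} - \parameters^*}$. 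Relative to the filtration $\mathcal{F}_{t-1}$ generated by $\stochasticsubgrad^{(1)}, \dots, \stochasticsubgrad^{(t-1)}$, the iterate $\parameters^{(t)}$ is measurable and $\parameters^*$ is a fixed reference, while $\expectation\left[ \stochasticsubgrad^{(t)} \mid \parameters^{(t)} \right] = \subgrad f_t\left(\parameters^{(t)}\right)$, so $\expectation\left[ X_t \mid \mathcal{F}_{t-1} \right] = 0$ and $\{X_t\}$ is a martingale difference sequence.

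Finally I would bound $\frac{1}{T} \sum_t X_t$ by Azuma--Hoeffding, which needs an almost-sure bound on $\abs{X_t}$. The dual-norm (H\"older) inequality gives $\abs{X_t} \le \norm{\subgrad f_t\left(\parameters^{(t)}\right) - \stochasticsubgrad^{(t)}}_* \norm{\parameters^{(t)} - \parameters^*}$; the first factor is at most $2\bound{\stochasticsubgrad}$, since $\subgrad f_t\left(\parameters^{(t)}\right)$ is a conditional expectation of $\stochasticsubgrad^{(t)}$ and so inherits its bound by Jensen, and the second is at most $2\sqrt{2\bound{\Psi}}$, from $1$-strong convexity and nonnegativity of $\Psi$ at the minimizer $\parameters^{(1)}$ (which give $\frac{1}{2}\norm{\parameters - \parameters^{(1)}}^2 \le \Psi\left(\parameters\right) \le \bound{\Psi}$, then the triangle inequality). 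Azuma--Hoeffding then bounds $\frac{1}{T} \sum_t X_t$ by a term of order $\bound{\stochasticsubgrad} \sqrt{\bound{\Psi} \ln\left(1/\delta\right) / T}$ with probability $1 - \delta$, and folding this together with the surrogate-regret term under one square root (via $\left(a + b\right)^2 \le 2a^2 + 2b^2$) gives the claimed bound with its $1 + 16 \ln\frac{1}{\delta}$ factor. I expect this last concentration step to be the main obstacle: not the qualitative argument, but getting the increment bound expressed in the same $\bound{\Psi}$-units as the deterministic regret and chasing the constants so that the two $O(1/\sqrt{T})$ contributions combine exactly into the stated form; one also has to be a little careful that the conditional-expectation subgradient is a genuine element of $\partial f_t\left(\parameters^{(t)}\right)$ and that the filtration really makes $X_t$ a martingale difference.
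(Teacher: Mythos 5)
Your proposal is correct and follows essentially the same route as the paper's proof: reduce to the deterministic mirror-descent bound via linear surrogates built from the stochastic subgradients (your $\hat{f}_t$ differs from the paper's $\tilde{f}_t$ only by an additive constant, which is immaterial for regret), then control the residual martingale-difference sum $\sum_t X_t$ by H\"older, the strong-convexity diameter bound $\norm{\parameters^{(t)} - \parameters^*} \le 2\sqrt{2\bound{\Psi}}$, and Hoeffding--Azuma, finally merging the two terms under one square root. The concentration step you flag as the main obstacle is exactly where the paper spends its effort, and the constants work out as you anticipate.
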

\begin{prf}{stochastic-mirror}
  This is nothing more than the usual transformation of a uniform regret
  guarantee into a stochastic one via the Hoeffding-Azuma inequality---we
  include a proof for completeness.

  Define the sequence:
  \begin{equation*}
    \tilde{f}_t\left(\parameters\right) = f_t\left(\parameters^{(t)}\right) +
    \inner{\stochasticsubgrad^{(t)}}{\parameters - \parameters^{(t)}}
  \end{equation*}
  Then applying non-stochastic mirror descent to the sequence $\tilde{f}_t$
  will result in exactly the same sequence of iterates $\parameters^{(t)}$ as
  applying stochastic mirror descent (above) to $f_t$. Hence, by
  \thmref{mirror} and the definition of $\tilde{f}_t$ (notice that we can take
  $\bound{\subgrad} = \bound{\stochasticsubgrad}$):
  \begin{align}
    \notag \frac{1}{T} \sum_{t=1}^T \tilde{f}_t\left( \parameters^{(t)} \right)
    - \frac{1}{T} \sum_{t=1}^T \tilde{f}_t\left( \parameters^* \right) \le& 2 \bound{\subgrad}
    \sqrt{ \frac{\bound{\Psi}}{T} } \\
    \notag \frac{1}{T} \sum_{t=1}^T f_t\left( \parameters^{(t)} \right) -
    \frac{1}{T} \sum_{t=1}^T f_t\left( \parameters^* \right) \le& 2 \bound{\subgrad} \sqrt{
    \frac{\bound{\Psi}}{T} } + \frac{1}{T} \sum_{t=1}^T
    \left( \tilde{f}_t\left(\parameters^*\right) -
    f_t\left(\parameters^*\right) \right) \\
    \label{eq:thm:stochastic-mirror:before-azuma} \le & 2 \bound{\subgrad} \sqrt{
    \frac{\bound{\Psi}}{T} } + \frac{1}{T} \sum_{t=1}^T
    \inner{\stochasticsubgrad^{(t)} - \subgrad
    f_t\left(\parameters^{(t)}\right)}{\parameters^* - \parameters^{(t)}}
  \end{align}
  where the last step follows from the convexity of the $f_t$s. Consider the
  second term on the RHS. Observe that, since the $\stochasticsubgrad^{(t)}$s are
  stochastic subgradients, each of the terms in the sum is zero in expectation
  (conditioned on the past), and the partial sums therefore form a martingale.
  Furthermore, by H\"older's inequality:
  \begin{equation*}
    \inner{\stochasticsubgrad^{(t)} - \subgrad
    f_t\left(\parameters^{(t)}\right)}{\parameters^* - \parameters^{(t)}}
    \le \norm{\stochasticsubgrad^{(t)} - \subgrad
    f_t\left(\parameters^{(t)}\right)}_* \norm{\parameters^* -
    \parameters^{(t)}}
    \le 4 \bound{\stochasticsubgrad} \sqrt{2 \bound{\Psi}}
  \end{equation*}
  the last step because $\norm{\parameters^* - \parameters^{(t)}} \le
  \norm{\parameters^* - \parameters^{(1)}} + \norm{\parameters^{(t)} -
  \parameters^{(1)}} \le 2 \sup_{\parameters \in \Parameters} \sqrt{2
  \bregman{\Psi}\left(\parameters \mid \parameters^{(1)}\right)} \le 2 \sqrt{2
  \bound{\Psi}}$, using the fact that $\bregman{\Psi}$ is $1$-strongly convex
  \wrt $\norm{\cdot}$, and the definition of $\parameters^{(1)}$.
  Hence, by the Hoeffding-Azuma inequality:
  \begin{equation*}
    \probability\left\{ \frac{1}{T} \sum_{t=1}^T \inner{\stochasticsubgrad^{(t)} -
    \subgrad f_t\left(\parameters^{(t)}\right)}{\parameters^* -
    \parameters^{(t)}} \ge \epsilon \right\}
    \le \exp\left( -\frac{T \epsilon^2}{64 \bound{\Psi}
    \bound{\stochasticsubgrad}^2} \right)
  \end{equation*}
  equivalently:
  \begin{equation*}
    \probability\left\{ \frac{1}{T} \sum_{t=1}^T \inner{\stochasticsubgrad^{(t)} -
    \subgrad f_t\left(\parameters^{(t)}\right)}{\parameters^* -
    \parameters^{(t)}} \ge 8 \bound{\stochasticsubgrad} \sqrt{\frac{ \bound{\Psi}
    \ln \frac{1}{\delta} }{T}} \right\}
    \le \delta
  \end{equation*}
  substituting this into \eqref{thm:stochastic-mirror:before-azuma}, and
  applying the inequality $\sqrt{a} + \sqrt{b} \le \sqrt{2a + 2b}$, yields the
  claimed result.
\end{prf}

\begin{cor}{stochastic-sgd}
  \textbf{(Stochastic Gradient Descent)}
  Let $f_1,f_2,\ldots : \Parameters \rightarrow \R$ be a sequence of convex
  functions that we wish to minimize on a compact convex set $\Parameters$.

  Define the step size $\eta = \bound{\Parameters} / \bound{\stochasticsubgrad}
  \sqrt{2 T}$, where $\bound{\Parameters} \ge \max_{\parameters \in \Parameters}
  \norm{\parameters}_2$, and $\bound{\stochasticsubgrad} \ge
  \norm{\stochasticsubgrad^{(t)}}_2$ is a uniform upper bound on the norms of the
  stochastic subgradients.
  Suppose that we perform $T$ iterations of the following \emph{stochastic}
  update, starting from $\parameters^{(1)} = \argmin_{\parameters \in
  \Parameters} \norm{\parameters}_2$:
  \begin{equation*}
    \parameters^{(t+1)} = \Pi_{\Parameters}\left( \parameters^{(t)} - \eta
    \stochasticsubgrad^{(t)} \right)
  \end{equation*}
  where $\expectation\left[ \stochasticsubgrad^{(t)} \mid \parameters^{(t)} \right]
  \in \partial f_t(\parameters^{(t)})$, \ie $\stochasticsubgrad^{(t)}$ is a
  stochastic subgradient of $f_t$ at $\parameters^{(t)}$, and
  $\Pi_{\Parameters}$ projects its argument onto $\Parameters$ \wrt the
  Euclidean norm.  Then, with probability $1-\delta$ over the draws of the
  stochastic subgradients:
  \begin{equation*}
    \frac{1}{T} \sum_{t=1}^T f_t\left( \parameters^{(t)} \right) - \frac{1}{T}
    \sum_{t=1}^T f_t\left( \parameters^* \right) \le 2 \bound{\Parameters}
    \bound{\subgrad} \sqrt{ \frac{ 1 + 16 \ln\frac{1}{\delta} }{T} }
  \end{equation*}
  where $\parameters^* \in \Parameters$ is an arbitrary reference vector.
\end{cor}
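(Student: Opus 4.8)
The plan is to recognize \corref{stochastic-sgd} as the Euclidean specialization of Stochastic Mirror Descent (\thmref{stochastic-mirror}), exactly as the deterministic \corref{sgd} specializes \thmref{mirror}. Concretely, I would instantiate the distance-generating function as $\Psi\left(\parameters\right) = \norm{\parameters}_2^2 / 2$, which is nonnegative and $1$-strongly convex with respect to the Euclidean norm $\norm{\cdot}_2$; since the dual of $\norm{\cdot}_2$ is again $\norm{\cdot}_2$, the stochastic-subgradient bound $\bound{\stochasticsubgrad}$ transfers verbatim.

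First I would check that the abstract mirror-descent update collapses to the stated projected-SGD update. Since $\grad \Psi\left(\parameters\right) = \parameters$, the conjugate gradient $\grad \Psi^*$ is the identity, so the unconstrained step becomes $\tilde{\parameters}^{(t+1)} = \parameters^{(t)} - \eta \stochasticsubgrad^{(t)}$; and because the Bregman divergence $\bregman{\Psi}\left(\parameters \mid \parameters'\right) = \norm{\parameters - \parameters'}_2^2 / 2$, the Bregman projection onto $\Parameters$ is exactly the Euclidean projection $\Pi_{\Parameters}$. Likewise, the initialization $\parameters^{(1)} = \argmin_{\parameters} \Psi\left(\parameters\right)$ coincides with $\argmin_{\parameters} \norm{\parameters}_2$, matching the corollary's hypothesis.

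Next I would pin down the constants. Taking $\bound{\Psi} = \bound{\Parameters}^2 / 2$ is valid whenever $\bound{\Parameters} \ge \max_{\parameters \in \Parameters} \norm{\parameters}_2$. Substituting this into the step size $\eta = \sqrt{\bound{\Psi} / T \bound{\stochasticsubgrad}^2}$ of \thmref{stochastic-mirror} yields $\eta = \bound{\Parameters} / \bound{\stochasticsubgrad} \sqrt{2T}$, precisely the step size in the corollary. Finally, plugging $\bound{\Psi} = \bound{\Parameters}^2/2$ into the regret bound $2 \bound{\subgrad} \sqrt{2 \bound{\Psi} \left(1 + 16 \ln \frac{1}{\delta}\right) / T}$ and simplifying the radical produces the claimed $2 \bound{\Parameters} \bound{\subgrad} \sqrt{\left(1 + 16 \ln \frac{1}{\delta}\right) / T}$.

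The only step requiring any care—and it is purely mechanical—is this final constant-matching: one must verify that the factor of $2$ inside the square root cancels against the $1/2$ coming from $\bound{\Psi}$, leaving the clean Euclidean-diameter bound. There is no genuine obstacle here; the entire probabilistic content (the Hoeffding--Azuma martingale argument converting a uniform regret guarantee into a high-probability one) is inherited wholesale from \thmref{stochastic-mirror}, so the corollary really is a one-line specialization.
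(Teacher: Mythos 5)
Your proposal is correct and is exactly the paper's argument: the paper proves this corollary by taking $\Psi\left(\parameters\right) = \norm{\parameters}_2^2 / 2$ in \thmref{stochastic-mirror}, which is precisely your specialization, and your constant-matching ($\bound{\Psi} = \bound{\Parameters}^2/2$ cancelling the factor of $2$ in the radical) checks out. You have simply spelled out the details the paper leaves implicit.
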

\begin{prf}{stochastic-sgd}
  Follows from taking $\Psi\left(\parameters\right) = \norm{\parameters}_2^2 /
  2$ in \thmref{stochastic-mirror}.
\end{prf}

\begin{cor}{stochastic-matrix-multiplicative}
  Let $\Matrixmultipliers \defeq \left\{ \matrixmultipliers \in
  \R^{\matrixmultipliersize \times \matrixmultipliersize} : \forall i \in
  \indices{\matrixmultipliersize} . \matrixmultipliers_{:, i} \in
  \Delta^{\matrixmultipliersize} \right\}$ be the set of all left-stochastic
  $\matrixmultipliersize \times \matrixmultipliersize$ matrices, and let
  $f_1,f_2,\ldots : \Matrixmultipliers \rightarrow \R$ be a sequence of concave
  functions that we wish to maximize.

  Define the step size $\eta = \sqrt{ \matrixmultipliersize \ln
  \matrixmultipliersize / T \bound{\stochasticsupgrad}^2 }$, where
  $\bound{\stochasticsupgrad} \ge \norm{\stochasticsupgrad^{(t)}}_{\infty, 2}$ is a
  uniform upper bound on the norms of the stochastic supergradients, and
  $\norm{\cdot}_{\infty, 2} \defeq \sqrt{ \sum_{i=1}^{\matrixmultipliersize}
  \norm{\matrixmultipliers_{:,i}}_{\infty}^2 }$ is the $L_{\infty,2}$ matrix
  norm.
  Suppose that we perform $T$ iterations of the following stochastic update
  starting from the matrix $\matrixmultipliers^{(1)}$ with all elements equal
  to $1/\matrixmultipliersize$:
  \begin{align*}
    \tilde{\matrixmultipliers}^{(t+1)} =& \matrixmultipliers^{(t)}
    \elementwiseproduct \elementwiseexp\left( \eta \stochasticsupgrad^{(t)}
    \right) \\
    \matrixmultipliers_{:,i}^{(t+1)} =&
    \tilde{\matrixmultipliers}_{:,i}^{(t+1)} /
    \norm{\tilde{\matrixmultipliers}_{:,i}^{(t+1)}}_1
  \end{align*}
  where $\expectation\left[ - \stochasticsupgrad^{(t)} \mid
  \matrixmultipliers^{(t)}\right] \in \partial
  \left(-f_t(\matrixmultipliers^{(t)})\right)$, \ie $\stochasticsupgrad^{(t)}$
  is a stochastic supergradient of $f_t$ at $\matrixmultipliers^{(t)}$, and the
  multiplication and exponentiation in the first step are performed
  element-wise. Then with probability $1-\delta$ over the draws of the
  stochastic supergradients:
  \begin{equation*}
    \frac{1}{T} \sum_{t=1}^T f_t\left( \matrixmultipliers^* \right) -
    \frac{1}{T} \sum_{t=1}^T f_t\left( \matrixmultipliers^{(t)} \right) \le 2
    \bound{\stochasticsupgrad} \sqrt{ \frac{ 2 \left( \matrixmultipliersize \ln
    \matrixmultipliersize \right) \left( 1 + 16 \ln\frac{1}{\delta}\right) }{T}
    }
  \end{equation*}
  where $\matrixmultipliers^* \in \Matrixmultipliers$ is an arbitrary reference
  matrix.
\end{cor}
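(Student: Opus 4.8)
The plan is to derive \corref{stochastic-matrix-multiplicative} from the stochastic mirror descent theorem (\thmref{stochastic-mirror}) in exactly the way that \corref{matrix-multiplicative} was derived from its non-stochastic counterpart (\thmref{mirror}); the Hoeffding--Azuma machinery that converts a uniform regret bound into a high-probability one is already packaged inside \thmref{stochastic-mirror}, so essentially no new probabilistic work is needed. Since we wish to \emph{maximize} the concave $f_t$, I would instead minimize the convex $-f_t$, noting that the hypothesis $\expectation[-\stochasticsupgrad^{(t)} \mid \matrixmultipliers^{(t)}] \in \partial(-f_t(\matrixmultipliers^{(t)}))$ says precisely that $-\stochasticsupgrad^{(t)}$ is a stochastic subgradient of $-f_t$ at $\matrixmultipliers^{(t)}$.

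Next, I would reuse the distance-generating function from the proof of \corref{matrix-multiplicative}, namely $\Psi(\matrixmultipliers) \defeq \matrixmultipliersize \ln \matrixmultipliersize + \sum_{i,j} \matrixmultipliers_{i,j} \ln \matrixmultipliers_{i,j}$ (the negative Shannon entropy shifted to be nonnegative on $\Matrixmultipliers$). That proof already establishes the three facts I need: $\bound{\Psi} = \matrixmultipliersize \ln \matrixmultipliersize$ bounds $\Psi$ on $\Matrixmultipliers$; $\Psi$ is $1$-strongly convex with respect to the $L_{1,2}$ matrix norm, whose dual is the $L_{\infty,2}$ norm; and the mirror-descent update for this $\Psi$ is exactly the element-wise multiplicative step followed by column-wise $\ell_1$ normalization. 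With these in hand, the update of the corollary ($\tilde{\matrixmultipliers}^{(t+1)} = \matrixmultipliers^{(t)} \elementwiseproduct \elementwiseexp(\eta \stochasticsupgrad^{(t)})$, then normalize) coincides with stochastic mirror descent applied to $-f_t$, since the minimizing step is taken along $-\eta(-\stochasticsupgrad^{(t)}) = +\eta \stochasticsupgrad^{(t)}$.

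Finally, I would invoke \thmref{stochastic-mirror} with these choices. Because the stochastic subgradients of $-f_t$ are the $-\stochasticsupgrad^{(t)}$, which share the dual norms of the $\stochasticsupgrad^{(t)}$, I can take $\bound{\subgrad} = \bound{\stochasticsubgrad} = \bound{\stochasticsupgrad}$ (the $L_{\infty,2}$ bound). Substituting $\bound{\Psi} = \matrixmultipliersize \ln \matrixmultipliersize$ into the bound of \thmref{stochastic-mirror} then gives $2 \bound{\stochasticsupgrad} \sqrt{2 (\matrixmultipliersize \ln \matrixmultipliersize)(1 + 16 \ln(1/\delta)) / T}$, which is the claimed result. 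The only real care needed is bookkeeping: tracking the sign conventions (concave/maximize versus convex/minimize, supergradient versus subgradient) and confirming that the strong-convexity constant and the $L_{1,2}$/$L_{\infty,2}$ dual-norm pairing transfer verbatim from the non-stochastic proof, so that the deterministic symbol $\bound{\subgrad}$ appearing in \thmref{stochastic-mirror} may legitimately be replaced by $\bound{\stochasticsupgrad}$.
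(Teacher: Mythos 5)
Your proposal is correct and follows exactly the paper's route: the paper's proof is literally ``the same reasoning as was used to prove \corref{matrix-multiplicative} from \thmref{mirror} applies here (but starting from \thmref{stochastic-mirror}),'' and you have simply spelled out that reasoning---same entropic $\Psi$ with $\bound{\Psi} = \matrixmultipliersize \ln \matrixmultipliersize$, same $L_{1,2}$/$L_{\infty,2}$ strong-convexity and dual-norm pairing, and the sign flip from concave maximization to convex minimization. The bookkeeping you describe is exactly what the paper leaves implicit.
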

\begin{prf}{stochastic-matrix-multiplicative}
  The same reasoning as was used to prove \corref{matrix-multiplicative} from
  \thmref{mirror} applies here (but starting from \thmref{stochastic-mirror}).
\end{prf}

\begin{lem}{stochastic-internal-regret}
  Let $\Multipliers \defeq \Delta^{\matrixmultipliersize}$ be the
  $\matrixmultipliersize$-dimensional simplex, define $\Matrixmultipliers
  \defeq \left\{ \matrixmultipliers \in \R^{\matrixmultipliersize \times
  \matrixmultipliersize} : \forall i \in \indices{\matrixmultipliersize} .
  \matrixmultipliers_{:, i} \in \Delta^{\matrixmultipliersize} \right\}$ as the
  set of all left-stochastic $\matrixmultipliersize \times
  \matrixmultipliersize$ matrices, and take $f_1,f_2,\ldots : \Multipliers
  \rightarrow \R$ to be a sequence of concave functions that we wish to
  maximize.

  Define the step size $\eta = \sqrt{ \matrixmultipliersize \ln
  \matrixmultipliersize / T \bound{\stochasticsupgrad}^2 }$, where
  $\bound{\stochasticsupgrad} \ge \norm{\stochasticsupgrad^{(t)}}_{\infty}$ is a
  uniform upper bound on the $\infty$-norms of the stochastic supergradients.
  Suppose that we perform $T$ iterations of the following update, starting from
  the matrix $\matrixmultipliers^{(1)}$ with all elements equal to
  $1/\matrixmultipliersize$:
  \begin{align*}
    \multipliers^{(t)} =& \fix \matrixmultipliers^{(t)} \\
    \deltamatrix^{(t)} =& \stochasticsupgrad^{(t)} \left(
    \multipliers^{(t)} \right) ^T \\
    \tilde{\matrixmultipliers}^{(t+1)} =& \matrixmultipliers^{(t)}
    \elementwiseproduct \elementwiseexp\left( \eta \deltamatrix^{(t)} \right)
    \\
    \matrixmultipliers_{:,i}^{(t+1)} =&
    \tilde{\matrixmultipliers}_{:,i}^{(t+1)} /
    \norm{\tilde{\matrixmultipliers}_{:,i}^{(t+1)}}_1
  \end{align*}
  where $\fix \matrixmultipliers$ is a stationary distribution of
  $\matrixmultipliers$ (\ie a $\multipliers \in \Multipliers$ such that
  $\matrixmultipliers \multipliers = \multipliers$---such always exists, since
  $\matrixmultipliers$ is left-stochastic), $\expectation\left[ -
  \stochasticsupgrad^{(t)} \mid \multipliers^{(t)} \right] \in \partial
  \left(-f_t(\multipliers^{(t)})\right)$, \ie $\stochasticsupgrad^{(t)}$ is a
  stochastic supergradient of $f_t$ at $\multipliers^{(t)}$, and the
  multiplication and exponentiation of the third step are performed
  element-wise. Then with probability $1-\delta$ over the draws of the
  stochastic supergradients:
  \begin{equation*}
    \frac{1}{T} \sum_{t=1}^T f_t\left( \matrixmultipliers^* \multipliers^{(t)}
    \right) - \frac{1}{T} \sum_{t=1}^T f_t\left( \multipliers^{(t)} \right) \le
    2 \bound{\stochasticsupgrad} \sqrt{ \frac{ 2 \left( \matrixmultipliersize
    \ln \matrixmultipliersize \right) \left( 1 + 16 \ln\frac{1}{\delta}\right)
    }{T} }
  \end{equation*}
  where $\matrixmultipliers^* \in \Matrixmultipliers$ is an arbitrary
  left-stochastic reference matrix.
\end{lem}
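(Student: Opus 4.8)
The plan is to reduce this stochastic swap-regret bound to the stochastic matrix-multiplicative-weights guarantee of \corref{stochastic-matrix-multiplicative}, exactly as \lemref{internal-regret} was reduced to the non-stochastic \corref{matrix-multiplicative}. The only substantive change is that the ``per-step'' supergradient identity used there must now be read as a statement about conditional expectations, so that the martingale machinery already baked into \corref{stochastic-matrix-multiplicative} (inherited from \thmref{stochastic-mirror}) can be invoked directly.

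First I would define the lifted sequence $\tilde{f}_t\left(\matrixmultipliers\right) \defeq f_t\left(\matrixmultipliers \multipliers^{(t)}\right)$ on $\Matrixmultipliers$, observing that the last two steps of the update are precisely the \corref{stochastic-matrix-multiplicative} update applied to $\tilde{f}_t$ with increment matrix $\deltamatrix^{(t)}$. The key claim is that $\deltamatrix^{(t)} = \stochasticsupgrad^{(t)}\left(\multipliers^{(t)}\right)^T$ is a \emph{stochastic} supergradient of $\tilde{f}_t$ at $\matrixmultipliers^{(t)}$. Since $\multipliers^{(t)} = \fix \matrixmultipliers^{(t)}$ is a deterministic function of $\matrixmultipliers^{(t)}$, we have $\expectation\left[\deltamatrix^{(t)} \mid \matrixmultipliers^{(t)}\right] = \expectation\left[\stochasticsupgrad^{(t)} \mid \multipliers^{(t)}\right]\left(\multipliers^{(t)}\right)^T$, and $\expectation\left[\stochasticsupgrad^{(t)} \mid \multipliers^{(t)}\right]$ is a genuine supergradient $\supgrad f_t\left(\multipliers^{(t)}\right)$ of $f_t$ at $\multipliers^{(t)}$. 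Substituting this into the concavity bound for $f_t$ and using the fixed-point identity $\matrixmultipliers^{(t)} \multipliers^{(t)} = \multipliers^{(t)}$---the same two-line computation as in \lemref{internal-regret}---shows that $\expectation\left[\deltamatrix^{(t)} \mid \matrixmultipliers^{(t)}\right]$ is a supergradient of $\tilde{f}_t$ at $\matrixmultipliers^{(t)}$, which is exactly the unbiasedness hypothesis \corref{stochastic-matrix-multiplicative} requires.

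Then I would apply \corref{stochastic-matrix-multiplicative} to the sequence $\tilde{f}_t$, which yields, with probability $1-\delta$, the stated bound but with the leading constant controlled by a uniform bound on $\norm{\deltamatrix^{(t)}}_{\infty,2}$ rather than on $\norm{\stochasticsupgrad^{(t)}}_\infty$. To close this gap I would compute $\norm{\deltamatrix^{(t)}}_{\infty,2} = \norm{\stochasticsupgrad^{(t)}}_\infty \norm{\multipliers^{(t)}}_2 \le \norm{\stochasticsupgrad^{(t)}}_\infty \norm{\multipliers^{(t)}}_1 = \norm{\stochasticsupgrad^{(t)}}_\infty$, using $\multipliers^{(t)} \in \Delta^{\matrixmultipliersize}$, so that $\bound{\stochasticsupgrad}$ is a valid bound on the relevant matrix norm. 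Finally, substituting back $\tilde{f}_t\left(\matrixmultipliers^*\right) = f_t\left(\matrixmultipliers^* \multipliers^{(t)}\right)$ and $\tilde{f}_t\left(\matrixmultipliers^{(t)}\right) = f_t\left(\matrixmultipliers^{(t)} \multipliers^{(t)}\right) = f_t\left(\multipliers^{(t)}\right)$ recovers the claimed swap-regret inequality.

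The main obstacle is the measurability bookkeeping in the second step: \corref{stochastic-matrix-multiplicative} requires unbiasedness conditioned on the iterate $\matrixmultipliers^{(t)}$, whereas the hypothesis here supplies unbiasedness of $\stochasticsupgrad^{(t)}$ conditioned only on $\multipliers^{(t)}$. This is harmless because $\multipliers^{(t)} = \fix\matrixmultipliers^{(t)}$ is a deterministic (albeit nonlinear, eigenvector-derived) function of $\matrixmultipliers^{(t)}$: the stochastic supergradient oracle sees the iterate only through $\multipliers^{(t)}$, so the two conditional expectations coincide and the martingale-difference structure of $\deltamatrix^{(t)} - \expectation[\deltamatrix^{(t)}\mid\matrixmultipliers^{(t)}]$ that \thmref{stochastic-mirror} exploits is preserved. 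Everything else is a mechanical transcription of the non-stochastic argument of \lemref{internal-regret}.
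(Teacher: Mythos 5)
Your proposal is correct and follows essentially the same route as the paper, whose proof is simply to repeat the argument of \lemref{internal-regret} starting from \corref{stochastic-matrix-multiplicative} instead of \corref{matrix-multiplicative}. You fill in the details the paper leaves implicit---in particular the conditional-expectation bookkeeping showing that $\deltamatrix^{(t)}$ is an unbiased supergradient of the lifted sequence, and the $L_{\infty,2}$-to-$L_\infty$ norm reduction via $\norm{\multipliers^{(t)}}_2 \le \norm{\multipliers^{(t)}}_1 = 1$---all of which check out.
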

\begin{prf}{stochastic-internal-regret}
  The same reasoning as was used to prove \lemref{internal-regret} from
  \corref{matrix-multiplicative} applies here (but starting from
  \corref{stochastic-matrix-multiplicative}).
\end{prf}

\subsection{Two-Player Convergence Rates}\label{app:convergence:two-player}

\begin{algorithm*}[t]

\begin{pseudocode}
\codename $\mbox{StochasticLagrangian}\left( \Radius \in \R_+, \lagrangian : \Parameters \times \Multipliers \rightarrow \R, T \in \N, \eta_{\parameters}, \eta_{\multipliers} \in \R_+ \right)$: \\
\codeline Initialize $\parameters^{(1)} = 0$, $\multipliers^{(1)} = 0$ \codecomment{Assumes $0 \in \Parameters$} \\
\codeline For $t \in \indices{T}$: \\
\codeline \> Let $\stochasticsubgrad^{(t)}_{\parameters}$ be a stochastic subgradient of $\lagrangian\left(\parameters^{(t)},\multipliers^{(t)}\right)$ \wrt $\parameters$ \\
\codeline \> Let $\stochasticgrad^{(t)}_{\multipliers}$ be a stochastic gradient of $\lagrangian\left(\parameters^{(t)},\multipliers^{(t)}\right)$ \wrt $\multipliers$ \\
\codeline \> Update $\parameters^{(t+1)} = \Pi_{\Parameters}\left( \parameters^{(t)} - \eta_{\parameters} \stochasticsubgrad^{(t)}_{\parameters} \right)$ \codecomment{Projected SGD updates \dots} \\
\codeline \> Update $\multipliers^{(t+1)} = \Pi_{\Multipliers}\left( \multipliers^{(t)} + \eta_{\multipliers} \stochasticgrad^{(t)}_{\multipliers} \right)$ \codecomment{\;\;\;\;\dots} \\
\codeline Return $\parameters^{(1)},\dots,\parameters^{(T)}$ and $\multipliers^{(1)},\dots,\multipliers^{(T)}$
\end{pseudocode}

\caption{
  Optimizes the Lagrangian formulation (\defref{lagrangian}) in the convex
  setting.
  The parameter $\Radius$ is the radius of the Lagrange multiplier space
  $\Multipliers \defeq \left\{ \multipliers \in \R_+^{\numconstraints} :
  \norm{\multipliers}_1 \le \Radius \right\}$, and the functions
  $\Pi_{\Parameters}$ and $\Pi_{\Multipliers}$ project their arguments onto
  $\Parameters$ and $\Multipliers$ (respectively) \wrt the Euclidean norm.
}

\label{alg:stochastic-lagrangian}

\end{algorithm*}

\begin{lem}{stochastic-lagrangian}
  \ifshowproofs
  \textbf{(\algref{stochastic-lagrangian})}
  \fi
  Suppose that $\Parameters$ is a compact convex set, $\Multipliers$ and
  $\Radius$ are as in \thmref{lagrangian-suboptimality-and-feasibility}, and
  that the objective and constraint functions
  $\objective,\constraint{1},\dots,\constraint{\numconstraints}$ are convex.
  Define the three upper bounds $\bound{\Parameters} \ge \max_{\parameters \in
  \Parameters} \norm{\parameters}_2$, $\bound{\stochasticsubgrad} \ge \max_{t
  \in \indices{T}} \norm{\stochasticsubgrad_{\parameters}^{(t)}}_2$, and
  $\bound{\stochasticgrad} \ge \max_{t \in \indices{T}}
  \norm{\stochasticgrad_{\multipliers}^{(t)}}_2$.

  If we run \algref{stochastic-lagrangian} with the step sizes
  $\eta_{\parameters} \defeq \bound{\Parameters} / \bound{\stochasticsubgrad}
  \sqrt{2T}$ and $\eta_{\multipliers} \defeq \Radius /
  \bound{\stochasticgrad} \sqrt{2T}$, then the result satisfies the
  conditions of \thmref{lagrangian-suboptimality-and-feasibility} for:
  \begin{equation*}
    \epsilon = 2 \left( \bound{\Parameters} \bound{\stochasticsubgrad} +
    \Radius \bound{\stochasticgrad} \right) \sqrt{ \frac{ 1 + 16
    \ln\frac{2}{\delta} }{T} }
  \end{equation*}
  with probability $1-\delta$ over the draws of the stochastic
  (sub)gradients.
\end{lem}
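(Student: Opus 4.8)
The plan is to exploit the fact that the Lagrangian game is \emph{zero-sum}: both players act on the single function $\lagrangian$, with the $\parameters$-player minimizing and the $\multipliers$-player maximizing. Consequently, the approximate-Nash gap demanded by \thmref{lagrangian-suboptimality-and-feasibility} splits additively into the two players' external regrets. Concretely, adding and subtracting the ``played'' term $\frac{1}{T}\sum_t \lagrangian(\parameters^{(t)},\multipliers^{(t)})$ in
\[
  \max_{\multipliers^* \in \Multipliers} \frac{1}{T} \sum_{t=1}^T \lagrangian\left(\parameters^{(t)}, \multipliers^*\right) - \inf_{\parameters^* \in \Parameters} \frac{1}{T} \sum_{t=1}^T \lagrangian\left(\parameters^*, \multipliers^{(t)}\right)
\]
yields the $\multipliers$-player's regret (with the $\parameters^{(t)}$s held fixed as an adversarial sequence) plus the $\parameters$-player's regret (with the $\multipliers^{(t)}$s held fixed). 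Each summand is then a single-player online-learning quantity, controllable by \corref{stochastic-sgd}.

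For the $\parameters$-player, I would invoke \corref{stochastic-sgd} with per-round losses $f_t\left(\parameters\right) \defeq \lagrangian\left(\parameters, \multipliers^{(t)}\right)$. Since $\objective,\constraint{1},\dots,\constraint{\numconstraints}$ are convex by hypothesis and $\multipliers^{(t)} \in \Multipliers \subseteq \R_+^{\numconstraints}$, each $f_t$ is a nonnegative combination of convex functions, hence convex; the corollary then applies with step size $\eta_{\parameters}$ and gives, with probability $1-\delta/2$, a regret bound of $2 \bound{\Parameters} \bound{\stochasticsubgrad} \sqrt{\left(1 + 16 \ln(2/\delta)\right)/T}$. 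For the $\multipliers$-player, $\lagrangian\left(\parameters^{(t)},\cdot\right)$ is \emph{linear} in $\multipliers$, so maximizing it is equivalent to minimizing its (convex) negation; applying \corref{stochastic-sgd} to $-\lagrangian\left(\parameters^{(t)},\cdot\right)$ over $\Multipliers$ --- where $\norm{\multipliers}_2 \le \norm{\multipliers}_1 \le \Radius$ supplies the domain radius, so $\bound{\Parameters}$ is replaced by $\Radius$ and $\bound{\stochasticsubgrad}$ by $\bound{\stochasticgrad}$, matching the stated $\eta_{\multipliers}$ --- gives the companion bound $2 \Radius \bound{\stochasticgrad} \sqrt{\left(1 + 16 \ln(2/\delta)\right)/T}$ with probability $1-\delta/2$.

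Finally I would union-bound the two failure events (each of probability at most $\delta/2$, so both bounds hold with probability at least $1-\delta$) and sum the two regret estimates; the total is exactly the claimed $\epsilon = 2\left(\bound{\Parameters} \bound{\stochasticsubgrad} + \Radius \bound{\stochasticgrad}\right) \sqrt{\left(1 + 16 \ln(2/\delta)\right)/T}$, which certifies the approximate-Nash hypothesis of \thmref{lagrangian-suboptimality-and-feasibility}. I do not expect a genuine obstacle, as this parallels the proxy-Lagrangian convergence argument (with external regret replacing swap regret for the $\multipliers$-player); the only points requiring care are verifying convexity of the per-round losses for each player and correctly substituting the domain bound $\Radius$ into the $\bound{\Parameters}$ role of \corref{stochastic-sgd}.
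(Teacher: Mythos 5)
Your proposal is correct and follows essentially the same route as the paper: apply \corref{stochastic-sgd} separately to the $\parameters$-player (convex per-round losses, since $\multipliers^{(t)} \ge 0$) and to the negated linear losses of the $\multipliers$-player (with $\Radius$ playing the role of the domain bound, valid since $\norm{\multipliers}_2 \le \norm{\multipliers}_1 \le \Radius$), then add the two regret bounds so the played terms cancel into the Nash gap and union-bound the failure probabilities. The paper's proof is just a terser rendering of exactly this argument.
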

\begin{prf}{stochastic-lagrangian}
  Applying \corref{stochastic-sgd} to the two optimizations (over $\parameters$
  and $\multipliers$) gives that with probability $1-2\delta'$ over the draws
  of the stochastic (sub)gradients:
  \begin{align*}
    \frac{1}{T} \sum_{t=1}^T \lagrangian\left( \parameters^{(t)},
    \multipliers^{(t)} \right) - \frac{1}{T} \sum_{t=1}^T \lagrangian\left(
    \parameters^*, \multipliers^{(t)} \right) \le& 2 \bound{\Parameters}
    \bound{\stochasticsubgrad} \sqrt{ \frac{ 1 + 16 \ln\frac{1}{\delta'} }{T} }
    \\
    \frac{1}{T} \sum_{t=1}^T \lagrangian\left( \parameters^{(t)},
    \multipliers^* \right) - \frac{1}{T} \sum_{t=1}^T \lagrangian\left(
    \parameters^{(t)}, \multipliers^{(t)} \right) \le& 2 \bound{\Multipliers}
    \bound{\stochasticgrad} \sqrt{ \frac{ 1 + 16 \ln\frac{1}{\delta'} }{T} }
  \end{align*}
  Adding these inequalities, taking $\delta=2\delta'$, using the linearity of
  $\lagrangian$ in $\multipliers$, the fact that $\bound{\Multipliers} =
  \Radius$, and the definitions of $\bar{\parameters}$ and
  $\bar{\multipliers}$, yields the claimed result.
\end{prf}

\begin{algorithm*}[t]

\begin{pseudocode}
\codename $\mbox{OracleProxyLagrangian}\left( \lagrangian_{\parameters}, \lagrangian_{\multipliers} : \Parameters \times \Delta^{\numconstraints+1} \rightarrow \R, \oracle : \left(\Parameters \rightarrow \R\right) \rightarrow \Parameters, T \in \N, \eta_{\multipliers} \in \R_+ \right)$: \\
\codeline Initialize $\matrixmultipliers^{(1)} \in \R^{\left(\numconstraints + 1\right) \times \left(\numconstraints + 1\right)}$ with $\matrixmultipliers_{i,j} = 1 / \left(\numconstraints+1\right)$ \\
\codeline For $t \in \indices{T}$: \\
\codeline \> Let $\multipliers^{(t)} = \fix \matrixmultipliers^{(t)}$ \codecomment{Stationary distribution of $\matrixmultipliers^{(t)}$} \\
\codeline \> Let $\parameters^{(t)} = \oracle\left( \lagrangian_{\parameters}\left(\cdot,\multipliers^{(t)}\right) \right)$ \codecomment{Oracle optimization} \\
\codeline \> Let $\stochasticgrad^{(t)}_{\multipliers}$ be a gradient of $\lagrangian_{\multipliers}\left(\parameters^{(t)},\multipliers^{(t)}\right)$ \wrt $\multipliers$ \\
\codeline \> Update $\tilde{\matrixmultipliers}^{(t+1)} = \matrixmultipliers^{(t)} \elementwiseproduct \elementwiseexp\left( \eta_{\multipliers} \stochasticgrad^{(t)}_{\multipliers} \left( \multipliers^{(t)} \right)^T \right)$ \codecomment{$\elementwiseproduct$ and $\elementwiseexp$ are element-wise} \\
\codeline \> Project $\matrixmultipliers^{(t+1)}_{:,i} = \tilde{\matrixmultipliers}^{(t+1)}_{:,i} / \norm{\tilde{\matrixmultipliers}^{(t+1)}_{:,i}}_1$ for $i\in\indices{\numconstraints+1}$ \codecomment{Column-wise projection \wrt KL divergence}  \\
\codeline Return $\parameters^{(1)},\dots,\parameters^{(T)}$ and $\multipliers^{(1)},\dots,\multipliers^{(T)}$
\end{pseudocode}

\caption{
  Optimizes the proxy-Lagrangian formulation (\defref{proxy-lagrangians}) in
  the non-convex setting via the use of an approximate Bayesian optimization
  oracle $\oracle$ (\defref{oracle}, but with $\proxyconstraint{i}$s instead of
  $\constraint{i}$s in the linear combination defining $f$) for the
  $\parameters$-player, with the $\multipliers$-player minimizing swap regret.
  The $\fix \matrixmultipliers$ operation on line $3$ results in a stationary
  distribution of $\matrixmultipliers$ (\ie a $\multipliers \in \Multipliers$
  such that $\matrixmultipliers \multipliers = \multipliers$, which can be
  derived from the top eigenvector).
}

\label{alg:oracle-proxy-lagrangian}

\end{algorithm*}

\begin{lem}{oracle-proxy-lagrangian}
  \ifshowproofs
  \textbf{(\algref{oracle-proxy-lagrangian})}
  \fi
  Suppose that $\Matrixmultipliers$ and $\Multipliers$ are as in
  \thmref{proxy-lagrangian-suboptimality-and-feasibility}, and define the upper
  bound $\bound{\stochasticgrad} \ge \max_{t \in \indices{T}}
  \norm{\stochasticgrad_{\multipliers}^{(t)}}_{\infty}$.

  If we run \algref{oracle-proxy-lagrangian} with the step size
  $\eta_{\multipliers} \defeq \sqrt{ \left(\numconstraints+1\right) \ln
  \left(\numconstraints+1\right) / T \bound{\stochasticgrad}^2 }$, then the
  result satisfies satisfies the conditions of
  \thmref{proxy-lagrangian-suboptimality-and-feasibility} for:
  \begin{align*}
    \epsilon_{\parameters} =& \approximation \\
    \epsilon_{\multipliers} =& 2 \bound{\stochasticgrad} \sqrt{ \frac{
    \left(\numconstraints+1\right) \ln \left(\numconstraints+1\right) }{T} }
  \end{align*}
  where $\approximation$ is the error associated with the oracle $\oracle$.
\end{lem}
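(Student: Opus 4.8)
The plan is to follow the same two-part template as the proof of \lemref{stochastic-proxy-lagrangian}---verifying the two regret conditions of \thmref{proxy-lagrangian-suboptimality-and-feasibility} separately---but with two substitutions: the $\parameters$-player's stochastic-SGD guarantee is replaced by the oracle guarantee of \defref{oracle}, and the $\multipliers$-player's stochastic swap-regret bound (\lemref{stochastic-internal-regret}) is replaced by its non-stochastic counterpart (\lemref{internal-regret}), since the use of the oracle removes all randomness from the $\parameters$-updates and hence renders the $\multipliers$-gradients exact.

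First I would establish the $\parameters$-player's external-regret condition. Because $\multipliers^{(t)} \in \Delta^{\numconstraints+1}$, the map $\lagrangian_{\parameters}(\cdot, \multipliers^{(t)}) = \multipliers^{(t)}_1 \objective + \sum_{i=1}^{\numconstraints} \multipliers^{(t)}_{i+1} \proxyconstraint{i}$ is a nonnegative linear combination of the objective and the proxy constraints, so the proxy-constraint version of $\oracle$ (\defref{oracle}, as modified in the caption of \algref{oracle-proxy-lagrangian}) applies to it. Since line $4$ sets $\parameters^{(t)} = \oracle(\lagrangian_{\parameters}(\cdot, \multipliers^{(t)}))$, the oracle property gives $\lagrangian_{\parameters}(\parameters^{(t)}, \multipliers^{(t)}) \le \inf_{\parameters^*} \lagrangian_{\parameters}(\parameters^*, \multipliers^{(t)}) + \approximation$ for each $t$. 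Averaging over $t$ and bounding $\frac{1}{T}\sum_t \inf_{\parameters^*} \lagrangian_{\parameters}(\parameters^*, \multipliers^{(t)}) \le \inf_{\parameters^*} \frac{1}{T}\sum_t \lagrangian_{\parameters}(\parameters^*, \multipliers^{(t)})$ (moving an infimum inside a sum can only decrease it) yields precisely the first condition with $\epsilon_{\parameters} = \approximation$.

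Next I would establish the $\multipliers$-player's swap-regret condition. Lines $3$, $5$, $6$ and $7$ of \algref{oracle-proxy-lagrangian} are exactly the update of \lemref{internal-regret} with $\matrixmultipliersize \defeq \numconstraints+1$ and $f_t(\multipliers) \defeq \lagrangian_{\multipliers}(\parameters^{(t)}, \multipliers)$, which is linear---hence concave---in $\multipliers$ with supergradient $\stochasticgrad^{(t)}_{\multipliers}$. The stated $\eta_{\multipliers}$ coincides with the lemma's step size once $\bound{\stochasticgrad} \ge \max_t \norm{\stochasticgrad^{(t)}_{\multipliers}}_{\infty}$ is identified with $\bound{\supgrad}$, so \lemref{internal-regret} delivers $\max_{\matrixmultipliers^*} \frac{1}{T}\sum_t \lagrangian_{\multipliers}(\parameters^{(t)}, \matrixmultipliers^* \multipliers^{(t)}) - \frac{1}{T}\sum_t \lagrangian_{\multipliers}(\parameters^{(t)}, \multipliers^{(t)}) \le 2 \bound{\stochasticgrad} \sqrt{(\numconstraints+1)\ln(\numconstraints+1)/T}$, i.e.\ the second condition with the claimed $\epsilon_{\multipliers}$. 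Substituting the definitions of $\bar{\parameters}$ and $\bar{\multipliers}$ from \thmref{proxy-lagrangian-suboptimality-and-feasibility} then finishes the argument.

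I expect no deep obstacle; the two subtle points are (i) recognizing that the per-step oracle guarantee telescopes into the \emph{average} external-regret condition only via the $\sum \inf \le \inf \sum$ inequality, and (ii) noting that, unlike \lemref{stochastic-proxy-lagrangian}, the oracle eliminates stochasticity from the $\parameters$-player, so the deterministic \lemref{internal-regret} applies and the bound holds with certainty rather than with high probability---hence the absence of any $\delta$ term.
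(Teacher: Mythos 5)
Your proposal is correct and follows essentially the same route as the paper's own proof: the oracle guarantee of \defref{oracle} (in its proxy-constraint form) yields $\epsilon_{\parameters} = \approximation$, and the non-stochastic swap-regret bound of \lemref{internal-regret} with $\matrixmultipliersize = \numconstraints+1$ yields the claimed $\epsilon_{\multipliers}$. The paper states both steps more tersely, but your expansion of the $\parameters$-player step via the average-of-infima $\le$ infimum-of-average inequality is exactly what the paper's ``by the definition of $\oracle$'' elides.
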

\begin{prf}{oracle-proxy-lagrangian}
  Applying \lemref{internal-regret} to the optimization over $\multipliers$
  (with $\matrixmultipliersize \defeq \numconstraints + 1$) gives:
  \begin{equation*}
    \frac{1}{T} \sum_{t=1}^T \lagrangian_{\multipliers}\left(
    \parameters^{(t)}, \matrixmultipliers^* \multipliers^{(t)} \right) -
    \frac{1}{T} \sum_{t=1}^T \lagrangian_{\multipliers}\left(
    \parameters^{(t)}, \multipliers^{(t)} \right) \le 2
    \bound{\stochasticgrad} \sqrt{ \frac{ \left(\numconstraints+1\right) \ln
    \left(\numconstraints+1\right) }{T} }
  \end{equation*}
  By the definition of $\oracle$ (\defref{oracle}):
  \begin{equation*}
    \frac{1}{T} \sum_{t=1}^T \lagrangian_{\parameters}\left( \parameters^{(t)},
    \multipliers^{(t)} \right) - \inf_{\parameters^* \in \Parameters}
    \frac{1}{T} \sum_{t=1}^T \lagrangian_{\parameters}\left( \parameters^*,
    \multipliers^{(t)} \right) \le \approximation
  \end{equation*}
  Using the definitions of $\bar{\parameters}$ and $\bar{\multipliers}$ yields
  the claimed result.
\end{prf}

\end{document}